\newtheorem{theorem}{Theorem}
\newtheorem{proposition}[theorem]{Proposition}
\newtheorem{lemma}[theorem]{Lemma}
\newtheorem{definition}[theorem]{Definition}
\newtheorem{assumption}[theorem]{Assumption}
\title{Decentralized Online Convex Optimization with Unknown Feedback Delays}
\author{%
  Hao Qiu \\
  Università degli Studi di Milano\\
  \texttt{hao.qiu@unimi.com} \\
  \And
  Mengxiao Zhang \\
  University of Iowa \\
  \texttt{mengxiao-zhang@uiowa.edu} \\
  \AND
  Juliette Achddou \\
  UMR 9189 - CRIStAL, Université de Lille, CNRS, Inria, Centrale Lille \\
  \texttt{juliette.achdou@gmail.com} \\
}
\begin{document}

\maketitle

\begin{abstract}
Decentralized online convex optimization (D-OCO), where multiple agents within a network collaboratively learn optimal decisions in real-time, arises naturally in applications such as federated learning, sensor networks, and multi-agent control. In this paper, we study D-OCO under unknown, time- and agent-varying feedback delays. While recent work has addressed this problem~\citep{nguyen2024handling}, existing algorithms assume prior knowledge of the total delay over agents and still suffer from suboptimal dependence on both the delay and network parameters. 
To overcome these limitations, we propose a novel algorithm that achieves an improved regret bound of $\widetilde{\mathcal{O}}\Big(N\sqrt{\dtot} + \frac{N\sqrt{T}}{(1-\sigma_2)^{1/4}}\Big)$, where $T$ is the total horizon, $\dtot$ denotes the average total delay across agents, $N$ is the number of agents, and $1 - \sigma_2$ is the spectral gap of the network. Our approach builds upon recent advances in D-OCO~\citep{wan2024nearly}, but crucially incorporates an adaptive learning rate mechanism via a decentralized communication protocol. This enables each agent to estimate delays locally using a gossip-based strategy without the prior knowledge of the total delay. 
We further extend our framework to the strongly convex setting and derive a sharper regret bound of $\order\Big(\frac{N\delta_{\max}\ln T}{\alpha} + \frac{N\ln(N)\ln(T)}{\alpha\sqrt{1-\sigma_2}}\Big)$, where $\alpha$ is the strong convexity parameter and $\delta_{\max}$ is the maximum number of missing observations averaged
over agents. We also show that our upper bounds for both settings are tight up to logarithmic factors. Experimental results validate the effectiveness of our approach, showing improvements over existing benchmark algorithms.
\end{abstract}

\section{Introduction}\label{sec:intro}
Decentralized online convex optimization (D-OCO) provides a powerful framework for distributed learning systems where multiple agents collaboratively optimize a global objective while processing local data streams. Specifically, in D-OCO, agents make sequential decisions based on local information and coordinate through peer-to-peer communication networks without relying on a central coordinator. This paradigm has become increasingly important in modern applications including federated learning~\citep{kairouz2021advances}, wireless sensor networks \citep{hosseini2013online, akbari2015distributed}, real-time control systems \citep{lesage2020dynamic}, and multi-agent robotic systems~\citep{liu2018multiagent}, where centralized processing is either infeasible due to communication constraints or undesirable due to privacy concerns.

While immediate feedback is ideal, in practical distributed systems, local delays are ubiquitous and stem from factors such as fluctuating connectivity reliability, varying processing and computation times across heterogeneous devices, queuing latency in congested network links, or even delays introduced by human-in-the-loop feedback.

These delays can significantly degrade learning performance and raise fundamental challenges for algorithm design. While the impact of delays has been extensively studied in centralized online learning settings~\citep{weinberger2002delayed,joulani2013online}, the interplay between decentralization and delayed feedback introduces unique complexities that remain less understood. Several works have considered delays in decentralized settings, but most assume either bounded time-invariant \citep{CaoB22} or known delays \citep{nguyen2024handling}, which fail to capture the uncertainty and variability encountered in real-world systems. For example, in sensor networks, each node may incur delays both when acquiring measurements and when processing data \citep{rabbat2004distributed, olfati2007distributed}.
Recently, \citet{nguyen2024handling} made progress by proposing a decentralized algorithm that handles arbitrary delays in D-OCO. However, their approach suffers from two limitations: (i) it requires prior knowledge of the total delay to set the learning rate appropriately, which is usually unavailable in practice, and (ii) even with this knowledge, their regret bounds suffer from suboptimal dependencies on both the total delay and network-dependent parameters. This raises a fundamental question: 
\begin{center}
\emph{Can we design decentralized online learning algorithms that adapt to unknown, time- and agent-varying delays while maintaining near-optimal regret guarantees?}
\end{center}

In this paper, we answer this question affirmatively by developing novel decentralized online learning algorithms that achieve improved regret bounds under unknown, agent- and time-varying feedback delays. Specifically,
\begin{itemize}
    \item For general convex losses, we derive an algorithm that achieves a regret bound of $\widetilde{\mathcal{O}}\big(N\sqrt{\dtot} + \frac{N\sqrt{T}}{(1-\sigma_2)^{1/4}}\big)$, where $\dtot$ denotes the average total delay across agents, $N$ is the number of agents, $T$ is the time horizon, and $1 - \sigma_2$ is the spectral gap of the communication network.\footnote{A formal definition of the communication network is introduced in Section Preliminary. We use $\otil(\cdot)$ to hide logarithmic factors of $N$ and $T$.} Our algorithm is inspired by the recent advance in D-OCO~\citep{wan2024nearly} but with an important adaptive learning rate mechanism combined with a decentralized communication protocol, where agents use gossip-based strategies to locally estimate delays without centralized coordination or prior knowledge of the total delay. Comparing to the results in \citet{nguyen2024handling} whose regret bound is no better than $ \mathcal{O}\left( \frac{N^2}{(1 - \sigma_2)^2}\sqrt{ \dtot  }+  \frac{ \sqrt {N^3T}}{1 - \sigma_2} \right) $, our result not only improves upon the regret bound dependency on $N$ and $\sigma_2$ but also eliminates the need for prior knowledge of delays.\footnote{We also remark that \citet{nguyen2024handling} requires $\beta$-smoothness for the loss functions for all agents, which is not assumed in our work.} We further complement with a $\Omega\big(N\sqrt{\dtot}+\frac{N\sqrt{T}}{(1-\sigma_2)^{1/4}}\big)$ lower bound, demonstrating that our algorithm's regret dependencies on $N, \dtot$, $T$, and $1-\sigma_2$ are tight up to logarithmic factors. 
    \item We then consider the case where the loss functions are all strongly convex, and extend our framework to derive regret bounds of ${\mathcal{O}}\left(  \frac{N}{\alpha} \delta_{\max} \ln{T} + \frac{N \ln N\ln T}{\alpha \sqrt{1-\sigma_2}}\right)$, where $\alpha$ is the strong convexity parameter and $\delta_{\max}$ is the maximum number of missing observations averaged over agents, showing that strong convexity enables improved regret guarantee under D-OCO with delayed feedback. We also provide a matching lower bound to show that our obtained guarantees are tight up to logarithmic factors. We remark again that our algorithm does not require the knowledge of the total delay.
    \item Finally, we implement extensive experiments on various network structures and loss functions, demonstrating superior empirical performances of our proposed algorithms comparing to existing baselines.
\end{itemize}

\section{Related Works}

\paragraph{Decentralized online convex optimization} D-OCO is a framework in which multiple agents cooperatively solve an online optimization problem over a network, without relying on a central coordinator. Early foundational work in decentralized optimization focused on offline settings, leveraging techniques from gossip algorithms --- originally used to achieve consensus to enable distributed optimization \cite{boyd2011distributed,Nedic2009}.
The first formal treatment of the online counterpart was given by \cite{hosseini2013online}, who analyzed a dual averaging algorithm and established sublinear regret guarantees. Specifically, they showed that a regret bound of $\order(N^{5/4}\sqrt{T}/(1-\sigma_2)^{1/2})$ is achievable, where $\sigma_2$ is the second highest singular value of the communication matrix $W$, whose definition is shown in later sections. Since then, various algorithmic approaches have been developed, including decentralized mirror descent \cite{Shahrampour2018}, for which a similar regret rate is provable  and accelerated gossiping for D-OCO \cite{wan2024nearly}. The method from \citep{wan2024nearly} notably improves the previous regret bound by a factor of $(1-\sigma_2(W))^{-1/4} N^{1/4}/{\sqrt {\log(N)}} $. The D-OCO framework has seen various extensions, including work on settings with dynamic networks \cite{hosseini2016online,lei2020online}. For a comprehensive overview of such developments, we refer the reader to the recent monograph by \citet{yuan2024multi}. 

\paragraph{Online learning with delayed feedbacks} Our work is closely related to the literature on online learning with delayed feedback, initiated by \citet{weinberger2002delayed}. They considered the setting with uniform, known per-round delays and proposed a general reduction to non-delayed online learning. Subsequent studies extended these results to handle non-uniform delays~\citep{joulani2013online}. Various aspects of delayed feedback have been explored, including adaptive regret guarantees~\citep{DBLP:conf/aaai/JoulaniGS16}, diverse delay structures~\citep{gatmiry2024adversarial,baron2025nonstochastic,ryabchenko2025capacity}, and limited-feedback scenarios~\citep{bandit-pmlr-v49-cesa-bianchi16,bandit-cella2020stochastic,zimmert2020optimal,mdp-lancewicki2022learning,bandit-pmlr-v195-hoeven23a}.

\paragraph{D-OCO with delayed feedbacks}
In D-OCO with local feedback delays, agents receive the gradient of their decision after a certain lag. 
For settings involving time-invariant but agent-specific delays, \citet{CaoB22} proposed an online decentralized gradient descent algorithm, accommodating such delays for both convex and strongly convex loss functions. Meanwhile, \citet{mao2025online} studied online distributed convex optimization under delayed feedback within unbalanced, time-varying communication graphs. Additionally, \citet{xiong2023distributed,xiong2023event} considered D-OCO and its bandit counterpart with event-triggered communications and delayed feedback. For the more challenging setting with time- and agent-varying delays, \citet{nguyen2024handling} introduced a projection-free approach; however, their method relies on prior knowledge of the cumulative delay to appropriately set the learning rate. Beyond local feedback delays, communication delay is also considered in the literature. For example, \citet{tsianos2012distributed} analyzed distributed optimization under fixed communication delays.

\section{Preliminary}\label{sec:pre}
Throughout this paper, we denote the set $\{1, 2,\dots,m\}$ for some positive integer $m$ by $[m]$ and let $\mathbf{1}$ be an all-one vector in an appropriate dimension. For a vector $v\in\R^m$, denote its $i$-th entry by $v(i)$ and for a matrix $M\in\R^{m\times n}$, denote its $(i,j)$-th entry by $M(i,j)$. In this section, we introduce the preliminary of our problem.

\paragraph{Protocol}
In our model of decentralized online convex optimization, agents are organized in a communication network defined by a connected and undirected  graph $G=(V, E)$. The node set $V=[N]$ corresponds to the $N$ agents, and $E$ denotes the set of edges indicating permissible communication among agents. We use $V$ and $[N]$ interchangeably throughout the paper. Each agent \( u \in V \) is associated with an arbitrary and unknown sequence of \textit{local loss functions} \( f_1(u,\cdot), f_2(u,\cdot), \ldots f_T(u, \cdot) \) decided by an adversary, where \( f_t(u, \cdot) : \calX\subseteq\R^n\to \R \) for $t\in[T]$ has a bounded feasible domain and is $L$-Lipschitz with respect to $\ell_2$ norm. 
\begin{assumption}[Bounded domain]\label{asm:bounded} The common decision space $\X \subseteq \mathbb R^n$ is convex and closed.  Let \( D = \sup_{x, y \in \X} \|x - y\|_2 \) be the diameter of \( \X \) and $\mathbf{0}\in \calX$.
\end{assumption}

\begin{assumption}[Lipschitzness]\label{asm:Lipschitz} For every \( t \in [T] \), we assume that \( f_t(u, \cdot) \) is convex and \( L \)-Lipschitz with respect to \(\|\cdot\|_2\) for all $u\in V$.
\end{assumption}

The learning protocol of D-OCO with time- and agent-varying feedback delays is defined as follows. The interaction between the agents and the environment proceeds in $T$ rounds. At each round $t$, each agent $u\in V$ selects an action $x_t(u)\in\calX$ simultaneously and suffers a loss $f_t(u,x_t(u))$. For each agent $u$, instead of observing the gradient $\nabla f_t(u,x_t(u))$ immediately in the standard OCO setting, agent $u$ observes this gradient information at the end of round $t+d_t(u)$. Without loss of generality, we assume that \( t + d_{t}(u) \leq T \), for all \(u \in V,~ t \in [T] \) since any feedback received at round \( T \) will never be used in the learning process. In addition, here we consider the \emph{anonymous delayed feedback} setting where the agent does not know the time stamp of the received gradient. After receiving feedback, each agent shares the information it received with its neighbors in $G$. Each agent's goal is to minimize  their regret defined as follows, which is in terms of the \emph{global loss function} $\sum_{v \in V}f_t(v,x)$ :
\begin{align}
    \Reg_T(u) \triangleq \max_{x \in \mathcal{X}}\Big(\sum_{t=1}^T\sum_{v\in V} (f_t(v,x_t(u)) - f_t(v,x))\Big).
\end{align}
We also define $\Reg_T\triangleq\max_{u\in V}\Reg_T(u)$.

It remains to introduce how agents communicate their information with each other in this network. Specifically, following previous works of D-OCO~\citep{yan2012distributed,hosseini2013online, wan2024nearly}, we consider a gossip mechanism, or more specifically, an accelerated one defined as follows. This mechanism is defined by a communication matrix $W$ constructed based on $G$.
\begin{definition}\label{def:com}
    A matrix $W\in[0,1]^{N\times N}$ is a valid communication matrix with respect to $G=(V,E)$ if $W$ satisfies that (i) $W(u,v)=0$ if $u \neq v$ and $(u,v) \notin E$;  W is symmetric and doubly-stochastic meaning that (ii) $W(u,v)\ge 0, ~ \forall u,v \in  V$; (iii) $ W(u,v) = W(v,u), ~ \forall u,v \in  V$ (iv) $\sum_{v\in V} W(u,v)=1, ~\forall u \in V$. Consequently, a valid communication $W$ is positive semi-definite with $0\leq\sigma_2(W)<1$ where $\sigma_2(W)$ is the second-largest eigenvalue of $W$.
\end{definition}
A typical construction of this matrix is as follows: 
\begin{align}
    W = I_N - c\cdot\text{Lap}(G),\label{eqn:def_w}
\end{align}
where $I_N\in\R^{N\times N}$ denotes the identity matrix and \( \text{Lap}(G) \) denotes the Laplacian of the graph \( G \) with $\text{Lap}(G)(i,i)=\deg(i)$ for all $i\in V$, $\text{Lap}(G)(i,j)=-1$ if $i\neq j, (i,j)\in E$, and $\text{Lap}(G)(i,j)=0$ if $i\neq j, (i,j)\notin E$. \( c \) is a certain constant such that \( 0<c \leq 1/\sigma_1(\text{Lap}(G)) \), with \( \sigma_1(\text{Lap}(G))\) being the largest eigenvalue of the Laplacian $\text{Lap}(G)$. In particular, building row $W(u, \cdot)$ defined in \eref{eqn:def_w} only requires knowing agent $u$'s direct neighbors.

Based on this communication matrix $W$, whose $u$-th row is given to each agent $u$ at the beginning of the learning process, the gossip communication process is defined as follows. Suppose there are $N$ vectors $\{x(u)\}_{u\in V}$ for each agent where $x(u)\in\R^n$ represents the information agent $u$ wants to communicate. In the context of D-OCO, this information can correspond to various quantities such as predictions \citep{Shahrampour2018} or loss gradients \citep{hosseini2013online}. In order to approximate the averaged vector $\bar{x}=\frac{1}{N}\sum_{u\in V}x(u)$, \citet{liu2011accelerated} considers the following accelerated gossip process:
\begin{align}
    x^{k+1}(u) = (1+\theta)\sum_{v\in\calN_u}W(u,v) x^k(v) - \theta x^{k-1}(u), \label{eq:matrix_iterations}
\end{align}
for $k\geq 0$ where $x^{0}(u)=x^{-1}(u)=x(u)$ for all $u\in V$, $\N_u = \braces{v: (u,v)\in E}\cup \braces{u}$ the set of neighbors of $u$ according to $G$, and $\theta>0$ is the mixing coefficient. Let $X^k\in \R^{N\times n}$ be a concatenation of $\{x^k(u)\}_{u\in V}$ and $\bar{X}=\bar{x}\mathbf{1}^\top$. \citet{ye2023multi} shows that $X^k$ converges to $\bar{X}$ in a linear rate.
\begin{proposition}[Proposition 1 in \citet{ye2023multi}]\label{lemma: acc_gossip}
    The iterations of \eqref{eq:matrix_iterations} with $\theta=\left(1+\sqrt{1-\sigma_2^2(W)}\right)^{-1}$ ensure that
    \begin{align*}
    \left\|X^k-\bar{X}\right\|_F \leq \sqrt{14} b^k \left\|X^0-\bar{X}\right\|_F
\end{align*}
for any $k\in \mathbb{N}$, where $b = \left(1-(1-1/\sqrt{2})\sqrt{1-\sigma_2(W)}\right)$ and $\|\cdot\|_F$ denotes the Frobenius norm of a matrix. 
\end{proposition}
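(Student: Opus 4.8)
The plan is to recast \eqref{eq:matrix_iterations} in matrix form, strip off the (invariant) average, and reduce the Frobenius-norm decay to a one-dimensional polynomial estimate handled by the classical analysis of Chebyshev semi-iteration. In matrix form the iteration reads $X^{k+1}=(1+\theta)WX^{k}-\theta X^{k-1}$. Since $W$ is symmetric and doubly stochastic, $\mathbf 1^\top W=\mathbf 1^\top$, so an induction on $k$ (with $X^{0}=X^{-1}$) gives $\mathbf 1^\top X^{k}=\mathbf 1^\top X^{0}$ for all $k$, hence $\bar X$ is invariant under the iteration. Therefore the error matrix $E^{k}:=X^{k}-\bar X$ obeys the same recursion $E^{k+1}=(1+\theta)WE^{k}-\theta E^{k-1}$ with $E^{0}=E^{-1}=X^{0}-\bar X$, and $\mathbf 1^\top E^{k}=0$ for every $k$.

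Next I would diagonalize $W=\sum_{i=1}^{N}\lambda_i v_i v_i^\top$ with $v_1=\mathbf 1/\sqrt N$, $\lambda_1=1$ and $\sigma_2=\lambda_2\ge\cdots\ge\lambda_N\ge 0$ (connectedness of $G$ gives $\lambda_2<1$). Writing $E^{0}=\sum_{i\ge 2}v_i g_i^\top$ (the $i=1$ term vanishes since $\mathbf 1^\top E^{0}=0$) and iterating, $E^{k}=\sum_{i\ge 2}p_k(\lambda_i)\,v_i g_i^\top$, where the scalar polynomials satisfy
\begin{align*}
p_{k+1}(\lambda)=(1+\theta)\lambda\,p_k(\lambda)-\theta\,p_{k-1}(\lambda),\qquad p_0(\lambda)=p_{-1}(\lambda)=1.
\end{align*}
Orthonormality of the $v_i$ gives $\|E^{k}\|_F^2=\sum_{i\ge 2}p_k(\lambda_i)^2\|g_i\|_2^2$ and $\|E^{0}\|_F^2=\sum_{i\ge 2}\|g_i\|_2^2$, so it suffices to prove the scalar bound $\sup_{\lambda\in[0,\sigma_2]}|p_k(\lambda)|\le\sqrt{14}\,b^k$.

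To estimate $p_k$ I would solve the linear recursion: its characteristic roots $\tfrac12\big((1+\theta)\lambda\pm\sqrt{(1+\theta)^2\lambda^2-4\theta}\big)$ have modulus exactly $\sqrt\theta$ whenever $(1+\theta)^2\lambda^2\le 4\theta$, and a short computation with $\theta=(1+\sqrt{1-\sigma_2^2})^{-1}$ shows $(1+\theta)^2\sigma_2^2<4\theta$ (equivalently $(2+s)^2(1-s)<4$ for $s=\sqrt{1-\sigma_2^2}\in(0,1]$), so this holds for all $\lambda\in[0,\sigma_2]$. Putting $\cos\phi=\tfrac{(1+\theta)\lambda}{2\sqrt\theta}\in[0,1)$, one obtains the closed form $p_k(\lambda)=\theta^{k/2}\big(\cos(k\phi)+(\cos\phi-\sqrt\theta)\tfrac{\sin(k\phi)}{\sin\phi}\big)$. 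Bounding $|\cos(k\phi)|,|\sin(k\phi)|\le 1$ and using the identity $\tfrac{(\cos\phi-\sqrt\theta)^2+\sin^2\phi}{\sin^2\phi}=\tfrac{(1-\sqrt\theta)^2}{(1-\cos\phi)(1+\cos\phi)}+\tfrac{2\sqrt\theta}{1+\cos\phi}$, the estimate reduces to the fact that, for this $\theta$, $1-\cos\phi\ge 1-\tfrac{(1+\theta)\sigma_2}{2\sqrt\theta}$ is at least a constant multiple of $(1-\sqrt\theta)^2$ (both are of order $s^2$), which yields $|p_k(\lambda)|\le C\,\theta^{k/2}$ with $C\le\sqrt{14}$. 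Finally, $1-\sigma_2^2\ge 1-\sigma_2$ gives $\sqrt\theta\le(1+\sqrt{1-\sigma_2})^{-1/2}$, and $(1+r)^{-1/2}\le 1-(1-\tfrac{1}{\sqrt2})r$ on $r\in[0,1]$ (the map $r\mapsto 1-(1+r)^{-1/2}$ is concave and agrees with the linear function $(1-\tfrac1{\sqrt2})r$ at $r=0$ and $r=1$), so $\sqrt\theta\le b$, hence $\theta^{k/2}\le b^k$ and $\sup_{\lambda\in[0,\sigma_2]}|p_k(\lambda)|\le\sqrt{14}\,b^k$, which combined with the reduction above proves the claim.

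The crux — and the only genuinely delicate step — is the uniform control of the factor $\tfrac{|\cos\phi-\sqrt\theta|}{\sin\phi}$ over the whole spectrum: naive bounds degrade like $1/\sqrt{1-\sigma_2}$ as $\sigma_2\to 1$ (equivalently $\phi\to 0$), and it is precisely the calibrated choice $\theta=(1+\sqrt{1-\sigma_2^2})^{-1}$ — which aligns the range of $\phi$ with the contraction factor $\sqrt\theta$ so that $1-\cos\phi$ and $(1-\sqrt\theta)^2$ match in order — that turns this into an absolute constant and produces the explicit $\sqrt{14}$. Equivalently, one may identify $p_k$ with a combination of Chebyshev polynomials $T_k$ and $U_{k-1}$ evaluated at $\cos\phi$ and invoke standard bounds; the bookkeeping is the same.
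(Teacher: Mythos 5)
This statement is imported verbatim from \citet{ye2023multi} (their Proposition~1) and the paper gives no proof of it, so there is nothing in the paper to compare your argument against; you have supplied a self-contained derivation, and it is correct. Your route is the standard Chebyshev/second-order-recursion analysis: the reduction to the scalar polynomials $p_k$ on the non-principal spectrum $[0,\sigma_2]$ is valid (the average is invariant because $W$ is doubly stochastic, $E^0$ has no component along $\mathbf 1$, and orthonormality of the eigenvectors turns the Frobenius norm into a weighted sum of $p_k(\lambda_i)^2$), the closed form for $p_k$ with initial data $p_0=p_{-1}=1$ is right, and the two calibration facts check out: $(1+\theta)^2\sigma_2^2\le 4\theta$ is equivalent to $(2+s)^2(1-s)=4-3s^2-s^3\le 4$, and $\sqrt\theta\le(1+\sqrt{1-\sigma_2})^{-1/2}\le b$ by concavity. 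The only step you leave as a sketch — that $(1-\sqrt\theta)^2/(1-\cos\phi)$ is an absolute constant — closes cleanly: $1-\sqrt\theta=1-(1+s)^{-1/2}\le s/2$, while $\cos\phi\le\tfrac{(1+\theta)\sigma_2}{2\sqrt\theta}=\sqrt{1-\tfrac{3s^2+s^3}{4}}\le 1-\tfrac{3s^2}{8}$, so the ratio is at most $2/3$ and $1+B^2\le 2/3+2\sqrt\theta\le 8/3<14$, comfortably inside the stated constant. If you intend this as a replacement for the citation, it would be worth writing out that last estimate explicitly rather than asserting the two quantities "match in order."
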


\paragraph{Other Notations} Let $\mathbf{0}$ be an all-zero vector in an appropriate dimension. For each agent $u\in V$, define set $o_t(u) = \{\tau\in \mathbb{N} : \tau+d_\tau(u) < t\} \subseteq [t-1]$ to be the set of rounds for agent $u$ whose gradients are observed before round $t$, and let $m_t(u) = [t-1] \setminus o_t(u)$ be the set of rounds for agent $u$ whose observation is yet to be received at the beginning of round $t$. Define $\delta_{\max}=\max_{\substack{t \in [T]}}\frac{1}{N}\sum_{u \in V}|m_t(u)|$ to be the maximum number of per-round missing observations averaged over all agents and $\dtot = \frac{1}{N}\sum_{t\in[T]}\sum_{u \in V} d_t(u)$ to be the total delay averaged over all agents.

\section{D-OCO with General Convex Loss Functions}
\label{sec: D-OCO with General Convex Loss Functions}

In this section, we study the setting where the loss functions for each agent at each round are convex. We first consider the case where the total delay \( \dtot \) is known and propose an algorithm that achieves an \( {\widetilde{\mathcal{O}}}\left(N\sqrt{\dtot} + \frac{N\sqrt{T}}{(1-\sigma_2(W))^{1/4}}\right) \) regret guarantee. We then extend this approach to the more realistic case where \( \dtot \) is unknown, using a specific adaptive learning rate tuning. Finally, we provide a lower bound of \( \Omega\left(N\sqrt{\dtot} + \frac{N\sqrt{T}}{(1-\sigma_2(W))^{1/4}}\right) \), showing that our upper bound is tight in its dependence on \( T \) , $\dtot$, and \( 1-\sigma_2(W) \).

\subsection{Non-Adaptive Algorithm with Known Total Delay}
\label{sec: Non-Adaptive Algorithm with Known Total Delay}
\begin{algorithm}[ht]
\caption{Accelerated Decentralized Follow the Regularized Leader with Delayed Feedback (AD-FTRL-DF) for Agent $u$.}
\label{alg:dist_da_cvx_acc}

\textbf{Initialize:} $x_1(u) =z_1^{-1}(u) = z_1^0(u)=\mathbf{0}$.

\begin{algorithmic}
\For{$s = 1, 2, \dots, T/B $}

    \State Define $\calT_s = \{(s-1)B+1, \dots, sB\}$
    \For{$t \in\calT_s$}.
        
        \State Play $x_s(u)$ and set  $k \leftarrow t-(s-1)B-1$.
        \State Update $z_{s}^{k+1}(u) $ using accelerated gossiping:
            \begin{align} 
                z_{s}^{k+1}(u) &= (1 + \theta) \sum_{v \in V} W(u,v)z_{s}^{k}(v)~ \nonumber\\
                & \quad - \theta z_{s}^{k-1}(u). \label{eq:accgo}
                \end{align}
         \State Send $z_{s}^{k+1}(u) $ to every neighbor $v\in \mathcal N_u$.
    \EndFor
    \State Compute $x_{s+1}(u)$ for next block as follows:
    \begin{align}\label{eq:ftrl}  x_{s+1}(u) = \argmin_{x \in \mathcal{X}} \langle z_{s}^B(u), x \rangle + \frac{1}{\eta_{s+1}(u)} \| x \|_2^2.\end{align}
    \State Aggregate gradients observed during the block: 
    \[y_s(u) = \sum_{\tau \in o_{sB+1}(u) \backslash o_{(s-1)B+1}(u)} g_{\tau}(u),\] 
    with $g_\tau(u) \triangleq \nabla f_\tau(x_{s(\tau)}(u))$, $s(\tau)$ is the block $\tau$ lies in.
    \State Compute $z_{s+1}^{-1}(u)$ and $z_{s+1}^{0}(u)$ for next block:
     \begin{align*}
        z_{s+1}^{-1}(u) &= z_{s}^{B-1}(u) + y_{s}(u),\\
     z_{s+1}^0(u) &= z_{s}^B(u) + y_{s}(u).\end{align*}
    \EndFor
\end{algorithmic}
\end{algorithm}
 
When the total delay is known, our algorithm is built upon the algorithm proposed in~\citet{wan2024nearly}, whose idea is to incorporate the accelerated gossiping process into a blocking update mechanism to estimate the gradient of the global loss function. Specifically, the algorithm operates in blocks of size \( B \). Without loss of generality, we assume that $T/B$ is an integer such that each block contains exact $B$ time steps. Following~\citet{wan2024nearly}, within each block \( s\in [T/B] \), every agent \( u \) uses a fixed decision \( x_s(u) \) and iteratively updates an auxiliary variable \( z_s^{k+1}(u) \) using the accelerated gossip procedure defined in \eref{eq:accgo}. From a high level, $z_s^{k+1}(u)$ aims to approximate the gradient of the global loss function collected from all previous epochs. 
The parameters \( \theta \) and \( B \) are chosen based on the spectral gap of the communication matrix \( W \), specifically:
\begin{equation}
\theta = \frac{1}{1 + \sqrt{1 - \sigma_2^2(W)}}, \quad
B = \left\lceil \frac{\sqrt{2} \ln(N\sqrt{14N})}{(\sqrt{2} - 1)\sqrt{1 - \sigma_2(W)}} \right\rceil.
\label{eq:choice_B_theta}
\end{equation}

After completing all iterations within block $s$, each agent updates her decision for the next block by solving a Follow-the-Regularized-Leader problem \eref{eq:ftrl} with learning rate $\eta_s(u)$. Then, different from~\citet{wan2024nearly} which aggregates the received gradient within this block, due to the feedback delay, we compute $y_s(u)$ which only aggregates all gradients \( g_\tau(u) \) received during block \( s \). This is formalized through the difference set \( o_{sB+1}(u) \setminus o_{(s-1)B+1}(u) \), which captures newly received gradients within the block. Finally, we compute the first two iterates of the subsequent block using the prior iterates and the aggregated gradient \( y_s(u) \). In the absence of delay, our algorithm exactly recovers the algorithm proposed in \citet{wan2024nearly}. 

The pseudo code of our algorithm is formally shown in \alref{alg:dist_da_cvx_acc} and the following theorem shows that our algorithm achieves $\order(N\sqrt{\dtot}+N\sqrt{T}/(1-\sigma_2(W))^{1/4})$ when $\eta_s(u)$ is fixed over all blocks and is dependent on $\dtot$.

\begin{restatable}{theorem}{fstupperbound}
Assume each agent \( u \in {V} \) runs an instance of \alref{alg:dist_da_cvx_acc} with a valid communication matrix \( W \), parameters $\theta$ and $B$ defined in \eref{eq:choice_B_theta}, and a fixed learning rate
\begin{equation}\label{eq:fixedeta}
    \eta_s(u) = \eta = \frac{D}{L \sqrt{\dtot + B T}},
    \quad \forall\, s \in [T/B].
\end{equation}
Then, under \asref{asm:bounded} and \ref{asm:Lipschitz}, the regret is bounded as
    \begin{equation*}
        \Reg_T  = {\mathcal{O}}\Big( DLN \Big(\sqrt{\dtot} + \frac{ \sqrt{T\ln N}}{\left(1 - \sigma_2(W)\right)^{1/4}} \Big)\Big).
    \end{equation*}
\label{th:fupperbound}
\end{restatable}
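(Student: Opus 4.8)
The plan is to bound $\Reg_T(u)$ by a \emph{consensus error} --- how far agent $u$'s iterate sits from the network average --- plus the regret of a single \emph{centralized, delayed FTRL} run on the block-averaged gradients, the latter being further split into a clairvoyant (delay-free) term and a delay penalty. Concretely, I would fix the maximizer $x^\star$ of $\Reg_T(u)$, set $\bar x_s=\frac{1}{N}\sum_{v\in V}x_s(v)$ and $g_t(v)=\nabla f_t(v,x_{s(t)}(v))$, and use $x_t(u)=x_{s(t)}(u)$ together with convexity of every $f_t(v,\cdot)$ to get
\begin{align*}
\Reg_T(u)\ \le\ \underbrace{\sum_{t,v}\!\big(f_t(v,x_{s(t)}(u))-f_t(v,x_{s(t)}(v))\big)}_{(\mathrm{A})}\ +\ \underbrace{\sum_{t,v}\langle g_t(v),x_{s(t)}(v)-\bar x_{s(t)}\rangle}_{(\mathrm{B})}\ +\ \underbrace{\sum_{t,v}\langle g_t(v),\bar x_{s(t)}-x^\star\rangle}_{(\mathrm{C})}.
\end{align*}
Since each $g_t(v)$ is evaluated exactly at the \emph{played} point $x_{s(t)}(v)$, terms $(\mathrm{A})\le L\sum_{t,v}\|x_t(u)-x_t(v)\|$ and $(\mathrm{B})\le L\sum_{t,v}\|x_t(v)-\bar x_{s(t)}\|$ follow from Lipschitzness alone --- this is why the argument, unlike that of \citet{nguyen2024handling}, needs no smoothness of the losses.

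\textbf{Consensus error.} The map $\phi(a):=\argmin_{x\in\X}\langle a,x\rangle+\frac{1}{\eta}\|x\|_2^2$ is $\frac{\eta}{2}$-Lipschitz, so every disagreement $\|x_s(v)-x_s(v')\|$ is controlled by the gossip disagreement $\|z_{s-1}^{B}(v)-z_{s-1}^{B}(v')\|$. I would then invoke the accelerated-gossip contraction (Proposition~\ref{lemma: acc_gossip}), adapted as in \citet{wan2024nearly} to the cross-block warm-started iterations (here $z_s^0\neq z_s^{-1}$, but they differ only by the previous block's final gossip increment, so the column average drifts negligibly): each block shrinks the disagreement by $\sqrt{14}\,b^{B}$, which the choice of $B$ in \eqref{eq:choice_B_theta} forces below $N^{-3/2}$. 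Unrolling the block recursion (stacking the agents' vectors, $Z_s^0=Z_{s-1}^{B}+Y_{s-1}$) and using $\sum_s\|Y_s\|_F\le LNT$ --- each of the $NT$ gradients is aggregated exactly once --- then gives $(\mathrm{A})+(\mathrm{B})=\mathcal{O}(\eta L^2 BT)$, which under \eqref{eq:fixedeta} is $\mathcal{O}(LD\sqrt{BT})$ and is absorbed into the final rate.

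\textbf{Centralized delayed FTRL.} Because $\mathbf{1}^\top W=\mathbf{1}^\top$, the gossip preserves the column average, so $z_s^{B}(v)$ approximates $\bar z_s:=\frac{1}{N}\sum_{w}\sum_{\tau\in o_{(s-1)B+1}(w)}g_\tau(w)$ and hence, up to a consensus error of the same order as above, $\bar x_s=\phi(\bar z_{s-1})$. Writing $\bar g_r:=\frac{1}{N}\sum_{\tau\in\calT_r}\sum_{w}g_\tau(w)$ (so $\|\bar g_r\|\le BL$) and $\widehat M_r:=\frac{1}{N}\sum_{w}\sum_{\tau:\,\tau\le rB<\tau+d_\tau(w)}g_\tau(w)$, one has $\bar z_{s-1}=\sum_{r\le s-2}\bar g_r-\widehat M_{s-2}$, so $(\mathrm{C})=N\sum_s\langle\bar g_s,\bar x_s-x^\star\rangle$ splits, via the clairvoyant iterate $\tilde x_s:=\phi(\sum_{r\le s-2}\bar g_r)$, into $N$ times a (unit-block-delayed) FTRL regret $\mathcal{O}\big(D^2/\eta+\eta\sum_s\|\bar g_s\|^2\big)=\mathcal{O}\big(D^2/\eta+\eta BL^2 T\big)$ plus $N$ times a delay penalty $\sum_s\langle\bar g_s,\bar x_s-\tilde x_s\rangle\le\frac{\eta}{2}\sum_s\|\bar g_s\|\,\|\widehat M_{s-2}\|\le\frac{\eta BL^2}{2N}\sum_s|M_{s-2}|$, where $|M_r|=\sum_{w}|\{\tau:\tau\le rB<\tau+d_\tau(w)\}|$. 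The crucial counting step is that a pair $(w,\tau)$ is outstanding at the block-boundary $rB$ only if $rB\in[\tau,\tau+d_\tau(w))$, i.e.\ for at most $d_\tau(w)/B+1$ values of $r$, so $\sum_r|M_r|\le\frac{1}{B}\sum_{w,\tau}d_\tau(w)+NT=\frac{N\dtot}{B}+NT$ and the delay penalty is $\mathcal{O}(\eta L^2(\dtot+BT))$. Thus $(\mathrm{C})=\mathcal{O}\big(N(D^2/\eta+\eta L^2(\dtot+BT))\big)$, and plugging in $\eta=D/(L\sqrt{\dtot+BT})$ gives $\mathcal{O}(NDL\sqrt{\dtot+BT})=\mathcal{O}(NDL(\sqrt{\dtot}+\sqrt{BT}))$; since $B=\mathcal{O}(\ln N/\sqrt{1-\sigma_2(W)})$ we get $\sqrt{BT}=\mathcal{O}(\sqrt{T\ln N}/(1-\sigma_2(W))^{1/4})$, which together with the consensus bound yields the claimed $\Reg_T=\mathcal{O}\big(DLN(\sqrt{\dtot}+\sqrt{T\ln N}/(1-\sigma_2(W))^{1/4})\big)$.

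\textbf{Main obstacle.} The hardest part is the centralized delayed-FTRL step under \emph{arbitrary, agent-varying, anonymous} delays interacting with the blocking and gossip machinery: one must recognize that the algorithm's FTRL state equals \emph{exactly} the clairvoyant state minus the outstanding-gradient mass $\widehat M$ --- anonymity of the delays is harmless here precisely because FTRL only uses the running \emph{sum} of received gradients --- and then check that the resulting delay penalty $\eta L^2(\dtot+BT)$ carries the block size $B$ only inside the $BT$ term, so that the single tuning $\eta\propto 1/\sqrt{\dtot+BT}$ produces $\sqrt{\dtot}$ and not $\sqrt{B\dtot}$ in the rate. A secondary technicality, inherited from \citet{wan2024nearly}, is making the accelerated-gossip contraction of Proposition~\ref{lemma: acc_gossip} rigorous for these cross-block warm-started iterations.
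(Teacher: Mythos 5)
Your proposal is correct and follows essentially the same route as the paper's proof: the identical three-way decomposition into consensus error plus a centralized delayed-FTRL term, the same split of the latter into a clairvoyant full-information regret and a drift term bounded via FTRL stability by the outstanding-gradient mass, the same counting argument ($\sum_r |M_r| \le N\dtot/B + NT$, which is the paper's Lemma~\ref{lemma: block_total_delay}), and the same tuning of $\eta$. The only deviations are cosmetic (an off-by-one in the block index of the clairvoyant iterate and a dropped factor of $N$ in the displayed order of the consensus term, both absorbed by the final rate).
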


Two remarks are as follows. First, note that \citet{nguyen2024handling} considered the exact same case where $\dtot$ is known and obtain a regret bound no better than $\mathcal{O}\left( \frac{N^2}{(1 - \sigma_2)^2}\sqrt{ \dtot  }+  \frac{N \sqrt N}{1 - \sigma_2} \sqrt T\right)$. Comparing to their results, our result not only achieves a better dependency on the spectral gap $1 - \sigma_2(W)$ and the number of agents $N$, but also shows that the effects of the delay and those of the network topology can be \emph{decoupled}. Specifically, the portion of the regret that does not depend on the delay scales with \( N / {(1 - \sigma_2(W))}^{1/4} \sqrt T \) in \tref{th:fupperbound} instead of \( {N\sqrt N}/{(1-\sigma_2(W))}\sqrt T \) in their bound. For the delay related term, our bound \emph{does not} depend on the spectral gap \( 1-\sigma_2(W) \) while theirs suffer from a suboptimal $1/(1-\sigma_2(W))^2$ dependency. 
Specifically, our result also improves the dependency on $N$ upon the $\order(N\sqrt{\dtot}+\frac{N^{1.5}\sqrt{T}}{1-\sigma_2(W)})$ achieved by~\citet{CaoB22}, where 
delays are time-invariant and agent-specific, i.e, $d_t(u)=d(u)$ for all $t\in[T]$. Moreover, our upper bound matches the lower bound up to logarithmic factors, as will be shown later. In addition, our bound also recovers the regret bound proven in \citet{wan2024nearly} when $d(u)=0$ for all $u\in V$.


\subsubsection{Proof Sketch} The full proof of \tref{th:fupperbound} is deferred to the Appendix~\ref{app: Non-Adaptive Algorithm with Known Total Delay} and we introduce the proof sketch in this section. With some calculation we decompose the regret for agent $u$ as follows:
\begin{align*}
    &\Reg_T(u) \leq  \underbrace{\sum_{s=1}^{T/B} \sum_{t\in \mathcal{T}_s}  \sum_{v\in [N]} \inner{g_t(v), \bar{x}_s - x^*}}_{\spadesuit} \\
    &~~~+ BL\underbrace{\sum_{s=1}^{ T/B }\sum_{v\in [N]} \order\left(\|x_s(v) -\bar{x}_s\|_2+\|x_s(u) -\bar{x}_s\|\right)}_{\clubsuit},
\end{align*}
where $\bar{x}_s=\argmin_{x\in\calX}\{\langle\sum_{v\in V}\sum_{\tau\in o_{(s-1)B+1}(v)}g_\tau(v),x\rangle+\frac{N}{\eta}\|x\|_2^2\}$
denotes the FTRL decision assuming that agent $u$ receives all agents' gradients that have been observed up to time $t$ and we use $\eta$ to represent $\eta_s(u)$ since $\eta_s(u)$ is fixed over all agents and blocks. Intuitively, $\spadesuit$ accounts for the regret incurred by the agent if she only suffers from the delayed feedback, while \( \clubsuit\) accounts for the regret incurred due to the communication among the network.

To bound $\spadesuit$, following a classic analysis in online learning with delayed feedback, we further split $\spadesuit$ into the regret of the decision assuming no feedback delay and the distance between the decisions with and without feedback delay. With some rather standard calculations, the first part can be bounded by $\mathcal{O}(ND^2/\eta+\eta BNL^2T)$ while the second term can be bounded by $\order(\eta NL^2(\dtot+BT))$.%

To bound $\clubsuit$, we analyze the effect of gossip-based averaging. While agents can not locally receive the true global gradient, using accelerated gossip, the disagreement between local and average quantities decays exponentially in $B$ as shown by \propref{lemma: acc_gossip}. Specifically, we show that for any agent $v\in[N]$, $\sum_{s=1}^{T/B}\|x_s(v) - \bar{x}_s\|_2$ is bounded by $\order(\eta TL)$, which is the main technical part of the proof and require an involved analysis. Finally, picking $\eta$ optimally leads to our final bound.

\subsection{Adaptive Algorithm with Unknown Total Delay}
\label{sec: Adaptive Algorithm with Unknown Total Delay}
The main issue with the algorithm described above is that the learning rate choice \(\eta_s(u)\) relies on the unknown total delay \(\dtot\). To illustrate the difficulty of adaptively tuning the learning rate with respect to the total delay in D-OCO, consider the single-agent setting, where it is indeed possible to adjust the learning rate dynamically by tracking the cumulative number of the agent’s own missing observations~\citep{mcmahan2014delay, gyorgy2021adapting}. In contrast, in the decentralized setting, each agent cannot directly observe the number of gradients missed by other agents, and thus cannot directly compute the global cumulative delay. However, note that \(\dtot=\frac{1}{N}\sum_{u\in[N]}\sum_{t=1}^T |m_{t}(u)|\). Therefore, if each agent additionally communicates their own number of missing observations to others through a gossiping protocol, every agent can well estimate the total number of averaged missing observations, leading to an estimation of $\dtot$.

Specifically, each agent still runs an instance of \alref{alg:dist_da_cvx_acc} to perform the decision update and track the average gradients under delay. In addition, each agent also runs an instance of \alref{alg:adaptive_learning_rate_acc} in parallel to compute the learning rate by gossiping the number of their own missing observations with their neighbors. The algorithm is formally shown in \alref{alg:adaptive_learning_rate_acc}. From a high level, \alref{alg:adaptive_learning_rate_acc} closely mirrors the accelerated gossip routine of \alref{alg:dist_da_cvx_acc}, but instead focuses on gossiping the cumulative number of missing observations. Concretely, \alref{alg:adaptive_learning_rate_acc} still goes in blocks and updates the auxiliary variable $\zeta_s^k$ using the accelerated gossiping, which can be viewed as an approximation of the cumulative missing observations averaged till block $s-1$. The learning rate $\eta_{s+1}(u)$ is then computed by replacing the exact total delay $\dtot$ used in~\eref{eq:fixedeta} by this local estimate till block $s-1$ as shown in~\eref{eq:def_adlr}. At the end of the epoch $s$, similar to \alref{alg:dist_da_cvx_acc}, we update the first two iterates $\zeta_{s+1}^{-1}$ and $\zeta_{s+1}^{-1}$ of the subsequent block by adding the number of missing observations at the end of block $s$ to $\zeta_s^{B-1}$ and $\zeta_s^{B}$. This finishes our algorithm for adaptive learning rate tuning. Each agent $u$ is then supposed to run \alref{alg:dist_da_cvx_acc}~alongside \alref{alg:adaptive_learning_rate_acc} (with the same $\theta$ and $B$ described in \eref{eq:choice_B_theta}) to use $\eta_s(u)$ computed in \eref{eq:def_adlr} to update $x_{s+1}(u)$. The following theorem shows that with this adaptive learning rate tuning, we achieve $\otil(N\sqrt{\dtot}+N\sqrt{T}/(1-\sigma_2(W))^{1/4})$ without knowing $\dtot$.

\begin{algorithm}[ht]

\textbf{Initialize:} $\eta_{1}(u) = \frac{D}{L\sqrt{BT+ 3B^2}}, ~\zeta_1^{-1}(u) = \zeta_1^0(u)=0$.

\begin{algorithmic}
        \caption{Accelerated Gossip Routine for the Adaptive Learning Rate for Agent $u$}
    \label{alg:adaptive_learning_rate_acc}
    \For{$s = 1, 2, \dots, T/B$} 
    \For{$t = (s-1)B+1, \dots, sB$}
        \State  $k\leftarrow t-(s-1)B-1$.
        \State Update  $\zeta_{s}^{k+1}(u) $ using accelerated gossiping:
            \begin{align*} 
                \zeta_{s}^{k+1}(u) &= (1 + \theta) \sum_{v \in V} W(u,v)\zeta_{s}^{k}(v)- \theta z_{s}^{k-1}(u). 
                \end{align*}
        \State Send $\zeta_{s}^{k+1}(u) $ to every neighbor $v\in \mathcal{N}_u$.
        \EndFor
        \State Count missing observations at the end of the block 
        \[q_s(u) = |m_{sB+1}(u)|.\]
        \vspace{-1em}
        \State Update 
		\begin{align}
                \eta_{s+1}(u) = 
        \frac{D}{L \sqrt{BT+B\cdot\zeta_s^B(u)+3sB^2}}. \label{eq:def_adlr}
        \end{align}        
        \State Compute first iterates for next block:
     \begin{align*}
        \zeta_{s+1}^{-1}(u) &= \zeta_{s}^{B-1}(u) + q_{s}(u),\\
     \zeta_{s+1}^0(u) &= \zeta_{s}^B(u) + q_{s}(u).
     \end{align*}
    \EndFor
 \end{algorithmic}
\end{algorithm}

\begin{restatable}{theorem}{cvxadaptthm}\label{th:cvxadaptthm}
Assuming each agent \( u \in [N] \) runs an instance of \alref{alg:adaptive_learning_rate_acc} with a valid communication matrix \( W \) and parameters $\theta$ and $B$ defined in \eref{eq:choice_B_theta} together with an instance of \alref{alg:dist_da_cvx_acc} parametrized by the same $W$, $\theta$ and $B$ and using $\eta_{s}(u)$ computed by \alref{alg:adaptive_learning_rate_acc}. Then, under \asref{asm:bounded} and \ref{asm:Lipschitz}, the regret is bounded as
    \begin{align*}
        \Reg_T =   \widetilde{\mathcal{O}}\left( DLN \left( \sqrt{\dtot} + \frac{\sqrt{T}}{(1-\sigma_2(W))^{1/4}} \right) \right).
\end{align*}

\end{restatable}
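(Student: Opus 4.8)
The plan is to treat \tref{th:cvxadaptthm} as a controlled perturbation of \tref{th:fupperbound}: the decision updates of \alref{alg:dist_da_cvx_acc} are untouched, and only the learning rate changes, now produced by the parallel routine \alref{alg:adaptive_learning_rate_acc}. So I would start from the same decomposition as in the proof sketch of \tref{th:fupperbound}, namely $\Reg_T(u)\le\spadesuit+BL\cdot\clubsuit$, except that the reference FTRL point $\bar x_s=\argmin_{x\in\calX}\{\inner{\sum_v\sum_{\tau\in o_{(s-1)B+1}(v)}g_\tau(v),x}+\frac{N}{\eta_s(u)}\|x\|_2^2\}$ now uses the \emph{time-varying} rate $\eta_s(u)$ of \eref{eq:def_adlr}. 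The argument then has two parts: (i) show that $\eta_s(u)$ is, for every block $s$ and up to $\mathrm{polylog}(N,T)$ factors, a constant-factor approximation of the ideal fixed rate $\eta^\star=D/(L\sqrt{\dtot+BT})$ of \tref{th:fupperbound}, and in addition non-increasing in $s$; and (ii) re-run the bounds on $\spadesuit$ and $\clubsuit$ with this slowly-varying monotone sequence in place of the constant $\eta$.

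For (i) I would first analyze $\zeta$ exactly the way $z$ is analyzed when bounding $\clubsuit$ in \tref{th:fupperbound}. Inside block $s$, \alref{alg:adaptive_learning_rate_acc} runs $B$ steps of accelerated gossip on $\{\zeta_s^0(v)\}_v$, whose average is $\bar\zeta_s^0=\frac1N\sum_v\sum_{s'<s}q_{s'}(v)$, so \propref{lemma: acc_gossip} gives $|\zeta_s^B(u)-\bar\zeta_s^0|\le\sqrt{14}\,b^B\|\zeta_s^0-\bar\zeta_s^0\|_F$, and the choice of $B$ in \eref{eq:choice_B_theta} makes $\sqrt{14}Nb^B$ as small as desired. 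The subtlety is that $\zeta_s^0$ carries a growing cumulative sum, so I would bound $\|\zeta_s^0-\bar\zeta_s^0\|_F$ by telescoping across blocks so that the \emph{accumulated} gossip error stays $\order(BT)$ — this is the purpose of the slack term $3sB^2$ in \eref{eq:def_adlr}. The second ingredient is purely combinatorial: under the paper's WLOG assumption $t+d_t(u)\le T$ one has $\sum_{t=1}^T|m_t(u)|=\sum_\tau d_\tau(u)$, and since the number of multiples of $B$ in the half-open interval $[\tau,\tau+d_\tau(u))$ lies between $d_\tau(u)/B-1$ and $d_\tau(u)/B+1$, one gets the two-sided comparison $\sum_\tau d_\tau(u)-BT\le B\sum_{s'}|m_{s'B+1}(u)|\le\sum_\tau d_\tau(u)+BT$. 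Averaging over $u$, adding the $\order(BT)$ gossip error, and accounting for the $3sB^2$ term yields $BT+B\zeta_s^B(u)+3sB^2=\widetilde\Theta(\dtot+BT)$, hence $\eta_s(u)=\widetilde\Theta(\eta^\star)$; and since $\zeta_s^B(u)$ and $3sB^2$ are (up to the tiny gossip error) non-decreasing in $s$, so is $1/\eta_s(u)$.

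For (ii) I would re-derive the $\spadesuit$ bound with a changing regularizer. Splitting $\spadesuit$, as in the sketch, into the no-delay FTRL regret plus the delay discrepancy: monotonicity of $1/\eta_s(u)$ makes the regularizer-difference terms telescope, giving $\order(ND^2/\eta_{T/B}(u))+\sum_s\order(\eta_s(u)BNL^2)$ for the first part, while the discrepancy part is controlled block by block following the adaptive delayed-OCO analysis~\citep{mcmahan2014delay,gyorgy2021adapting} — charging each block's contribution to the amount of delay already reflected in $\eta_s(u)$ — and evaluates to $\order(\eta^\star NL^2(\dtot+BT))$ once $\eta_s(u)=\widetilde\Theta(\eta^\star)$ is used. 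The consensus term $\clubsuit$ is bounded exactly as in \tref{th:fupperbound} via \propref{lemma: acc_gossip}: the FTRL map with a quadratic regularizer is $\order(\eta_s(u))$-Lipschitz in its linear argument, so the per-block disagreement terms $\|x_s(\cdot)-\bar x_s\|_2$ scale with $\eta_s(u)=\widetilde\Theta(\eta^\star)$ and their sum is $\otil(\eta^\star TL)$, matching the fixed-rate bound. Substituting $\eta^\star=D/(L\sqrt{\dtot+BT})$ and $B=\order(\ln(N)/\sqrt{1-\sigma_2(W)})$ into $\spadesuit+BL\cdot\clubsuit$ then reproduces the bound of \tref{th:fupperbound} up to extra $\mathrm{polylog}(N,T)$ factors, which is the claim.

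The step I expect to be the main obstacle is the coupling inside (i): the estimator only ever sees the anonymous block-boundary counts $q_{s'}(u)=|m_{s'B+1}(u)|$ (no timestamps) fed through a growing cumulative sum, so I must simultaneously (a) keep the accelerated-gossip error from accumulating beyond $\order(BT)$ despite the growing signal, and (b) establish the combinatorial comparison $B\sum_{s'}|m_{s'B+1}(u)|\asymp\sum_t|m_t(u)|$ with only $\order(BT)$ slack \emph{on both sides}. Both directions are needed: the upper comparison forces $\eta_{T/B}(u)=\widetilde\Omega(\eta^\star)$, which keeps the $ND^2/\eta_{T/B}(u)$ term from blowing up, while the lower comparison forces $\eta_s(u)=\widetilde O(\eta^\star)$, which keeps the delay discrepancy term under control. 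Everything else is a careful but essentially routine re-run of the fixed-rate proof of \tref{th:fupperbound} with monotone learning rates.
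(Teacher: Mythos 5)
Your overall architecture matches the paper's (regret decomposition into $\spadesuit$ and $\clubsuit$, gossip analysis of $\zeta$ mirroring that of $z$, the combinatorial comparison of block-boundary missing counts to $\dtot$ via what is essentially \lref{lemma: block_total_delay}, and a self-normalized AdaGrad-type charging for the delay term). However, three of your claims fail, and two of them are exactly the points the paper flags as the non-trivial part of this theorem. First, $1/\eta_s(u)$ is \emph{not} non-decreasing: the gossip estimate satisfies only $|\hat M_s(u)-M_s|\le 3sB$ (\lref{lemma: bar_mt_acc}), so $\hat M_s(u)$ can drop by $\Theta(sB)$ between consecutive blocks, which the additive slack $3sB^2$ does not absorb ($3sB^2-3(s-1)B^2=3B^2$ only compensates a drop of $3B$, not $3(2s-1)B$). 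Hence the regularizer-difference terms do not telescope; the paper instead bounds $\sum_s\bigl(\tfrac{1}{\eta_s(u)}-\tfrac{1}{\eta_{s+1}(u)}\bigr)\|\tilde x_{s+1}(u)\|_2^2$ through $\sum_s\bigl|\tfrac{1}{\eta_s(u)}-\tfrac{1}{\eta_{s+1}(u)}\bigr|$, decomposing $|\hat M_s(u)-\hat M_{s-1}(u)|$ via the gossip error and then applying \lref{lemma: ada-grad}. Second, the uniform claim $\eta_s(u)=\widetilde\Theta(\eta^\star)$ is false: when $\dtot\gg BT$, early blocks have $BT+B\hat M_s(u)+3sB^2=\Theta(BT)\ll \dtot+BT$, so $\eta_s(u)\gg\eta^\star$. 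Only the one-sided bound $\eta_{s+1}(u)\le D/(L\sqrt{BT+BM_s})$ (from the slack term) holds uniformly, and the drift term must be closed by the self-normalized sum $\sum_s m_s/\sqrt{\sum_{l\le s}m_l}\lesssim\sqrt{\sum_l m_l}$ rather than by substituting $\eta^\star$; you gesture at this charging argument but then rely on the false uniform equivalence to conclude.

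Third, heterogeneous learning rates introduce a term you omit entirely: since $\eta_s(u)\ne\eta_s(v)$, the reference points $\bar x_s(u)$ and $\bar x_s(v)$ differ, and $\clubsuit$ acquires the component $\sum_s\sum_v\|\bar x_s(u)-\bar x_s(v)\|_2$ on top of the gossip disagreement in $z$. This is not "bounded exactly as in \tref{th:fupperbound}"; the paper needs a separate FTRL-stability argument (\lref{lem:ftrl_sc} with identical linear parts but different quadratic regularizers) giving $\|\bar x_s(u)-\bar x_s(v)\|_2\le 2D\,|\eta_s(u)-\eta_s(v)|/(\eta_s(u)+\eta_s(v))$, combined with the gossip bound on $|\hat M_s(u)-\hat M_s(v)|$ and \lref{lemma: ada-grad}, to show this sum is $\order(D\ln(BT+T^2))$. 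Without these three repairs the proof does not go through.
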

The proof of \tref{th:cvxadaptthm} is provided in the Appendix~\ref{app: Adaptive Algorithm with Unknown Total Delay}. We emphasize that our analysis is non-trivial, which includes (i) a careful bounding on the gossip-based estimation error of the adaptive learning rate compared to the optimal rate defined with respect to $\dtot$, and (ii) a more involved analysis of the FTRL updates, particularly due to possibly non-decreasing learning rates $\eta_s(u)$.

\subsection{Lower bound}
Finally, we complement our obtained upper bounds with the following $\Omega(N\sqrt{T}/(1-\sigma_2(W))^{1/4} + N\sqrt{\dtot})$ lower bound.
\begin{restatable}{theorem}{lowerbound}
\label{th:lowerbound} 
Let \( d \) be the constant feedback delay suffered by all agents $u\in[N]$ in the network. Then, there exists a graph \( G = ([N], E) \), with $N=2(M+1)$ where $M$ is an even integer, and a sequence of $L$-Lipschitz loss functions $\left\{ f_1(1, \cdot), \ldots, f_1(N, \cdot) \right\}, \ldots, \left\{ f_T(1, \cdot), \ldots, f_T(N, \cdot) \right\}$
such that any algorithm has to suffer regret at least:
\[
\Reg_T \  = \Omega\left( D L N \left(\sqrt{T}/(1-\sigma_2(W))^{1/4} +   \sqrt{dT}\right)\right),
\]
where \( W   = I - \frac{1}{\sigma_1(\text{Lap}(G))} \cdot \text{Lap}(G) \).
\end{restatable}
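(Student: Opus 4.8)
\textbf{Proof proposal for Theorem~\ref{th:lowerbound}.}
The plan is to construct a lower bound by combining two essentially orthogonal hard instances and then ``gluing'' them: one instance that forces the $\Omega(DLN\sqrt{dT})$ delay term, and one that forces the $\Omega(DLN\sqrt{T}/(1-\sigma_2(W))^{1/4})$ network term. Since the final bound is the maximum (equivalently, up to constants, the sum) of these two terms, it suffices to exhibit, for each term separately, a graph and a loss sequence on which every algorithm pays at least that much, and then argue that the two constructions can be realized on a single graph of the prescribed form $N = 2(M+1)$ with $M$ even, at the cost of only constant factors (e.g.\ by running the two sub-instances on disjoint halves of the coordinate space, or by taking the worse of the two on the same graph family). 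First I would fix the graph: a natural choice that makes $1-\sigma_2(W)$ small in a controlled way is a cycle or a ``barbell''/two-cluster graph on $N = 2(M+1)$ nodes, for which the spectral gap of $W = I - \mathrm{Lap}(G)/\sigma_1(\mathrm{Lap}(G))$ is known to scale polynomially in $1/M$, hence $1/(1-\sigma_2(W))^{1/4}$ is polynomial in $M$ and can be traded against $N$.

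For the delay term, I would reduce to the single-agent lower bound for OCO with a constant delay $d$: it is classical that with delay $d$ the minimax regret is $\Omega(DL\sqrt{dT})$ (one forces the learner to commit to $d$ actions before receiving any informative gradient, effectively splitting $T$ into $T/d$ ``super-rounds'' each of which behaves like a single hard OCO round over a horizon of length $d$, à la the standard $\Omega(\sqrt{T})$ construction with Rademacher sign losses). I would make all $N$ agents face identical such sequences (so the global loss is $N$ times the per-agent loss), and since the regret is measured against the global comparator $\sum_v f_t(v,\cdot)$ and every agent's own decision incurs the full $N$-fold loss, this yields $\Omega(DLN\sqrt{dT})$ regardless of any communication the agents do --- communication cannot help because all agents already have the same (delayed) information and the obstruction is temporal, not informational.

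For the network term, I would instantiate the argument already used in \citet{wan2024nearly} (and \citet{hosseini2013online}) for the non-delayed case: choose the two-cluster graph so that information must cross a sparse cut, place a hard OCO sub-instance where the loss of a ``distinguished'' coordinate is concentrated on one side of the cut and only revealed gradually, and show that agents on the far side are forced to play with stale/averaged gradient information whose consensus error is governed by $b^B \sim (1 - c\sqrt{1-\sigma_2(W)})^B$; optimizing the block length $B$ against $T$ produces the $\sqrt{T}/(1-\sigma_2(W))^{1/4}$ dependence, and summing the $\Omega(DL\sqrt{T}/(1-\sigma_2(W))^{1/4})$ contribution over the $N$ agents gives the claimed $\Omega(DLN\sqrt{T}/(1-\sigma_2(W))^{1/4})$. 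With $d=0$ this is exactly the known lower bound, so the main new content is verifying that introducing a uniform delay $d$ does not destroy it (it does not, since we can simply not use the delay in this part of the construction) and that both constructions coexist on the same $N=2(M+1)$ graph.

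The main obstacle I anticipate is the second part: making the network lower bound fully rigorous requires an information-theoretic argument (a Fano- or Le Cam-type two-point argument) showing that agents across the cut genuinely cannot recover the hidden direction fast enough, and carefully matching the graph's spectral gap $1-\sigma_2(W)$ to the combinatorial cut parameter $M$ so that the exponent $1/4$ comes out exactly; getting the constants and the $N$-dependence right (rather than, say, $N^{3/4}$) is delicate and is precisely where the construction of \citet{wan2024nearly} must be followed closely. A secondary, more routine obstacle is bookkeeping the ``anonymous delay'' assumption --- but since we only ever use a \emph{constant} delay $d$, anonymity is vacuous here and imposes no extra difficulty. I would therefore structure the proof as: (1) state the graph and compute $\sigma_2(W)$ as a function of $M$; (2) prove the $\Omega(DLN\sqrt{dT})$ bound by reduction to delayed single-agent OCO, replicated across agents; (3) prove the $\Omega(DLN\sqrt{T}/(1-\sigma_2(W))^{1/4})$ bound by adapting the decentralized lower bound of \citet{wan2024nearly}; (4) combine via $\max\{a,b\} \ge (a+b)/2$.
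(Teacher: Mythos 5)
Your proposal is correct in outline, but it assembles the bound differently from the paper. You build two separate hard instances --- one forcing $\Omega(DLN\sqrt{dT})$ by replicating a delayed single-agent OCO instance across all agents, one forcing $\Omega(DLN\sqrt{T}/(1-\sigma_2(W))^{1/4})$ via the construction of \citet{wan2024nearly} --- and combine them with $\max\{a,b\}\ge (a+b)/2$. The paper instead uses a \emph{single} unified instance on the cycle with $N=2(M+1)$: half the nodes carry zero losses, the other half carry Rademacher linear losses $\phi_k(x)=\varepsilon_k L\langle w,x\rangle$ that are held fixed for blocks of length $M+d$, so that the graph distance $M/2$ and the feedback delay $d$ are fused into one effective per-block delay for the distinguished agent $M/2+1$. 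One Khintchine-inequality application then yields $MLD\sqrt{(M+d)T}$, and $\sqrt{M+d}\gtrsim \sqrt{M}+\sqrt{d}$ together with $M\gtrsim N \gtrsim 1/\sqrt{1-\sigma_2(W)}$ delivers both terms at once. Your modular route buys a cleaner separation of the two phenomena and reuses the single-agent delayed lower bound as a black box; the paper's route buys a shorter proof with a single loss sequence and no gluing step. Two minor remarks: (i) your anticipated need for a Fano/Le~Cam argument is misplaced for this (general convex) case --- the paper's network obstruction is purely the combinatorial fact that no information can traverse more than one edge per round, so agent $M/2+1$ plays blind for $M/2+d$ rounds per block; the information-theoretic machinery is only needed for the strongly convex lower bound. (ii) Since the quantifier order in such statements is ``for any algorithm there exists a sign realization,'' your max-combination on a common cycle graph is legitimate, but you should state explicitly that both sub-instances live on the same graph so that $\sigma_2(W)$ is the same quantity in both bounds.
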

Compared to this lower bound, our obtained upper bounds are optimal in the dependence on $T$, $1 - \sigma_2(W)$, and $\dtot$, though there is still a gap of polynomial factors in the number of agents $N$.
We provide a proof sketch here and the full proof is deferred to Appendix~\ref{app: Lower Bound for the general convex case}. Our proof is adapted from the construction in \citet{wan2024nearly} which considered a carefully designed problem instance where the global loss is supported on one half of the graph, while the remaining half consists of agents with identically zero local loss functions. Focusing on an agent \( u \) in the latter group, we observe that its optimization problem effectively reduces to an instance of online linear optimization (OLO) with feedback delay. The total delay experienced by agent \( u \) in this setting consists of the constant delay \( d \), combined with a graph-dependent communication delay due to the network structure. The remaining proof builds on standard lower bound analysis for centralized OLO with delayed feedback.

\section{D-OCO with Strongly-Convex Loss Functions}
\label{sec: D-OCO with Strongly-Convex Loss Functions}
In this section, we consider the case where all loss functions satisfy $\alpha$-strongly convexity defined as follows. 
\begin{assumption}[strong convexity]\label{asm:stronglyconvex}
    For every \( t \le T \) and $v \in V$, we assume that \( f_t(v, \cdot) \) is $\alpha$-strongly convex: $\forall x, y \in \mathcal{X} $
    \begin{align*}
        f_{t}(v, y) \geq f_{t}(v, x)+\left\langle\nabla f_{t}(v, x), y-x\right\rangle+\frac{\alpha}{2}\|y-x\|_2^2.
    \end{align*}
\end{assumption}
In order to show an improved regret bound when losses are strongly convex in D-OCO with feedback delay, following the algorithm proposed in \citet{wan2024nearly} for strongly convex functions, we propose our algorithm \algnamesc~outlined in \alref{alg:dist_da_cvx_acc_sc}. Compared to \algname~shown in \alref{alg:dist_da_cvx_acc}, there are two key differences.  
First, the cumulative gradient \( y_s(u) \) are replaced by \( y_s^+(u) \), which includes an additional \( -\alpha B x_s(u) \) term (\eref{eq:yplus}); second, we do not need to apply a gossip-based communication among agents to tune the learning rate adaptively but only need \( \eta_{s+1}(u) = \frac{2}{\alpha s B} \) for all $u\in[N]$. The following theorem shows that \alref{alg:dist_da_cvx_acc_sc} achieves $\order((N\delta_{\max}+{N\ln N}/{\sqrt{1-\sigma_2(W)}})
(\ln T/ \alpha))$ regret.

\begin{algorithm}[ht]
\caption{Accelerated Decentralized Follow the Regularized Leader with Delayed Feedback  under Strong Convexity (\algnamesc) for Agent $u$.} 
\label{alg:dist_da_cvx_acc_sc}
\textbf{Initialize:} $x_1(u) =z_1^{-1}(u) = z_1^0(u)=\mathbf{0}$

\begin{algorithmic}
\For{$s = 1, 2, \dots,  T/B $} 
\State \(\eta_{s+1} = \frac{2}{\alpha s B}\)
    \For{$t = (s-1)B+1, \dots, sB$}
        \State Play $x_s(u)$ and set  $k \leftarrow t-(s-1)B-1$.
        \State Update $z_{s}^{k+1}(u) $ using accelerated gossiping:
            \begin{align*} 
                z_{s}^{k+1}(u) &= (1 + \theta) \sum_{v \in V} W(u,v)z_{s}^{k}(v)- \theta z_{s}^{k-1}(u).
            \end{align*}
         \State Send $z_{s}^{k+1}(u) $ and $x_s(u)$ to every  $v\in \mathcal N_u$. 
    \EndFor
    \State Compute action $x_{s+1}(u)$:
    \begin{align*}  x_{s+1}(u) = \argmin_{x \in \mathcal{X}} \langle z_{s}^B(u), x \rangle + \frac{1}{\eta_{s+1}} \| x \|_2^2.\end{align*} 
    \State Compute augmented aggregated gradients $y_s^+(u)$:
    \begin{align}
        y_s^+(u) = \sum_{\tau \in o_{sB+1}(u) \backslash o_{(s-1)B+1}(u)} g_{\tau}(u) - \alpha B x_{s}(u).
        \label{eq:yplus}
        \end{align}
    \State Compute first iterates for next block:
     \begin{align*}
        z_{s+1}^{-1}(u) &= z_{s}^{B-1}(u) + y_s^+(u),\\
     z_{s+1}^0(u) &= z_{s}^B(u) + y_s^+(u).\end{align*}
    \EndFor
\end{algorithmic}
\end{algorithm}

\begin{restatable}{theorem}{fstupperboundsc} \label{th:fstupperboundsc}
Assume each agent \( u \in V \) runs an instance of \algnamesc~  with a valid communication matrix \( W \) and  parameters $\theta$ and $B$ defined in \eqref{eq:choice_B_theta}.
Then, under \asref{asm:bounded}, \ref{asm:Lipschitz} and \ref{asm:stronglyconvex}, the global regret is bounded as
\begin{align*}
{\mathcal{O}}\left(  \frac{N(\alpha DL + L^2)}{\alpha} \left(  \delta_{\max} + \frac{\ln(N)}{\sqrt{1-\sigma_2(W)}}\right)  \ln{\left(T\right)}\right),
    \end{align*} 
    where $\delta_{\max}=\max_{\substack{t \in [T]}}\frac{1}{N}\sum_{u\in[N]}|m_t(u)|$. Moreover, when $d_t(u)=d(u)$ for all $t\in[T]$, define $\Bar{d} \triangleq \frac{1}{N} \sum_{v \in V} d(v)$ and the global regret is bounded as
    \begin{align*}
{\mathcal{O}}\left(  \frac{N(\alpha DL + L^2)}{\alpha} \left( \Bar{d} + \frac{\ln(N)}{\sqrt{1-\sigma_2(W)}}\right)  \ln{\left(T\right)}\right).
    \end{align*}
    
\end{restatable}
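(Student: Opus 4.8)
\textbf{Overall strategy.} The plan is to mirror the decomposition used in the proof sketch of \tref{th:fupperbound}, splitting $\Reg_T(u)$ into a "centralized delayed FTRL" term $\spadesuit$ (the regret an agent would incur if it had access to all agents' observed gradients but still suffered the delay) and a "network disagreement" term $\clubsuit$ (the cost of only having gossip-averaged approximations of the global gradient), and then exploit strong convexity in both pieces to upgrade the $\sqrt{T}$-type bounds to $\ln T$-type bounds. The key structural change over the convex case is that the augmented gradient $y_s^+(u)$ contains the $-\alpha B x_s(u)$ correction, so that the FTRL objective in block $s$ effectively regularizes with $\tfrac{1}{\eta_{s+1}}\|x\|_2^2 + \tfrac{\alpha}{2}\sum \|x - x_{s'}\|_2^2$-type curvature; with the choice $\eta_{s+1} = \tfrac{2}{\alpha s B}$ this makes the effective regularizer grow linearly in the number of blocks, which is exactly the ingredient that produces logarithmic regret in strongly convex FTRL.

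\textbf{Step 1: regret decomposition and the reference sequence.} Define $\bar x_s$ as the FTRL decision computed from the \emph{globally} observed augmented gradients up to block $s$, i.e. $\bar x_s = \argmin_{x\in\X}\big\{\langle \sum_{v\in V}\sum_{\tau\in o_{(s-1)B+1}(v)} g_\tau(v), x\rangle + \tfrac{N}{\eta_s}\|x\|_2^2 + \tfrac{\alpha B N}{2}\sum_{s'<s}\|x - x_{s'}\|_2^2\big\}$ (or the cleanest equivalent bookkeeping of the $-\alpha B x_{s'}$ terms). Then write, as in the convex sketch,
\begin{align*}
\Reg_T(u) \le \underbrace{\sum_{s}\sum_{t\in\calT_s}\sum_{v\in[N]}\langle g_t(v), \bar x_s - x^*\rangle}_{\spadesuit} + \underbrace{BL\sum_s\sum_{v\in[N]}\order\big(\|x_s(v)-\bar x_s\|_2 + \|x_s(u)-\bar x_s\|_2\big)}_{\clubsuit},
\end{align*}
where the $-\alpha B x_s(u)$ terms are absorbed into the strongly-convex comparison via \asref{asm:stronglyconvex}: replacing $f_t(v,x_t) - f_t(v,x^*)$ by the linearized surrogate $\langle g_t(v), x_t - x^*\rangle - \tfrac{\alpha}{2}\|x_t - x^*\|_2^2$, and matching the $\tfrac{\alpha}{2}\|\cdot\|^2$ quadratic against the curvature supplied by the augmented gradients.

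\textbf{Step 2: bounding $\spadesuit$ (the delayed strongly-convex FTRL term).} Further split $\spadesuit$ into (a) the regret of a non-delayed strongly convex FTRL run on the blocked, $N$-scaled losses, and (b) the "drift" caused by delay, i.e.\ $\sum_s \langle \text{(gradients pending at block } s), \bar x_s - \bar x_{s'}\rangle$-type terms. For (a), the standard strongly-convex FTRL / online gradient descent analysis with regularizer growing like $\tfrac{\alpha s B N}{2}\|\cdot\|^2$ gives $\order\big(\tfrac{(BNL)^2}{\alpha}\sum_s \tfrac{1}{sB}\big) = \order\big(\tfrac{B N L^2 \ln T}{\alpha}\big)$, and since there are $T/B$ blocks this recombines to the right order; one also picks up an $\order(N\alpha D L)$-flavored term from the initial regularization and the $\alpha B x_s$ shift, explaining the $\alpha DL + L^2$ numerator. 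For (b), the delay drift is controlled by the stability of strongly-convex FTRL: consecutive iterates satisfy $\|\bar x_s - \bar x_{s+1}\|_2 = \order\big(\tfrac{BL}{\alpha s B}\big) = \order\big(\tfrac{L}{\alpha s}\big)$, and the number of gradients that are "in flight" at block $s$ is controlled by $\sum_v |m_{sB+1}(v)| \le N\delta_{\max}$ (per-round, averaged), so the total drift telescopes to $\order\big(\tfrac{N L^2 \delta_{\max}\ln T}{\alpha}\big)$ — this is where the $\delta_{\max}$ (resp.\ $\bar d$ in the time-invariant case) enters. The time-invariant refinement $d_t(u)=d(u)$ is cleaner because then $|m_t(u)|\le d(u)$ deterministically, so $\tfrac1N\sum_u|m_t(u)| \le \bar d$ and the same chain goes through with $\delta_{\max}$ replaced by $\bar d$.

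\textbf{Step 3: bounding $\clubsuit$ (the gossip disagreement term).} Here I would reuse the machinery behind the convex proof sketch almost verbatim: by \propref{lemma: acc_gossip} and the choice of $B$ in \eref{eq:choice_B_theta}, after $B$ accelerated-gossip steps the residual $\|z_s^B(u) - \bar z_s\|$ contracts by a factor $\le \tfrac{1}{N\sqrt{14N}}$, so the per-block disagreement $\|x_s(v) - \bar x_s\|_2$ is bounded by $\eta_{s}$ times the (bounded) magnitude of the accumulated augmented gradients, i.e.\ $\order(\eta_s \cdot (sB)(NL + \alpha D))/\,N$-type bookkeeping, which after multiplying by $BL$ and summing over the $T/B$ blocks gives $\order\big(\tfrac{B N (\alpha DL + L^2)\ln N}{?}\big)$ — the key point being that $\eta_s \sim \tfrac{1}{\alpha s B}$ kills the linear-in-$s$ growth of the gradient sum, leaving a harmonic sum $\sum_s \tfrac{1}{s} = \order(\ln T)$, and the contraction constant being $\le \tfrac{1}{N\sqrt{14N}}$ converts the geometric-series prefactor into the $\tfrac{\ln N}{\sqrt{1-\sigma_2(W)}}$ factor (since $B = \Theta\big(\tfrac{\ln N}{\sqrt{1-\sigma_2(W)}}\big)$). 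Combining Steps 2 and 3 and taking the max over $u$ yields the claimed bound.

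\textbf{Main obstacle.} I expect the delicate part to be Step 2(b) together with the interaction between the $-\alpha B x_s(u)$ correction term and the delay: because the augmented gradient subtracts $\alpha B x_s(u)$ at block $s$ but the true quadratic penalty should be anchored at the \emph{played} iterate, one has to argue carefully that the accumulated correction still induces a valid, monotonically strengthening regularizer even when gradients arrive out of order and with an unknown, heterogeneous schedule — in particular that no "negative curvature" is ever effectively introduced. Handling the non-monotone availability of feedback while keeping the effective strong-convexity modulus at least $\alpha s B N$ at block $s$ (so that the $1/(\alpha s)$ stability bound survives) is the crux; the rest is a careful but routine assembly of standard strongly-convex FTRL estimates, the delay-to-missing-observations reduction, and the accelerated-gossip contraction.
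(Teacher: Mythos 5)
Your proposal is correct and follows essentially the same route as the paper's proof: the same $\spadesuit$/$\clubsuit$ decomposition with the negative $\tfrac{\alpha}{2}\|x_s(v)-x^*\|_2^2$ terms retained, a full-info/drift split of $\spadesuit$ where the drift is (missing gradient mass at block $s$) divided by the block-$s$ curvature $\alpha(s-1)B$ and the harmonic sum yields the $\delta_{\max}\ln T/\alpha$ term, and the accelerated-gossip contraction combined with $\eta_s\sim 1/(\alpha sB)$ for $\clubsuit$. One remark on your stated ``main obstacle'': it does not actually arise, because the $-\alpha B x_s(u)$ correction in \eref{eq:yplus} is appended deterministically at the end of every block regardless of which gradients have arrived, so the effective proximal regularizer $\tfrac{\alpha B}{2N}\sum_{l<s}\sum_v\|x-x_l(v)\|_2^2$ grows synchronously and only the linear (gradient) part of the feedback is delayed; the strong-convexity modulus at block $s$ is exactly $\alpha(s-1)B$ with no risk of out-of-order curvature.
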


The full proof is deferred to Appendix~\ref{app: D-OCO with Strongly-Convex Loss
Functions}. To our knowledge, there are no previous results for D-OCO under strongly convex losses with time- and agent-varying delays. Several remarks are as follows. First, to interpret the delay-dependent term $\delta_{\max}$, it is not hard to see that $\delta_{\max} \leq \frac{1}{N}\sum_{n\in[N]}\max_{t\in[T]}d_{t}(u)$, which is the maximum delay averaged over all agents. Following \citet{qiu2025exploiting}, we can also show that $\delta_{\max}\leq \sqrt{N\dtot}$. Second, reducing to the case where the delay is time-invariant, we achieve an improved bound compared to \citet{CaoB22}, which obtained a regret bound of $\order(\frac{N\Bar{d}}{\alpha} \ln T + \frac{N \sqrt{N}}{1-\sigma_2} \frac{\ln T}{\alpha})$. We also recover the bound proven in \citet{wan2024nearly} when $d(u)=0$ for all $u\in[N]$. Finally, in Appendix~\ref{app:strong_lower_bound}, we also provide a lower bound of $\Omega\left( ( d + 1/(1-\sigma_2(W))^{1/2})\cdot N\alpha \ln(T/d) \right)$ when $d_t(u)=d$ for all $t\in[T]$ and $u\in[N]$, and all loss functions are $\alpha D$-Lipschitz and $\alpha$-strongly convex, showing that our upper bound is tight with respect to $T$, $\delta_{\max}$ (since $\delta_{\max}=d$ in this case), $N$ and $1-\sigma_2(W)$ up to logarithmic factors.

\section{Numerical Experiments}
\begin{figure*}[t]
\centering
\includegraphics[width=0.3\textwidth]{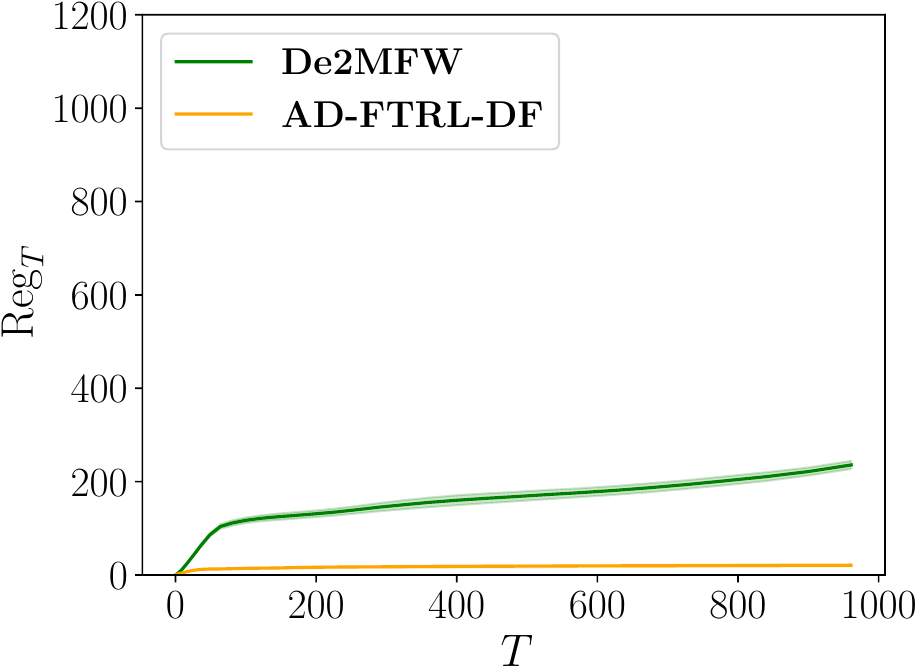}
\includegraphics[width=0.3\textwidth]{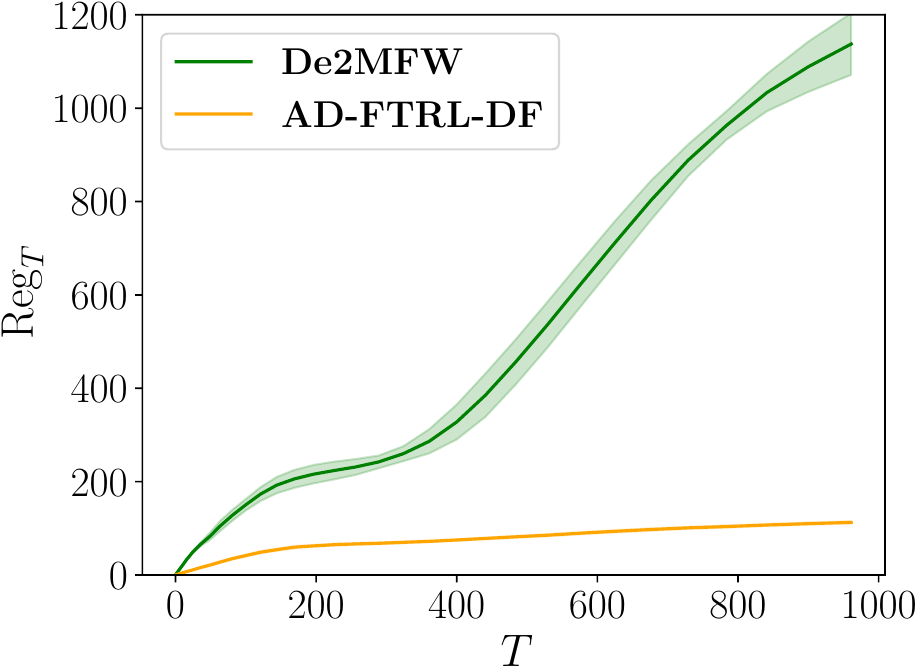}
\includegraphics[width=0.3\textwidth]{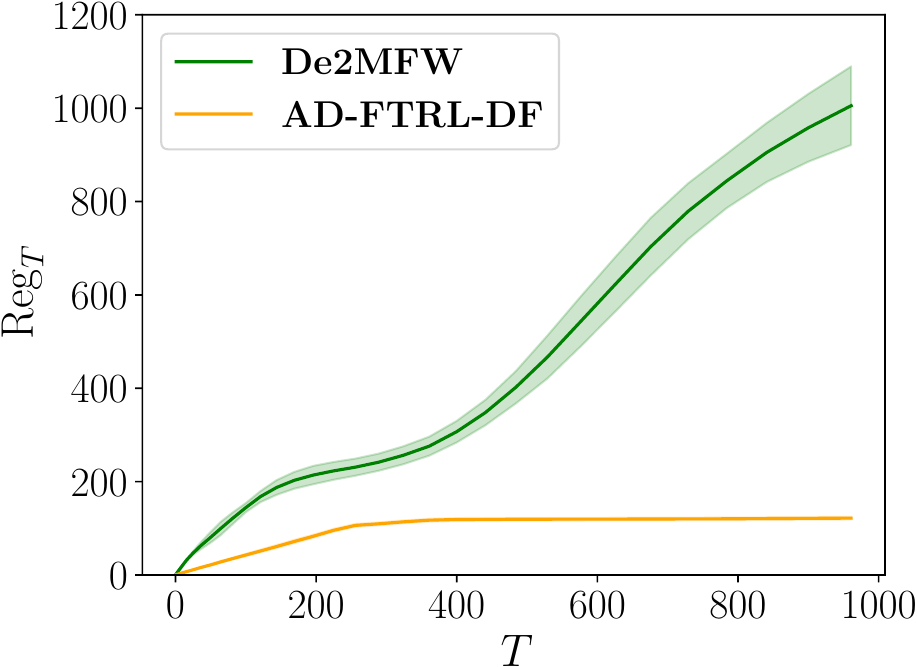}

\includegraphics[width=0.3\textwidth]{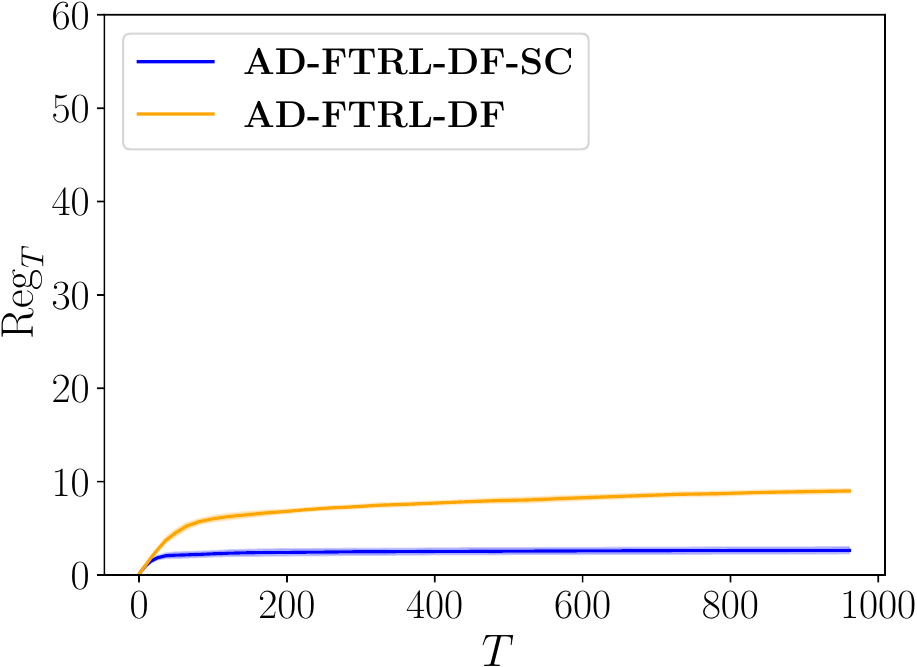}
\includegraphics[width=0.3\textwidth]{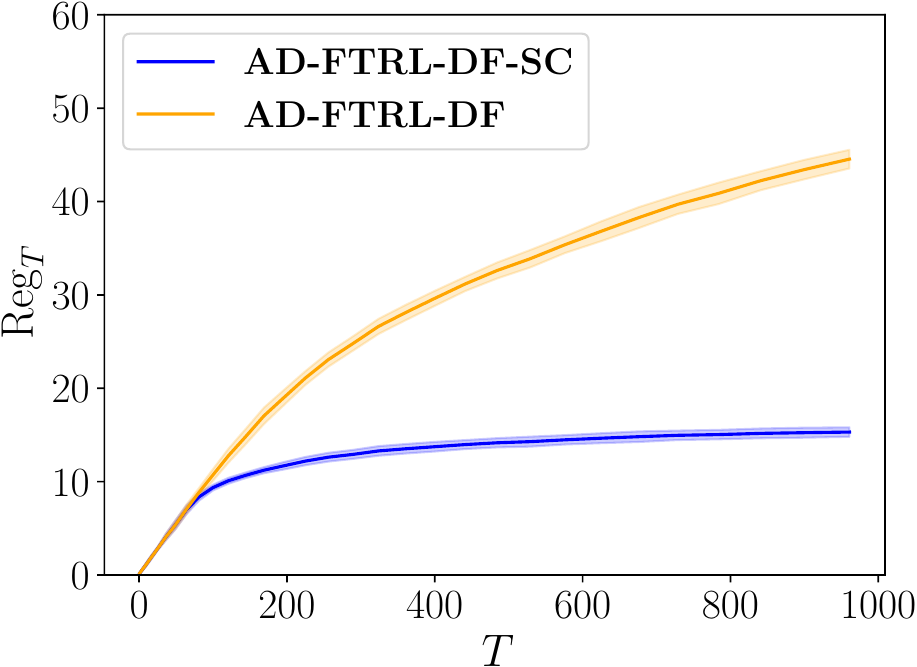}
\includegraphics[width=0.3\textwidth]{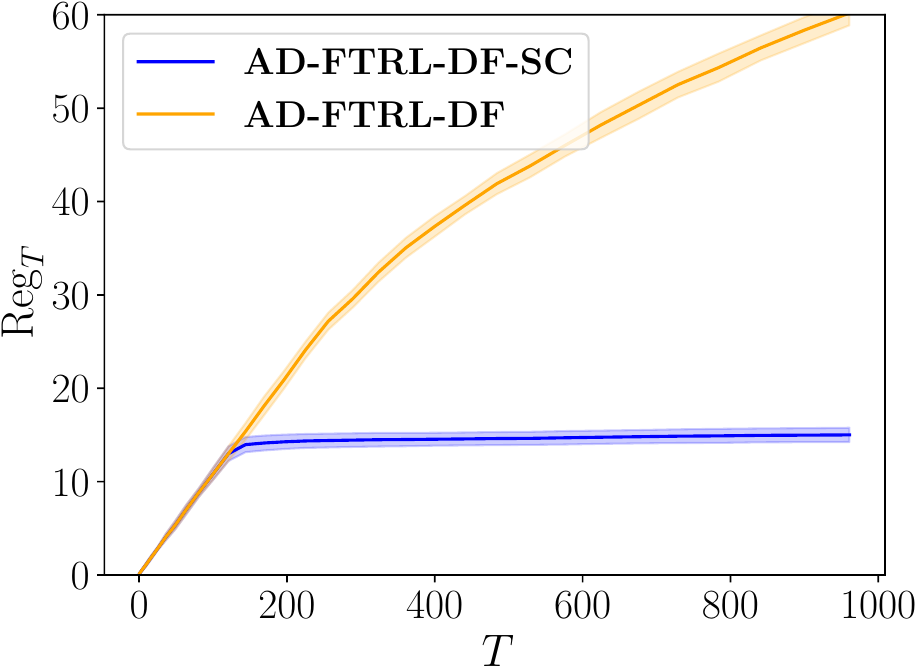}
\caption{Uniform delays. Comparison with relevant baselines across three network topologies—complete (left), grid (middle), and cycle (right)—under convex losses (top row) and strongly convex losses (bottom row).}
\label{fig:exps1}
\end{figure*}

\begin{figure*}[t]
\centering
\includegraphics[width=0.3\textwidth]{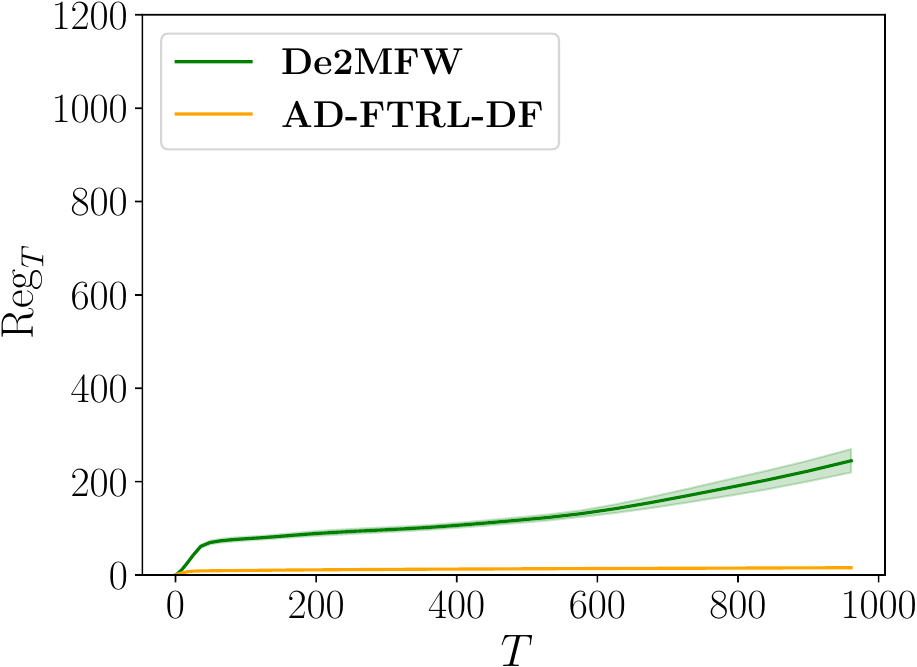}
\includegraphics[width=0.3\textwidth]{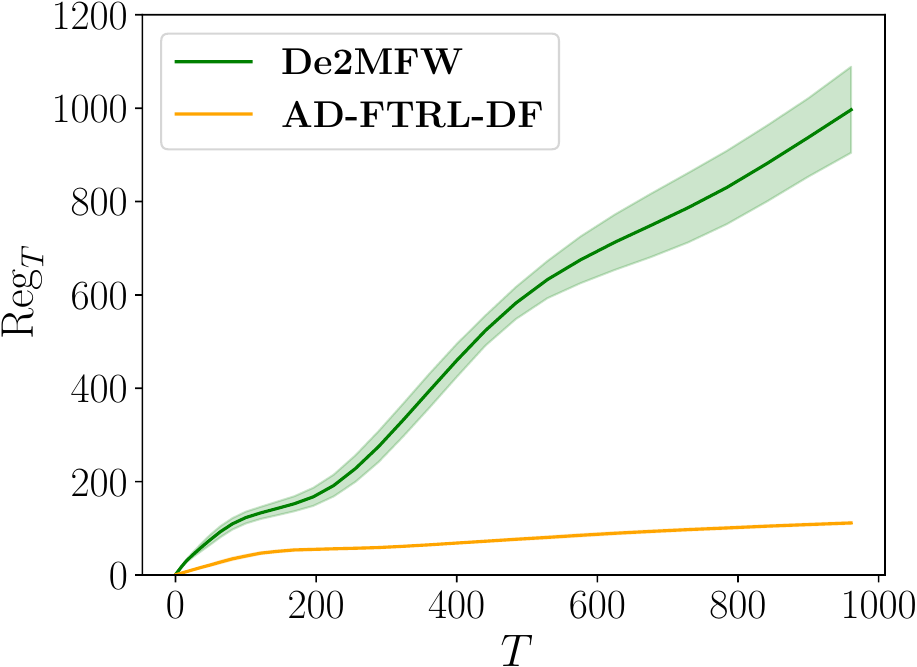}
\includegraphics[width=0.3\textwidth]{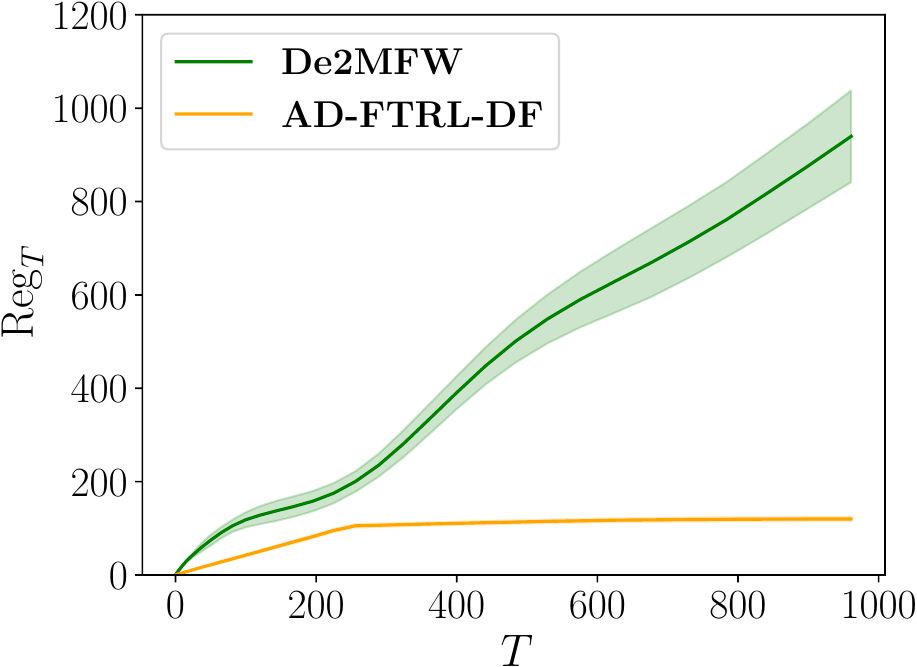}

\includegraphics[width=0.3\textwidth]{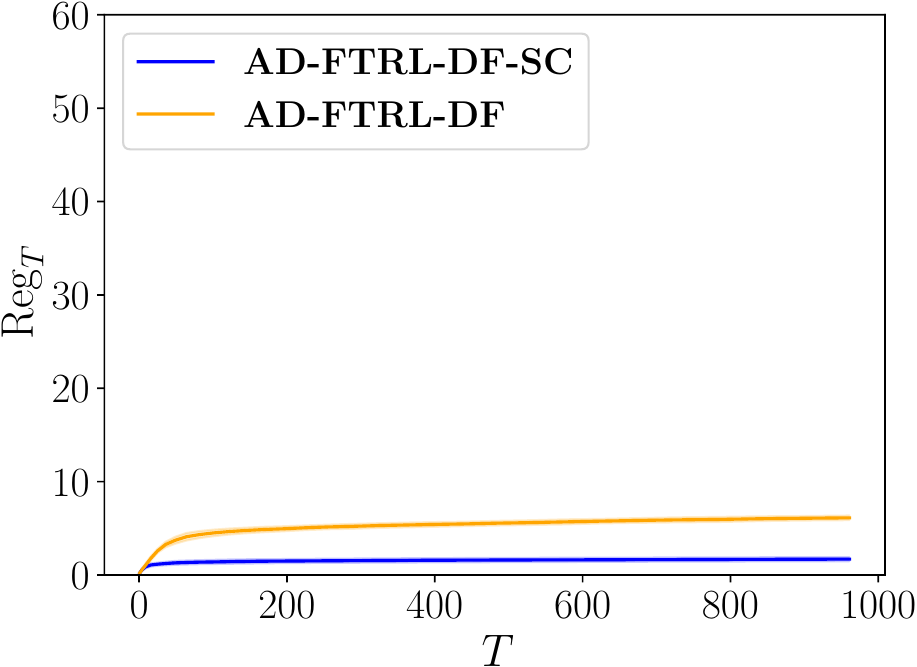}
\includegraphics[width=0.3\textwidth]{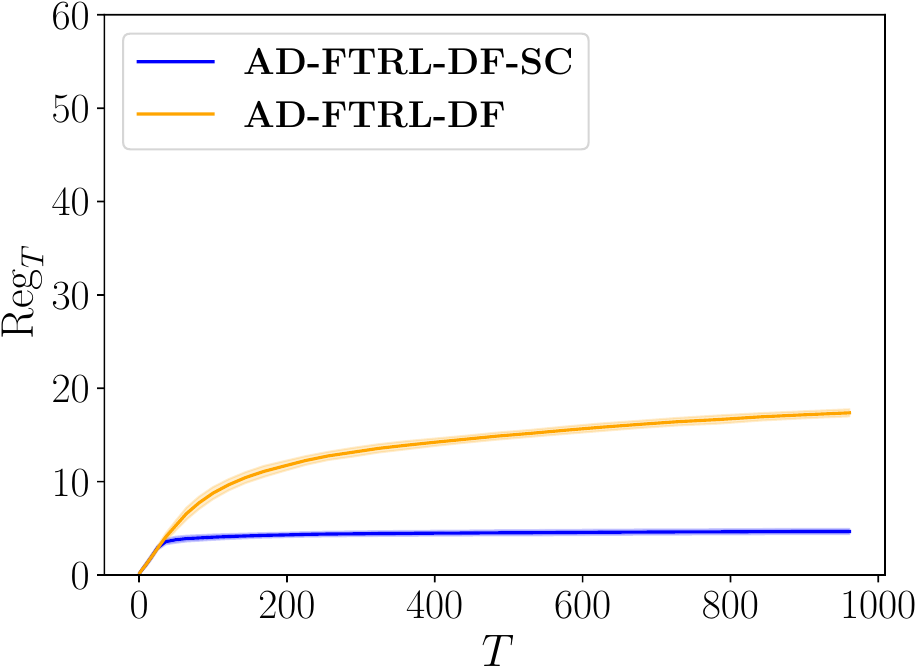}
\includegraphics[width=0.3\textwidth]{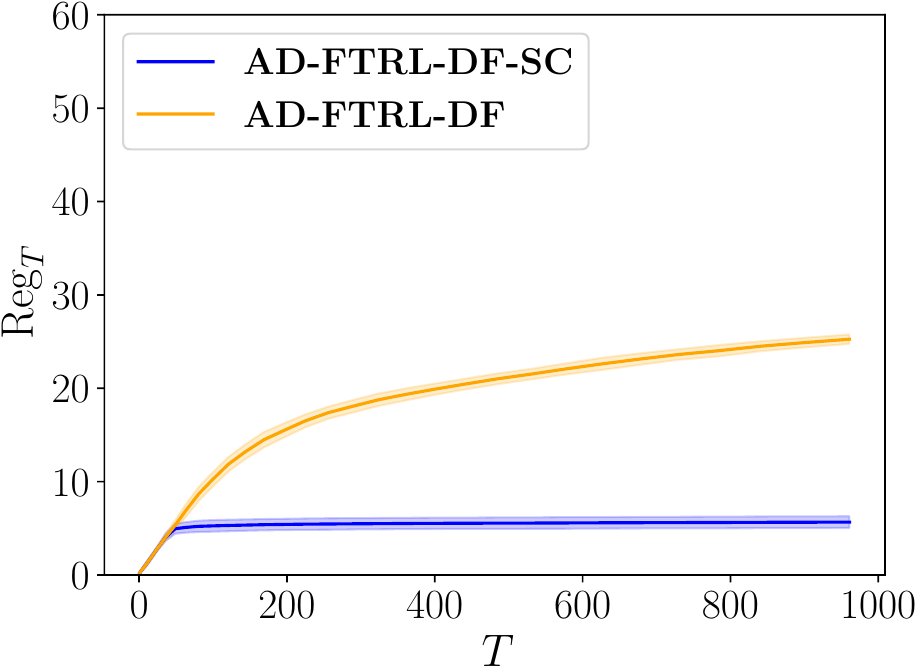}
\caption{Geometric delays. Comparison with relevant baselines across three network topologies—complete (left), grid (middle), and cycle (right)—under convex losses (top row) and strongly convex losses (bottom row).}
\label{fig:exps2}
\end{figure*}

In this section, we evaluate the performance of our proposed algorithms in the delayed D-OCO setting, using two representative sets of loss functions that capture the convex and strongly convex regimes, respectively.
 
 \textbf{Setting.}  To show the algorithms' performances under the general convex loss case, following the experiment setup used in~\citet{yuan2020distributed}, we define the local losses for all agents \(v\in V\) as
\begin{equation}
 f_t(v,x) = \tfrac{1}{2}\bigl(\langle w_t(v), x\rangle - y_t(v)\bigr)^2, \label{eq:exp_cvx}   
\end{equation}
where each feature vector \(w_t(v)\in \R^{10}\) has independent coordinates drawn uniformly from \([-1,1]\). We set the agents’ decision space to be \(\mathcal{X}=\{x\in\R^{10}, \|x\|_2\leq 2\}\). Labels are generated as follows: for \(1 \le v \leq  N/2\), \(y_t(v) = \varepsilon_t(v)\), and for the remaining agents, we have $y_t(v) = \langle w_t(v), \mathbf{1}\rangle + \varepsilon_t(v)$
with \(\varepsilon_t(v)\) being zero‑mean, unit‑variance Gaussian noise clipped to \([-1,1]\).
 For strongly convex losses, we augment each local loss with an \(\ell_2\)-regularizer:
\begin{equation}
f_t(v,x) = \tfrac{1}{2}\bigl(\langle w_t(v), x\rangle - y_t(v)\bigr)^2 + \tfrac{1}{2}\|x\|_2^2. \label{eq:exp_strcvx} 
\end{equation}
We evaluate  the performance of our algorithms and baselines on three network topologies with $N=36$ nodes ---  the \emph{complete graph}, in which all agents are connected to one another; the \emph{grid}, in which agents are organized in a two-dimensional lattice and communicate with their immediate horizontal and vertical neighbors; and the \emph{cycle}, where each agent  $v$ is connected to $v-1$ and $v+1$. We use \eref{eqn:def_w} with $c = 1/N$ to set the communication matrix $W$. Therefore, direct calculation shows that $1/(1-\sigma_2(W))^{1/4}$ associated to each of the above topologies is respectively $1$, $3.40$ and $5.87$. We consider two delayed environments. In the first, each local delay \(d_{t}(v)\) is independently and uniformly drawn from \(\{0,1,\dots,50\}\). In the second, each local delay \(d_{t}(v)\) is drawn independently from a geometric distribution with success probability $0.1$.

All experiments are conducted over \(T = 1000\) rounds. Reported results are computed by averaging the relevant performance metrics over $20$ independent runs (trials) with different random seeds.

\textbf{Baselines.} For the general convex loss setting, we compare our algorithm \algname~(\alref{alg:dist_da_cvx_acc}) with adaptive learning rate tuning (\alref{alg:adaptive_learning_rate_acc}) against De2MFW \citep{nguyen2024handling}. In the strongly convex loss setting, we compare our algorithm \algnamesc~(\alref{alg:dist_da_cvx_acc_sc}) against \algname~(\alref{alg:dist_da_cvx_acc}) with adaptive learning rate tuning.

\textbf{Results.}~\fref{fig:exps1} and~\fref{fig:exps2} present the regret curves of our algorithms and the aforementioned baselines, where the shaded regions denote the standard deviation over 20 trials. The losses are defined in \eref{eq:exp_cvx} and \eref{eq:exp_strcvx}, and results are reported for all three topologies under two delayed environments. From the plots, we observe that for the losses defined in \eref{eq:exp_cvx}, \algname~ with an adaptive learning rate substantially outperforms De2MFW across all network topologies.
In the strongly convex loss case, \algnamesc~ achieves consistently lower regret than the baseline \algname, which matches our theoretical guarantees. Comparing among different network topologies, for both convexity regimes, the regret is significantly higher with the grid and cycle graph compared to the one with the complete graph. This is consistent with the regret dependence on the reciprocal of a power of the spectral gap since the associated spectral gap for complete graph is smaller than that for grid and cycle graph.

\bibliographystyle{icml2025}

\bibliography{ref}


\appendix

\newpage
\section{Preliminary Results}
In this section, we show several auxiliary lemmas that will be helpful throughout the paper.
\subsection{General properties of FTRL}
The following FTRL stability lemma bounds the distance between two FTRL iterates with different linear losses and possibly different regularizers.
It also shows a simplified upper bound in the case when two decisions are made by using FTRL with the same regularizer.

\begin{lemma}[Lemma A.2 of \citet{qiu2025exploiting}]
\label{lem:ftrl_sc}
    Let $\cX \subseteq \R^n$ be closed and non-empty.
    Let $A_1,A_2\succeq 0$ be two positive semidefinite matrices, $b_1,b_2\in \R^n$, and $c_1,c_2\in \R$.
    Define $\psi_1(x) = x^\top A_1x + b_1^\top x + c_1$ and $\psi_2(x) = x^\top A_2x + b_2^\top x + c_2$.
    Suppose that $z_1 \in \argmin_{x\in\calX} \left\{\inner{w_1,x}+\psi_1(x)\right\}$ and $z_2 \in \argmin_{x\in\calX}\left\{\inner{w_2,x}+\psi_2(x)\right\}$.
    Then, we have
    \begin{align*}
        \|z_1 - z_2\|_{A_1}^2 + \|z_1 - z_2\|_{A_2}^2 \leq \inner{w_1 - w_2, z_2 - z_1} + \left(\psi_1(z_2)-\psi_2(z_2)\right) - \left(\psi_1(z_1)-\psi_2(z_1)\right) \;.
    \end{align*}
    Furthermore, if $\psi_1(x)=\psi_2(x)=x^\top Ax+b^\top x+c$ with positive definite $A \succ 0$, we have
    \begin{align*}
        \|z_1-z_2\|_{A}\leq \frac{1}{2}\|w_1-w_2\|_{A^{-1}},
    \end{align*}
    where $\|x\|_A=\sqrt{x^\top Ax}$ denotes the Mahalanobis norm induced by a positive semi-definite matrix $A$.
\end{lemma}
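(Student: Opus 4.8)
The plan is to treat each regularized objective as a quadratic and combine the exact second-order Taylor expansion with first-order optimality over $\cX$. I would write $F_i(x) = \inner{w_i, x} + \psi_i(x)$ for $i\in\{1,2\}$, so that each $F_i$ is a quadratic with Hessian $2A_i$ and gradient $\nabla F_i(x) = w_i + b_i + 2A_i x$, and since $z_i$ minimizes $F_i$ over the convex set $\cX$, I would invoke the variational inequalities $\inner{\nabla F_1(z_1), z_2 - z_1} \ge 0$ and $\inner{\nabla F_2(z_2), z_1 - z_2} \ge 0$.

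First I would record the exact three-point identity for a quadratic: for all $x,y$, $F_i(y) - F_i(x) = \inner{\nabla F_i(x), y - x} + (y-x)^\top A_i (y-x)$, which holds because the quadratic remainder in Taylor's expansion is exact. Instantiating it with $(x,y)=(z_1,z_2)$ for $F_1$ and $(x,y)=(z_2,z_1)$ for $F_2$, and dropping the non-negative first-order terms using optimality, gives $F_1(z_2) - F_1(z_1) \ge \|z_1 - z_2\|_{A_1}^2$ and $F_2(z_1) - F_2(z_2) \ge \|z_1 - z_2\|_{A_2}^2$. Adding these and substituting $F_i = \inner{w_i,\cdot} + \psi_i$, the linear parts collapse to $\inner{w_1 - w_2, z_2 - z_1}$ and the leftover regularizer terms regroup into $\big(\psi_1(z_2) - \psi_2(z_2)\big) - \big(\psi_1(z_1) - \psi_2(z_1)\big)$, which is exactly the first claimed bound.

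For the simplified bound, I would set $\psi_1 = \psi_2 = \psi$ and $A := A_1 = A_2 \succ 0$, so the $\psi$-difference terms cancel and the first bound becomes $2\|z_1-z_2\|_A^2 \le \inner{w_1 - w_2, z_2 - z_1}$. After handling the trivial case $z_1 = z_2$ separately, I would bound the right-hand side by the generalized Cauchy--Schwarz inequality $\inner{w_1-w_2, z_2-z_1} \le \|w_1-w_2\|_{A^{-1}}\,\|z_2-z_1\|_A$ and divide through by $\|z_1-z_2\|_A > 0$ to conclude $\|z_1-z_2\|_A \le \tfrac12\|w_1-w_2\|_{A^{-1}}$.

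I do not expect a real obstacle here: the argument is short and computational, so the work is mostly careful bookkeeping of the linear and quadratic terms. The only genuine ingredient is that $\cX$ is convex, which makes the first-order variational inequality valid (this holds in all our applications, e.g. by \asref{asm:bounded}); note that no uniqueness of the minimizers is needed, since the inequalities hold for any valid choice of $z_1,z_2$, and that the first part remains valid when $A_i$ is merely positive semidefinite, whereas the second part genuinely requires $A \succ 0$ for the dual norm $\|\cdot\|_{A^{-1}}$ to be well defined.
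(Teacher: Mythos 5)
Your proof is correct. The paper does not actually prove this lemma---it is imported verbatim as Lemma~A.2 of \citet{qiu2025exploiting}---so there is no in-paper argument to compare against; your route (exact quadratic expansion of each objective, the first-order variational inequality at each minimizer, then cancellation of the linear and regularizer terms, followed by generalized Cauchy--Schwarz for the second claim) is the standard one and every step checks out. Your side remark is also a genuine catch: the statement only assumes $\calX$ is closed and non-empty, but the variational inequality $\inner{\nabla F_1(z_1), z_2 - z_1}\ge 0$ needs convexity of $\calX$, without which the conclusion can fail (e.g.\ $\calX=\{0,1\}\subset\R$, $\psi_1=\psi_2=x^2$, $w_1=w_2=-1$ admits $z_1=0$, $z_2=1$, giving left-hand side $2$ against right-hand side $0$); convexity does hold wherever the lemma is invoked here, via \asref{asm:bounded}.
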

\subsection{Basic analysis facts}
\begin{lemma}[Lemma~4.13 in \cite{orabona2019modern}]
\label{lemma: ada-grad}
    Let $a_0 \geq 0$ and let $f:[0,+\infty) \rightarrow[0,+\infty)$ be a non-increasing function. Then

\begin{equation*}
    \sum_{t=1}^T a_t f\left(a_0+\sum_{i=1}^t a_i\right) \leq \int_{a_0}^{\sum_{t=0}^T a_t} f(x) \mathrm{d} x.
\end{equation*}
\end{lemma}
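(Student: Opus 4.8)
The plan is to prove the bound by a standard left-endpoint-versus-area comparison, exploiting monotonicity of $f$ on each of the $T$ consecutive subintervals into which the partial sums $a_0 + \sum_{i=1}^t a_i$ partition the range of integration. Implicitly we use $a_t \ge 0$ for all $t \ge 1$ (so that the partial sums are non-decreasing); note also that a non-increasing function $f:[0,\infty)\to[0,\infty)$ is bounded and monotone on every bounded interval, hence Riemann integrable there, so the integral on the right-hand side is well defined.

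First I would fix $t \in \{1,\dots,T\}$ and set $S_{t-1} = a_0 + \sum_{i=1}^{t-1} a_i$ and $S_t = a_0 + \sum_{i=1}^{t} a_i$, so that $S_t - S_{t-1} = a_t \ge 0$. Since $f$ is non-increasing, for every $x \in [S_{t-1}, S_t]$ we have $f(x) \ge f(S_t)$. Integrating this pointwise inequality over $[S_{t-1}, S_t]$ gives
\[
\int_{S_{t-1}}^{S_t} f(x)\,\mathrm{d}x \;\ge\; (S_t - S_{t-1})\, f(S_t) \;=\; a_t\, f\!\Big(a_0 + \sum_{i=1}^{t} a_i\Big),
\]
which is exactly the $t$-th summand on the left-hand side (and the inequality degenerates to $0 \ge 0$ when $a_t = 0$).

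Next I would sum this inequality over $t = 1, \dots, T$. Because the intervals $[S_0, S_1], [S_1, S_2], \dots, [S_{T-1}, S_T]$ are consecutive and non-overlapping with $S_0 = a_0$ and $S_T = a_0 + \sum_{i=1}^T a_i = \sum_{t=0}^T a_t$, additivity of the integral yields $\sum_{t=1}^T \int_{S_{t-1}}^{S_t} f(x)\,\mathrm{d}x = \int_{a_0}^{\sum_{t=0}^T a_t} f(x)\,\mathrm{d}x$, and combining with the per-term bound completes the proof.

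There is essentially no obstacle here: the only thing to be careful about is stating the standing nonnegativity assumption on the $a_t$ (without which the "intervals" would not be intervals and the telescoping of limits of integration would fail) and noting integrability of $f$, after which the argument is a one-line monotone comparison summed over blocks.
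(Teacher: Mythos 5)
Your proof is correct and is exactly the standard left-endpoint comparison argument; the paper itself does not reprove this lemma but simply cites Lemma~4.13 of \cite{orabona2019modern}, whose proof proceeds the same way. Your remark that nonnegativity of the $a_t$ (implicit in the lemma's intended use) is needed for the partial sums to be ordered is a fair and accurate observation.
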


\subsection{Facts on the delay}
The following lemma illustrates the relationship between the cumulative number of missing observations at the end of each block and total delay, which will be useful in later analysis.
\begin{lemma}
\label{lemma: block_total_delay}
For any $u\in V$ and any fixed integer $B>0$ with $T/B$ an integer, 
    \[B\sum_{s=1}^{T/B}  \left|m_{s B+1}(u)\right|\leq \sum_{s=1}^T d_s(u) + BT.\]
Consequently, we also have for all $s\in[T/B]$.
 \[BM_s \leq \frac{1}{N} \sum_{s=1}^T  \sum_{v \in V}d_s(u) + BT = \dtot + BT,\]
 where $M_s \triangleq \frac{1}{N} \sum_{u \in V} |m_{sB+1}(u)|$ for all $s\in[T/B]$.
\end{lemma}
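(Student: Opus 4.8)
\textbf{Proof proposal for Lemma~\ref{lemma: block_total_delay}.}

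The plan is to prove the first inequality by a counting argument that attributes each unit of ``missingness'' counted on the left-hand side to either a unit of delay $d_s(u)$ or to the additive slack $BT$, and then to derive the second inequality by averaging over $v\in V$ and using the definition of $\dtot$. First I would fix an agent $u\in V$ and unpack the definition $m_{sB+1}(u) = [sB]\setminus o_{sB+1}(u) = \{\tau\le sB : \tau + d_\tau(u) \ge sB+1\}$, i.e.\ the set of rounds $\tau$ that have already happened by the end of block $s$ but whose gradient has not yet been received. Thus $|m_{sB+1}(u)| = \sum_{\tau=1}^{sB} \mathds{1}\{\tau + d_\tau(u) \ge sB+1\}$. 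The core step is then to swap the order of summation in $\sum_{s=1}^{T/B}|m_{sB+1}(u)|$: for a fixed round $\tau\in[T]$, I count the number of block-endpoints $sB+1$ (with $s\in[T/B]$) for which $\tau$ is still missing, namely the number of $s$ with $sB \ge \tau$ and $sB < \tau + d_\tau(u)$, equivalently $\tau/B \le s < (\tau+d_\tau(u))/B$. The number of integers $s$ in this half-open interval is at most $d_\tau(u)/B + 1$.

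Carrying this out, $\sum_{s=1}^{T/B}|m_{sB+1}(u)| \le \sum_{\tau=1}^{T}\big(d_\tau(u)/B + 1\big) = \tfrac1B\sum_{\tau=1}^T d_\tau(u) + T$, and multiplying through by $B$ gives exactly $B\sum_{s=1}^{T/B}|m_{sB+1}(u)| \le \sum_{s=1}^T d_s(u) + BT$, which is the first claim (after relabelling the summation index $\tau$ as $s$). For the second claim I would fix $s\in[T/B]$ and bound $B M_s = \tfrac{B}{N}\sum_{u\in V}|m_{sB+1}(u)| \le \tfrac{1}{N}\sum_{u\in V} B\sum_{s'=1}^{T/B}|m_{s'B+1}(u)|$, since a single term in the $s'$-sum is at most the whole nonnegative sum; then apply the first inequality to each $u$ to get $\le \tfrac{1}{N}\sum_{u\in V}\big(\sum_{s'=1}^T d_{s'}(u) + BT\big) = \tfrac1N\sum_{s'=1}^T\sum_{u\in V} d_{s'}(u) + BT = \dtot + BT$, using the definition $\dtot = \tfrac1N\sum_{t\in[T]}\sum_{u\in V} d_t(u)$.

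I do not expect a serious obstacle here; the only place to be careful is the interval-counting estimate, where one must check that the half-open interval $[\tau/B,\ (\tau+d_\tau(u))/B)$ has length $d_\tau(u)/B$ and hence contains at most $\lceil d_\tau(u)/B\rceil \le d_\tau(u)/B + 1$ integers, and that the constraint $s\le T/B$ does not need to be invoked because we are upper-bounding. A minor bookkeeping point is the assumption $T/B\in\mathbb N$, which is used only so that the block decomposition is exact and $sB$ ranges over the correct endpoints; this is already granted in the statement. One should also note the harmless index clash in the displayed second inequality (the bound sum is over $s=1,\dots,T$ while $s$ also indexes the block), which I would resolve by renaming to a fresh variable in the write-up.
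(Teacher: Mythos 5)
Your proposal is correct and follows essentially the same argument as the paper: both count, for each round $\tau$, how many block endpoints $sB+1$ the gradient $g_\tau(u)$ remains missing at, bound this count by $d_\tau(u)/B+1$ (the paper phrases it as $\lceil d_\tau(u)/B\rceil$ consecutive blocks), and sum over $\tau$; the second claim then follows in both cases by averaging over agents and bounding a single nonnegative term $BM_s$ by the full sum $B\sum_{s'}M_{s'}$. Your write-up is, if anything, slightly more careful than the paper's on the interval-counting step and on making the ``single term $\le$ whole sum'' step explicit.
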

\begin{proof}
Note that each gradient $g_t(u)$ that is delayed by $d_t(u)$ remains unobserved for $d_t(u)$ rounds, and therefore contributes to $|m_{kB+1}(u)|$ for exactly $\lceil d_t(u)/B \rceil$ consecutive blocks. Summing over all $t \in [T]$, we obtain that
$$
B \sum_{k=1}^{T/B} |m_{kB+1}(u)| = B \sum_{t=1}^T \left\lceil \frac{d_t(u)}{B} \right\rceil \leq \sum_{t=1}^T \left( d_t(u) + B \right) = \sum_{t=1}^T d_t(u) + BT.
$$
This proves the first inequality. To obtain the bound on $M_s$, since $
M_s = \frac{1}{N} \sum_{u \in V} |m_{sB+1}(u)|$, 
summing both sides over $s = 1, \dots, T/B$ and applying the bound above lead to
$$
B\sum_{s=1}^{T/B} M_s = \frac{B}{N} \sum_{s=1}^{T/B} \sum_{u \in V} |m_{sB+1}(u)| \leq \frac{1}{N} \sum_{u \in V} \left( \sum_{t=1}^T d_t(u) + BT \right) = \dtot + BT.
$$  
\end{proof}

\newpage

\section{Omitted Details in Section~\ref{sec: D-OCO with General Convex Loss Functions}}

\subsection{Non-Adaptive Algorithm with Known Total Delay}
\label{app: Non-Adaptive Algorithm with Known Total Delay}
In this section, we show the omitted details in Section~\ref{sec: Non-Adaptive Algorithm with Known Total Delay}. For completeness, we first restate the theorem and then present its proof. After establishing the main result, we proceed to prove several auxiliary lemmas that will be used in the algorithm’s regret analysis.
\fstupperbound*

\begin{proof}
We start the proof with some notations. We define
\begin{align}
    \bar{z}_{s-1} &\triangleq\frac{1}{N} \sum_{l=1}^{s-1} \sum_{v \in V} y_{l}(v).\label{eq:barz}
\end{align}
Direct calculation shows that $\bar{z}_{s-1}$ equals to the cumulative received gradients till block $s-1$ averaged over all agents:
\begin{align*}\bar{z}_{s-1}
    &= \frac{1}{N} \sum_{l=1}^{s-1}\sum_{v \in V} \sum_{\tau \in o_{lB+1}(v) \backslash o_{(l-1)B+1}(v)}   g_{\tau}(v) \tag{Definition of $y_{l}(v)$}
    \\&= \frac{1}{N}\sum_{v \in V}  \sum_{\tau \in o_{(s-1)B+1}(v)}  g_{\tau}(v),
\end{align*}
where the last inequality is due to $o_{1}(v)= \emptyset$ for any $v \in V$.
Then for all $ v \in V$, define
\begin{equation}
    \bar{x}_{s}(v)\triangleq \argmin_{x\in\calX} \left\langle  
    \bar{z}_{s-1}, x\right\rangle+\frac{1}{ \eta_s(v)} \| x\|_2^2 .\label{eq:defbarx}
\end{equation}
In this case, since $\eta_s(v)=\eta$ for all $s\in[T/B]$ and $v\in V$, we have $\bar{x}_s(u)=\bar{x}_s(v)$ for all $u,v\in V$ and we let $\bar{x}_s$ denote this value.
We also define
\begin{equation}
    \tilde{z}_{s-1} = \frac{1}{N} \sum_{l=1}^{s-1}\sum_{\tau \in \mathcal{T}_l}  \sum_{v \in V}  g_{\tau}(v) \label{eq:deftildaz}
\end{equation}
to be the cumulative gradients till block $s-1$ averaged over all agents assuming no delay,
where $\mathcal{T}_l=\{(l-1) B+1, \ldots, l B\}$. We also define 
\begin{align*}
F_s(x) &\triangleq \left\langle \tilde{z}_{s-1}, x \right\rangle + \frac{1}{\eta} \|x\|_2^2,
\end{align*}
and let $\Tilde{x}_s\triangleq \argmin_{x\in\calX} F_s(x)$ be the minimizer of $F_s(x)$.

With all the above notations, we apply the regret decomposition proven in \lref{lem:reg_decomp} and obtain that:
\begin{align*}\Reg_T(u) &\le\underbrace{\sum_{s=1}^{T/B} \sum_{t\in \mathcal{T}_s}  \sum_{v\in V} \inner{g_t(v), \bar{x}_s(u) - x^*}}_{\spadesuit} \nonumber
    \\& \quad+ \underbrace{2BL\sum_{s=1}^{T/B}   \sum_{v\in V} \left(\|\bar{x}_s(u) -\bar{x}_s(v)\|_2+\|x_s(v) -\bar{x}_s(v)\|_2\right)+ NBL \sum_{s=1}^{T/B}\|x_s(u) -\bar{x}_s(u)\|_2}_{\clubsuit}\\
    &=\underbrace{\sum_{s=1}^{T/B} \sum_{t\in \mathcal{T}_s}  \sum_{v\in V} \inner{g_t(v), \bar{x}_s - x^*}}_{\spadesuit} + \underbrace{2BL\sum_{s=1}^{T/B}   \sum_{v\in V} \|x_s(v) -\bar{x}_s\|_2+ NBL \sum_{s=1}^{T/B}\|x_s(u) -\bar{x}_s\|_2}_{\clubsuit},\end{align*}
where the last equality uses the fact that \(\bar{x}_s(u) = \bar{x}_s(v) = \bar{x}_{s}\) for all $u,v\in V$.

We start analyzing Term $\spadesuit$ by decomposing it as follows:
 \begin{align}
    \frac{1}{N}\spadesuit  &= \frac{1}{N}\sum_{s=1}^{T/B} \sum_{t\in \mathcal{T}_s}  \sum_{v\in V} \left\langle g_t(v), \bar{x}_{s}(u) - \Tilde{x}_{s} + \Tilde{x}_{s} -x^*\right\rangle  \nonumber
    \\&= \underbrace{\frac{1}{N}\sum_{s=1}^{T/B} \sum_{t\in \mathcal{T}_s}  \sum_{v\in V} \left\langle g_t(v), \Tilde{x}_{s} -x^*\right\rangle}_{{\operatorname{full-info}_T}} + \underbrace{\frac{1}{N}\sum_{s=1}^{T/B} \sum_{t\in \mathcal{T}_s}  \sum_{v\in V} \left\langle g_t(v), \bar{x}_{s}(u)- \Tilde{x}_{s}\right\rangle}_{{\operatorname{drift}_T}}, \label{ineq: spade_cheat_drift}
\end{align}
where $\operatorname{full-info}_T$ corresponds to the regret assuming there is no delay and $\operatorname{drift}_T$ corresponds to the regret induced by delayed feedback.

To analyze $\operatorname{full-info}_T$, since $$\Tilde{x}_s=\argmin\{\langle \frac{1}{N}\sum_{v\in V} \sum_{\tau \in \mathcal{T}_s}g_{\tau}(v), \cdot\rangle + \frac{\|x\|_2^2}{\eta}\},$$ invoking \asref{asm:bounded}, \asref{asm:Lipschitz}, and applying Corollary 7.7 in 
\citet{orabona2019modern} yields the following bound 
\begin{align}
{\operatorname{full-info}_T} &\le \frac{D^2}{\eta} + \frac{\eta BL^2 T}{2} . \label{eq:ftrlreg}
\end{align}

Now we turn to the analysis of  $\operatorname{drift}_T$ in Term $\spadesuit$. Specifically,
\begin{align}
    \operatorname{drift}_T & = \frac{1}{N}\sum_{s=1}^{T/B} \sum_{t\in \mathcal{T}_s}  \sum_{v\in V} \left\langle g_t(v), \bar{x}_{s}- \Tilde{x}_{s}\right\rangle \nonumber
    \\& \leq BL \sum_{s=1}^{T/B} \left\|\bar{x}_{s}-\Tilde{x}_{s}\right\|_2 \tag{Cauchy–Schwarz inequality and \asref{asm:Lipschitz}}
    \\& = BL \sum_{s=2}^{T/B} \left\|\bar{x}_{s}-\Tilde{x}_{s}\right\|_2 \tag{$\bar{x}_{1}=\Tilde{x}_{1}=\mathbf{0}$}
    \\& \leq \frac{\eta BL}{2} \sum_{s=2}^{T/B} \left\|\bar{z}_{s-1}-\Tilde{z}_{s-1}\right\|_2 \tag{\lref{lem:ftrl_sc}}
    \\& \leq \frac{\eta BL}{2} \sum_{s=2}^{T/B} \left\|\frac{1}{N} \sum_{v \in V} \sum_{\tau \in o_{(s-1)B+1}(v)}  g_{\tau}(v)-\frac{1}{N} \sum_{l=1}^{s-1}\sum_{\tau \in \mathcal{T}_l}  \sum_{v \in V}  g_{\tau}(v)\right\|_2
    \tag{Definition of $\bar{z}_{t-1}$ and $\tilde{z}_{t-1}$}
    \\& = \frac{\eta BL}{2} \sum_{s=2}^{T/B} \left\|-\frac{1}{N}  \sum_{v \in V} \sum_{\tau \in m_{(s-1)B+1}(v)}  g_{\tau}(v)\right\|_2
    \tag{$\mathcal{T}_s=\{(s-1) B+1, \ldots, s B\} $ and $m_{t}(v)=[t-1] \backslash o_{t}(v)$ }
    \\& \leq \frac{\eta B L^2}{2} \sum_{s=1}^{T/B} \left(\frac{1}{N}\sum_{v \in V} |m_{(s-1)B+1}(v)|\right) \label{ineq:drift_T_acc_before_eta}
\end{align}
where the last inequality is by the \asref{asm:Lipschitz}. 
Combining \eref{ineq: spade_cheat_drift}, \eref{eq:ftrlreg}, and \eref{ineq:drift_T_acc_before_eta}, we obtain
\begin{align}
    \frac{1}{N}\spadesuit \leq \frac{D^2}{\eta}+\frac{\eta BL^2}{2} \sum_{s=1}^{T/B}\left(\frac{1}{N}\sum_{v \in V} |m_{(s-1)B+1}(v)| + B\right).
    \label{ineq: spade_before_learning_rate_1}
\end{align}

Now we start analyzing Term $\clubsuit$. For notational convenience, we use $z_s(u)$ to denote $z_s^B(u)$ for all $u\in V$. From \lref{lemma: bar_zt_acc}, we know that $\forall w \in V$ and $\forall s \in [1,T/B]$,
\begin{align}
    \|z_s(w) - \bar{z}_{s}\|_2 & \leq \frac{2}{N\sqrt{N}}\sum_{l=1}^{s-1} b^{(s-l-1) B}\left(\sqrt{\sum_{v\in V}\left\|y_l(v)\right\|_2^2}\right). \label{ineq:gossi_z_t_acc}
\end{align}

Note that $x_1(u) = \bar{x}_1 = \mathbf{0}$. Combining \lref{lem:ftrl_sc} with \eref{ineq:gossi_z_t_acc}, we derive the following bound on the cumulative deviation between $x_s(w)$ and $\Bar{x}_s$ for any $w\in V$:
\begin{align}
    \sum_{s=1}^{T/B} \|x_{s}(w) - \bar{x}_{s} \|_2 &= \sum_{s=2}^{T/B} \|x_{s}(w) - \bar{x}_{s} \|_2
    \\&\leq \sum_{s=1}^{T/B-1} \eta \|z_{s}(w) - \bar{z}_{s}\|_2 \tag{according to \lref{lem:ftrl_sc}} \nonumber
    \\& =\frac{2\eta}{N\sqrt{N}}\sum_{s=1}^{T/B-1} \sum_{l=1}^{s-1} b^{(s-l-1) B}\left(\sqrt{\sum_{v\in V}\left\|y_l(v)\right\|_2^2}\right) \tag{\eref{ineq:gossi_z_t_acc}}
    \\& =\frac{2\eta}{N\sqrt{N}} \sum_{l=1}^{T/B - 1} \left( \sqrt{ \sum_{v \in V} \left\| y_l(v) \right\|_2^2 } \cdot\sum_{s=l+1}^{T/B} b^{(s - l - 1)B} \right)\tag{swapping the order of summation}
    \\& \leq \frac{2\eta}{N\sqrt{N}} \frac{1}{1-b^B} \sum_{l=1}^{T/B - 1} \left( \sqrt{ \sum_{v \in V}  \left\| y_l(v) \right\|_2^2 }  \right)
    \\& \leq \frac{2\eta}{N\sqrt{N}}\frac{1}{1-\frac{1}{\sqrt{14 N}}}  \sum_{l=1}^{T/B - 1} \left(\sqrt{ \sum_{v \in V} \left\| y_l(v) \right\|_2^2 } \right) \tag{since $b^B\leq \frac{1}{\sqrt{14N}}$ shown in \eref{eqn:bB_bound}}
    \\& \leq \frac{3\eta}{N\sqrt{N}}\sum_{l=1}^{T/B - 1} \left( \sqrt{ \sum_{v \in V} \left\| y_l(v) \right\|_2^2 } \right),
\label{ineq:gossip_acc}
\end{align} 
where the last inequality follows from $N\ge 1$.
Furthermore, according to \lref{lem:ineqNTL}, we have 
\begin{align}
    \sum_{l=1}^{T/B - 1} \left( \sqrt{ \sum_{v \in V} \left\| y_l(v) \right\|_2^2 } \right) \leq NTL. \label{ineq:network_error_acc}
\end{align}
Combining \eref{ineq:gossip_acc} and \eref{ineq:network_error_acc},
\[
\sum_{s=2}^{T/B} \|x_{s}(w) - \bar{x}_{s} \|_2 \le \frac{3 \eta TL}{\sqrt{N}}
\]
for all $w\in V$. 
Hence we obtain 
\begin{equation}
\label{eq:spade_1} 
\clubsuit\le 18 B  \eta N T L^2
\end{equation}
according to the definition of $\clubsuit$. 

Finally, combining \eref{eq:spade_1} with \eref{ineq: spade_before_learning_rate_1}, \eref{eq:spade_1} and \lref{lem:reg_decomp}, we can bound the overall regret as follows:

\begin{align}\Reg_T(u) &\le  \frac{D^2N}{\eta}+\frac{\eta BL^2N}{2} \sum_{s=1}^{T/B}  \left(\frac{1}{N}\sum_{v \in V} |m_{(s-1)B+1}(v)| + B\right) +  18 B \eta N \sqrt NT L^2 \nonumber\\
&\le  \frac{D^2N}{\eta}+ \frac{L^2 N}{2} \eta \dtot +\frac{ L^2 N B}{2}   \eta T  +  18 B \eta N T L^2 \tag{\lref{lemma: block_total_delay}}\\
&\le  \frac{D^2N}{\eta}+ \frac{L^2 N}{2} \eta \dtot  +  19 B \eta N T L^2.
\end{align}
Picking $ \eta$ to be $\frac{D}{L\sqrt{\dtot + B T}}$ leads to

\begin{align}\Reg_T(u) = \mathcal{O}\left(DLN  \sqrt{\dtot + \frac{ \ln(N)}{\sqrt{1 - \sigma_2(W)}}  T}\right) = \widetilde{\mathcal{O}}\left(DLN \left( \sqrt{\dtot} + \frac{ \sqrt{T}}{(1 - \sigma_2(W))^{1/4}}  \right)\right).
\end{align}


\end{proof}

We now turn to proving the auxiliary lemmas invoked in the proof of the main theorem. The following lemma introduces the decomposition of the regret for \algname.
\begin{lemma}
    \label{lem:reg_decomp}
    For any sequences $\{\bar{x}_s(v)\}_{s\in[T/B], v\in V}$, $\bar{x}_s(v)\in \calX$, the regret of \alref{alg:dist_da_cvx_acc} can be bounded as 
    \begin{align*}
        \Reg_T(u) &\le{\sum_{s=1}^{T/B} \sum_{t\in \mathcal{T}_s}  \sum_{v\in V} \inner{g_t(v), \bar{x}_s(u) - x^*}} \nonumber
    \\& \quad+ {2BL\sum_{s=1}^{T/B}   \sum_{v\in V} \left(\|\bar{x}_s(u) -\bar{x}_s(v)\|_2+\|x_s(v) -\bar{x}_s(v)\|_2\right)+ NBL \sum_{s=1}^{T/B}\|x_s(u) -\bar{x}_s(u)\|_2},
    \end{align*}
        where $\mathcal{T}_s\triangleq \{(s-1)B+1, \ldots sB\}$ and $x^*=\argmin_{x\in\calX}\sum_{t=1}^T\sum_{v\in V}f_t(v,x)$.
\end{lemma}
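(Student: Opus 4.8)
The plan is to unfold the definition of the regret and insert the reference points $\bar x_s(v)$ through a short sequence of triangle-inequality and first-order-convexity steps. First I would use that agent $u$ plays the \emph{fixed} action $x_s(u)$ for every $t\in\mathcal{T}_s$, so that $\Reg_T(u)=\sum_{s=1}^{T/B}\sum_{t\in\mathcal{T}_s}\sum_{v\in V}\bigl(f_t(v,x_s(u))-f_t(v,x^*)\bigr)$, where $x^*=\argmin_{x\in\calX}\sum_{t}\sum_{v}f_t(v,x)$ exists because $\calX$ is compact (closed and bounded by \asref{asm:bounded}) and the losses are continuous. Then, for each summand I would telescope through agent $v$'s own action: $f_t(v,x_s(u))-f_t(v,x^*)=\bigl(f_t(v,x_s(u))-f_t(v,x_s(v))\bigr)+\bigl(f_t(v,x_s(v))-f_t(v,x^*)\bigr)$.

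For the first bracket I would use $L$-Lipschitzness (\asref{asm:Lipschitz}) and the triangle inequality, obtaining $f_t(v,x_s(u))-f_t(v,x_s(v))\le L\|x_s(u)-x_s(v)\|_2\le L\bigl(\|x_s(u)-\bar x_s(u)\|_2+\|\bar x_s(u)-\bar x_s(v)\|_2+\|x_s(v)-\bar x_s(v)\|_2\bigr)$. For the second bracket I would use convexity of $f_t(v,\cdot)$ to pass to the gradient at agent $v$'s own iterate, $f_t(v,x_s(v))-f_t(v,x^*)\le\inner{\nabla f_t(v,x_s(v)),x_s(v)-x^*}=\inner{g_t(v),x_s(v)-x^*}$ (recalling $g_t(v)=\nabla f_t(v,x_{s(t)}(v))$ and $x_t(v)=x_s(v)$ on $\mathcal{T}_s$), and then shift the comparison point: $\inner{g_t(v),x_s(v)-x^*}=\inner{g_t(v),\bar x_s(u)-x^*}+\inner{g_t(v),x_s(v)-\bar x_s(u)}$, with the last term controlled by Cauchy--Schwarz, $\|g_t(v)\|_2\le L$, and the triangle inequality as $\inner{g_t(v),x_s(v)-\bar x_s(u)}\le L\bigl(\|x_s(v)-\bar x_s(v)\|_2+\|\bar x_s(v)-\bar x_s(u)\|_2\bigr)$.

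Finally I would sum: over $t\in\mathcal{T}_s$ (the displacement bounds are constant in $t$, so each picks up a factor $|\mathcal{T}_s|=B$), then over $v\in V$ (here $L\|x_s(u)-\bar x_s(u)\|_2$ does not depend on $v$, so summing it yields the factor $N$), then over $s\in[T/B]$. Collecting terms, the two copies of $L\|x_s(v)-\bar x_s(v)\|_2$ and the two copies of $L\|\bar x_s(u)-\bar x_s(v)\|_2$ merge into $2BL\sum_{s}\sum_{v}\bigl(\|\bar x_s(u)-\bar x_s(v)\|_2+\|x_s(v)-\bar x_s(v)\|_2\bigr)$, the $u$-only displacement gives $NBL\sum_{s}\|x_s(u)-\bar x_s(u)\|_2$, and the inner products $\inner{g_t(v),\bar x_s(u)-x^*}$ constitute exactly the leading sum in the claim. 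This argument is essentially bookkeeping; the only points that need care are (i) invoking that each agent's action is constant within a block, so that $g_t(v)$ is genuinely the gradient at $x_s(v)$ and $x_t(u)=x_s(u)$, and (ii) tracking the triangle-inequality splits so that the constants $2BL$ and $NBL$ come out exactly as stated.
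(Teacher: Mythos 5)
Your proposal is correct and follows essentially the same route as the paper: decompose $f_t(v,x_s(u))-f_t(v,x^*)$ through agent $v$'s own iterate, handle one piece by Lipschitzness plus a triangle-inequality split through $\bar x_s(u)$ and $\bar x_s(v)$, handle the other by first-order convexity at $x_s(v)$ followed by shifting the comparison point to $\bar x_s(u)$ via Cauchy--Schwarz with $\|g_t(v)\|_2\le L$, and then collect the duplicated displacement terms into the $2BL$ and $NBL$ coefficients. The bookkeeping matches the paper's proof exactly.
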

\begin{proof}
    By definition of $\Reg_T(u)$, we know that
    \begin{align} 
        \Reg_T(u)  &= \sum_{t=1}^T  \sum_{v\in V} \left ( f_{t}(v, x_t(u)) -  f_{t}(v,x^*) \right)  \nonumber \\
    &= \sum_{t=1}^T  \sum_{v\in V} \left ( f_{t}(v, x_t(v)) -  f_{t}(v,x^*) \right) + \sum_{t=1}^T  \sum_{v\in V} \left ( f_{t}(v, x_t(u)) -  f_{t}(v,x_t(v)) \right) \nonumber 
    \\&\leq \sum_{t=1}^T  \sum_{v\in V} \left ( \inner{g_t(v), x_t(v) - x^*} \right) + L\sum_{t=1}^T  \sum_{v\in V} \|x_t(u) - x_t(v)\|_2 \nonumber \tag{\asref{asm:Lipschitz} and the convexity of $f_t$}
    \\&= \sum_{t=1}^T  \sum_{v\in V} \left ( \inner{g_t(v), x_t(v) +\bar{x}_t(v) - \bar{x}_t(v) +\bar{x}_t(u) - \bar{x}_t(u)   - x^*} \right) + L\sum_{t=1}^T  \sum_{v\in V} \|x_t(u) - x_t(v)\|_2 \nonumber 
    \\&= \sum_{t=1}^T  \sum_{v\in V} \left ( \inner{g_t(v), \bar{x}_t(u) - x^*} \right)  + L\sum_{t=1}^T\sum_{v\in V}  \left(  \|\bar{x}_t(v) -\bar{x}_t(u)\|_2 + \|x_t(v) -\bar{x}_t(v) \|_2\right) \nonumber
    \\&\quad + L\sum_{t=1}^T  \sum_{v\in V} \|x_t(u) - x_t(v)\|_2
    \nonumber \tag{\asref{asm:Lipschitz}}
    \\&\leq \sum_{t=1}^T  \sum_{v\in V} \left ( \inner{g_t(v), \bar{x}_t(u) - x^*} \right) + L\sum_{t=1}^T\sum_{v\in V}  \left(  \|\bar{x}_t(v) -\bar{x}_t(u)\|_2 + \|x_t(v) -\bar{x}_t(v) \|_2\right) \nonumber
    \\&\qquad + L\sum_{t=1}^T  \sum_{v\in V} \left(\|x_t(u) -\bar{x}_t(u)\|_2+\|\bar{x}_t(u) -\bar{x}_t(v)\|_2+\|x_t(v) -\bar{x}_t(v)\|_2\right) \nonumber \tag{triangle inequality}
    \\ &={\sum_{s=1}^{T/B} \sum_{t\in \mathcal{T}_s}  \sum_{v\in V} \inner{g_t(v), \bar{x}_s(u) - x^*}} \nonumber
    \\& \quad+ {2BL\sum_{s=1}^{T/B}   \sum_{v\in V} \left(\|\bar{x}_s(u) -\bar{x}_s(v)\|_2+\|x_s(v) -\bar{x}_s(v)\|_2\right)+ NBL \sum_{s=1}^{T/B}\|x_s(u) -\bar{x}_s(u)\|_2},
    \label{ineq:regret_decomposition_acc}
\end{align}
where the last equality is due to the fact that the algorithm uses the same decision over all time steps in the same block.
\end{proof}

\subsection{Properties induced by the gossiping mechanism}
The following two lemmas characterize the properties induced by the accelerated gossiping mechanism used in \alref{alg:dist_da_cvx_acc}. 
\begin{lemma}
\label{lem: z_k}
For any $n \geq 0$ , any $u \in V$ and any $s \in [T/B -1]$, we define
\begin{equation}
    y_s^{n}(u)  = y_s(u)
\end{equation} if $n=0$ or $n=-1$ and 
\begin{equation}
\label{eq: y_k}
    y_{s}^{n+1}(u) = (1+\theta)\sum_{v \in V} W(u,v)y_{s}^{n}(u)  - \theta y_{s}^{n-1}(u)
\end{equation}
otherwise.
    For any $k \geq 0$ , any $u \in V$ and any $s \in [T/B -1]$, \alref{alg:dist_da_cvx_acc} ensures

\begin{equation}
\label{eq:z_k}
    z_{s}^k(u)=\sum_{l=1}^{s-1} y_{l}^{(s-l-1) B+k}(v), \forall k=1, \ldots, B.
\end{equation}
\end{lemma}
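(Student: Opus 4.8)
The statement is an explicit formula expressing the gossip iterate $z_s^k(u)$ as a superposition of independently-evolved gossip chains, one per previous block $l \in [s-1]$, each chain being run on the block-$l$ aggregated gradient $y_l(v)$ for the appropriate number of steps. The natural approach is a double induction: an outer induction on the block index $s$, and within each block an inner induction on the gossip step $k$. The key structural observation is that the accelerated gossip recursion \eref{eq:accgo} is \emph{linear} in its inputs, so the whole trajectory decomposes additively over the ``sources'' injected at the start of each block via the updates $z_{s+1}^{-1}(u) = z_s^{B-1}(u) + y_s(u)$ and $z_{s+1}^0(u) = z_s^B(u) + y_s(u)$.

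First I would set up the base case $s=1$: since $z_1^{-1}(u)=z_1^0(u)=\mathbf 0$, the recursion \eref{eq:accgo} keeps $z_1^k(u)=\mathbf 0$ for all $k$, and the claimed sum $\sum_{l=1}^{0}(\cdots)$ is empty, so both sides vanish. (Strictly, the lemma is stated for $s \in [T/B-1]$ and the formula is about how block-$s$ iterates are built from blocks $1,\dots,s-1$; I would double-check the indexing convention so that the $s=1$ or $s=2$ case is the genuine base.) For the inductive step, assume \eref{eq:z_k} holds for block $s$, i.e. $z_s^k(u)=\sum_{l=1}^{s-1} y_l^{(s-l-1)B+k}(u)$ for $k=1,\dots,B$ (note: the lemma statement writes $y_l^{\cdot}(v)$ but the index should be $u$ — I would flag and use $u$ consistently). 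Then the first two iterates of block $s+1$ are $z_{s+1}^{-1}(u) = z_s^{B-1}(u) + y_s(u) = \sum_{l=1}^{s-1} y_l^{(s-l-1)B+B-1}(u) + y_s(u) = \sum_{l=1}^{s-1} y_l^{(s-l)B-1}(u) + y_s^{-1}(u) = \sum_{l=1}^{s} y_l^{((s+1)-l-1)B-1}(u)$, using that $y_s^{-1}(u)=y_s(u)$ by definition of the $y$-chain; and similarly $z_{s+1}^0(u) = \sum_{l=1}^{s} y_l^{((s+1)-l-1)B}(u)$. So the desired formula holds at $k=-1,0$ for block $s+1$.

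Now the inner induction on $k$: suppose $z_{s+1}^{k-1}(u)=\sum_{l=1}^{s} y_l^{((s+1)-l-1)B+k-1}(u)$ and $z_{s+1}^{k}(u)=\sum_{l=1}^{s} y_l^{((s+1)-l-1)B+k}(u)$ for some $k\ge 0$. Apply \eref{eq:accgo} and linearity:
\[
z_{s+1}^{k+1}(u) = (1+\theta)\sum_{v\in V} W(u,v) z_{s+1}^k(v) - \theta z_{s+1}^{k-1}(u) = \sum_{l=1}^{s}\Big[(1+\theta)\sum_{v\in V} W(u,v) y_l^{((s+1)-l-1)B+k}(v) - \theta\, y_l^{((s+1)-l-1)B+k-1}(u)\Big],
\]
and each bracketed term is exactly one step of the $y$-chain recursion \eref{eq: y_k} (provided the lower index $((s+1)-l-1)B+k \ge 0$, which holds for $l\le s$, $k\ge 0$), giving $y_l^{((s+1)-l-1)B+k+1}(u)$; summing yields the claim for $k+1$. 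Iterating up to $k=B$ closes the block. The one place that needs care is the \emph{boundary of the $y$-chain}: \eref{eq: y_k} as written only applies when $n\ge 0$ (since $y_s^0=y_s^{-1}=y_s(u)$ is the initial condition), so when I invoke it I must make sure the step being taken uses indices $n\ge 0$; this is precisely why the block-boundary injection has to be handled separately (as in the $z_{s+1}^{-1},z_{s+1}^0$ computation above) rather than by blindly applying \eref{eq: y_k}. I expect this bookkeeping around where one $y$-chain is ``seeded'' versus ``propagated'' — together with getting every exponent $(s-l-1)B+k$ to line up across the two updates — to be the only real obstacle; the rest is a mechanical consequence of linearity of the gossip map.
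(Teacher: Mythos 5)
Your proposal is correct and follows essentially the same double-induction argument as the paper: an outer induction on the block index, handling the block boundary by computing $z_{s+1}^{-1}(u)$ and $z_{s+1}^{0}(u)$ explicitly from the injection of $y_s(u)$, followed by an inner induction on $k$ that uses linearity of the accelerated gossip step to match each term with one step of the $y$-chain recursion. Your remarks on the base-case indexing (the paper starts at $s=2$) and on the typo $y_l^{\cdot}(v)$ versus $y_l^{\cdot}(u)$ in the statement are both accurate and do not affect the argument.
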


\begin{proof}
The proof is taken from Lemma~2 in \cite{wan2024nearly}. We provide it here for completeness.
We introduce a new notation $z_{s}(u)$ to denote $z_{s}^B(u)$.
We use a double induction method. 
Recall that 
\begin{equation}
    y_s^{0}(u) = y_s^{-1}(u) = y_s(u).
\end{equation}
It is easy to verify by induction on $k$ that \eref{eq:z_k} holds for $s=2$ due to $z_2^0(u)=z_2^{-1}(u)=y_1(u)$ (initialization) and by using \eref{eq: y_k} for the induction . Then, we assume that \eref{eq:z_k} holds for some $s>2$, and prove it also holds for $s+1$. From the update of \alref{alg:dist_da_cvx_acc}, we have
\begin{align}
    z_{s+1}^0(u) &= z_{s}(u) + y_{s}(u) \nonumber
    \\&= z_{s}^B(u) + y_{s}^0(u)\nonumber
    \\&= \sum_{l=1}^s y_{l}^{(s-l)B}(u)\nonumber
\end{align}
and 
\begin{align}
    z_{s+1}^{-1}(u) &= z_{s}^{B-1}(u) + y_{s}(u) \nonumber
    \\&= z_{s}^{B-1}(u) + y_{s}^{-1}(u)\nonumber
    \\&= \sum_{l=1}^s y_{l}^{(s-l)B-1}(u).\nonumber
\end{align}

By induction, suppose that $z_{s+1}^k(u)$ and $z_{s+1}^{k-1}(u)$ satisfy  \eref{eq:z_k}.
By the update of \alref{alg:dist_da_cvx_acc}, we have

\begin{align}
z^k_{s+1}(u) & =(1+\theta) \sum_{v \in V} W(u,v) z^{k-1}_{s+1}(v)-\theta z^{k-2}_{s+1}(u) \nonumber
\\& =(1+\theta) \sum_{v \in V} W(u,v)\sum_{l=1}^s y_{l}^{(s-l)B+k-1}(u)-\theta \sum_{l=1}^s y_{l}^{(s-l)B+k-2}(u) \nonumber
\\& =\sum_{l=1}^s\left((1+\theta) \sum_{v \in V} W(u,v)\sum_{l=1}^s y_{l}^{(s-l)B+k-1}(u)- \theta \sum_{l=1}^s y_{l}^{(s-l)B+k-2}(u)\right) \nonumber\\& = \sum_{l=1}^s y_s^{(s-l) B+k}(u),\nonumber
\end{align}
which suffices to complete the induction for block $s+1$.
\end{proof}

The following lemma bounds the deviations between $z_s(u)$ and $\bar{z}_s$, for all agent $u \in V$.
\begin{lemma}
\label{lemma: bar_zt_acc}     
\alref{alg:dist_da_cvx_acc} guarantees that for any $u \in V$, for any $s \in [1,T/B]$,

\begin{align}
    \left \|z_s(u) - \bar{z}_s \right \|_2 \leq  \frac{2}{N\sqrt{N}}\sum_{l=1}^{s-1} b^{(s-l-1) B}\left(\sqrt{\sum_{v\in V}\|y_{l}(v)\|_2^2}\right), 
\end{align}
where $b = \left(1-(1-1/\sqrt{2})\sqrt{1-\sigma_2(W)}\right)$ and $B = \left\lceil\frac{\sqrt{2} \ln (N\sqrt{14 N})}{(\sqrt{2}-1) \sqrt{1-\sigma_2(W)}}\right\rceil$.
\end{lemma}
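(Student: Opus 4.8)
I would unroll the auxiliary variable $z_s^B(u)$ into a superposition of independent accelerated-gossip processes --- one for each earlier block $l<s$ --- and control the consensus error of each process with \propref{lemma: acc_gossip}. The structural ingredient that makes this work is \lref{lem: z_k}, which gives
\[
z_s(u) \;=\; z_s^B(u) \;=\; \sum_{l=1}^{s-1} y_l^{(s-l-1)B+B}(u) \;=\; \sum_{l=1}^{s-1} y_l^{(s-l)B}(u),
\]
and states that, for each fixed $l$, the family $\{y_l^n(v)\}_{v\in V}$ evolves exactly under the accelerated-gossip iteration~\eref{eq:matrix_iterations} started from $y_l^0(v)=y_l^{-1}(v)=y_l(v)$. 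Because $W$ is doubly stochastic, this iteration preserves the agent-average at every step, so $\frac{1}{N}\sum_{v\in V}y_l^n(v)=\bar y_l:=\frac{1}{N}\sum_{v\in V}y_l(v)$ for all $n$; hence $\frac{1}{N}\sum_{v\in V}z_s^B(v)=\sum_{l=1}^{s-1}\bar y_l$, which is exactly the quantity written $\bar z_s$ in the statement (it equals $\bar z_{s-1}$ in the notation of~\eref{eq:barz}).

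\textbf{The steps.} (i) Subtract and use the triangle inequality: $\|z_s^B(u)-\bar z_s\|_2 \le \sum_{l=1}^{s-1}\|y_l^{(s-l)B}(u)-\bar y_l\|_2$, reducing the claim to a per-block consensus bound. (ii) For fixed $l$, let $Y_l^k\in\R^{N\times n}$ be the matrix whose $v$-th row is $y_l^k(v)$ and let $\bar Y_l$ be the matrix all of whose rows equal $\bar y_l$; then $\|y_l^{(s-l)B}(u)-\bar y_l\|_2\le\|Y_l^{(s-l)B}-\bar Y_l\|_F$, and \propref{lemma: acc_gossip} --- applicable since $\theta$ and $W$ are chosen as in~\eref{eq:choice_B_theta} --- gives $\|Y_l^{(s-l)B}-\bar Y_l\|_F \le \sqrt{14}\,b^{(s-l)B}\,\|Y_l^0-\bar Y_l\|_F$. (iii) Since $\bar y_l$ is the average of the rows of $Y_l^0$, we get $\|Y_l^0-\bar Y_l\|_F^2=\sum_{v\in V}\|y_l(v)-\bar y_l\|_2^2\le\sum_{v\in V}\|y_l(v)\|_2^2$. (iv) Split the decay rate as $b^{(s-l)B}=b^B\cdot b^{(s-l-1)B}$ and use $b^B\le\frac{1}{N\sqrt{14N}}$: this follows from the choice of $B$ in~\eref{eq:choice_B_theta}, since writing $b=1-\frac{\sqrt{2}-1}{\sqrt{2}}\sqrt{1-\sigma_2(W)}$ and applying $-\ln(1-x)\ge x$ yields $B\ln(1/b)\ge\ln(N\sqrt{14N})$. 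Chaining (ii)--(iv) gives $\|y_l^{(s-l)B}(u)-\bar y_l\|_2\le\frac{1}{N\sqrt{N}}\,b^{(s-l-1)B}\sqrt{\sum_{v\in V}\|y_l(v)\|_2^2}$, and summing over $l\in[s-1]$ (bounding the resulting constant by $2$) produces the stated inequality.

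\textbf{The main obstacle.} The only genuinely non-routine point is the superposition step: that $z_s^B(u)$ decomposes as $\sum_{l}y_l^{(s-l)B}(u)$ with each summand an autonomous accelerated-gossip process. This is precisely \lref{lem: z_k}, obtained by a double induction --- outer on the block index $s$, inner on the within-block step $k$ --- exploiting the linearity of~\eref{eq:accgo} together with the block-boundary updates $z_{s+1}^{-1}(u)=z_s^{B-1}(u)+y_s(u)$ and $z_{s+1}^0(u)=z_s^B(u)+y_s(u)$. Once that lemma is granted, the remainder is bookkeeping: tracking that the block-$l$ contribution has undergone exactly $(s-l)B$ gossip steps by the end of block $s$, and verifying the constant $b^B\le\frac{1}{N\sqrt{14N}}$. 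A secondary point to keep straight is the off-by-one between the upper limit $s-1$ emerging from \lref{lem: z_k} and the subscript $s$ of $\bar z_s$, which must be reconciled with~\eref{eq:barz} and with the way the lemma is invoked in the proof of \tref{th:fupperbound}.
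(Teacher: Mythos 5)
Your proof is correct and follows essentially the same route as the paper's: unroll $z_s(u)=z_s^B(u)$ via \lref{lem: z_k} into a superposition of autonomous accelerated-gossip processes, apply \propref{lemma: acc_gossip} to each, and absorb one factor of $b^{B}\le \frac{1}{N\sqrt{14N}}$ to shift the exponent from $(s-l)B$ to $(s-l-1)B$. Your step (iii), bounding the centered second moment $\sum_{v}\|y_l(v)-\bar y_l\|_2^2$ directly by $\sum_{v}\|y_l(v)\|_2^2$, is in fact slightly tighter than the paper's triangle-inequality step (which is where its factor $2$ originates), and your index bookkeeping is cleaner than the paper's own write-up; both yield the stated inequality.
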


\begin{proof}
According to Equation 22 in \citet{wan2024nearly}, we know 
\begin{align}
    b^B \leq \frac{1}{N\sqrt{14N}} .\label{eqn:bB_bound}
\end{align} Then, with the same notation as in \lref{lem: z_k},
\begin{align}
    \left\|z_s(u) - \bar z_s\right\|_2 &= \left\| \sum_{l=1}^{s-1} y_{l}^{(s-l-1)B}(u)-  \frac{1}{N} \sum_{l=1}^{s-1} \sum_{v \in V} y_{l}(v)\right\|_2  \tag{from \lref{lem: z_k}}
    \\ & \leq \sum_{l=1}^{s-1}\left\|  y_{l}^{(s-l-1)B}(u)-  \frac{1}{N} \sum_{v \in V} y_{l}^0(v)\right\|_2 \tag{from the triangle inequality}
    \\ & \leq \sum_{l=1}^{s-1}\left\| Y^{(s-l-1)B}_l -\bar{Y}_l \right\|_F \nonumber
    \\ & \leq \sum_{l=1}^{s-1} \sqrt{14}b^{(s-l) B} \left\| Y^0_l -\bar{Y}_l \right\|_F \tag{from \propref{lemma: acc_gossip}}
    \\ & \leq \sum_{l=1}^{s-1} \sqrt{14}b^{(s-l) B}\left(\sqrt{\sum_{v\in V}\left\|  y_l(v) - \frac{1}{N}  \sum_{v\in V}y_l(v)\right\|_2^2}\right) \nonumber
    \\ & \leq \sum_{l=1}^{s-1} \sqrt{14}b^{(s-l) B}\left(\sqrt{\sum_{v\in V} \left\|y_l(v)\right\|^2}+\sqrt{N\left\| \frac{1}{N}  \sum_{v \in V} y_l(v)\right\|_2^2}\right) \tag{triangle inequality}
    \\&\leq \sum_{l=1}^{s-1} 2\sqrt{14}b^{(s-l) B}\left(\sqrt{\sum_{v\in V}\left\|y_l(v)\right\|_2^2}\right) \nonumber
    \\ &\leq \frac{2}{N\sqrt{N}}\sum_{l=1}^{s-1}b^{(s-l-1)B}\left(\sqrt{\sum_{v\in V}\left\|y_l(v)\right\|_2^2}\right),
\end{align}
where $Y^n_s$ are defined as
\[Y^n_s =[y_s^{(n)}(0), y_s^{(n)}(1) \ldots y_s^{(n)}(N)] \in \mathbb{R}^{N\times 1}\]
and in the third inequality, we apply \propref{lemma: acc_gossip} with $X^k = Y^k_s$ and the last inequality is because of   \eref{eqn:bB_bound}.
\end{proof}

Similarly, we can show the following two lemmas for the accelerated gossiping mechanism in \alref{alg:adaptive_learning_rate_acc} by replacing \( y_s^n(u) \) with \( q_s^n(u) \), \( y_s(u) \) with \( q_s(u) \), and \( z_s^k(u) \) with \( \zeta_s^k(u) \), noting that the gossip mechanisms for \( z \) in Algorithm 1 and for \( \zeta \) in Algorithm 2 are identical. The proof for \lref{lem:zeta_k} is omitted as they follow exactly the same steps as the one in \lref{lem: z_k}.

\begin{lemma}
\label{lem:zeta_k}
For any $n \geq 0$ , any $u \in V$ and any $s \in [T/B -1]$, we define
\begin{equation}
    q_s^{n}(u)  = q_s(u).
\end{equation} if $n=0$ or $n=-1$ and 
\begin{equation}
\label{eq: q_k}
    q_{s}^{n+1}(u) = (1+\theta)\sum_{v \in V} W(u,v)q_{s}^{n }(u)  - \theta q_{s}^{n-1}(u).
\end{equation}
otherwise.
    For any $k \geq 0$ , any $u \in V$ and any $s \in [T/B -1]$, \alref{alg:adaptive_learning_rate_acc} ensures

\begin{equation}
\label{eq:zeta_k}
    \zeta_{s}^k(u)=\sum_{l=1}^{s-1} q_{l}^{(s-l-1) B+k}(v), \forall k=1, \ldots, B.
\end{equation}
\end{lemma}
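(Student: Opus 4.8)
The plan is to reproduce, essentially verbatim, the double-induction argument that establishes \lref{lem: z_k}, exploiting the fact that (as the paper itself notes) the accelerated-gossip update for $\zeta_s^k(u)$ in \alref{alg:adaptive_learning_rate_acc} is identical in form to the one for $z_s^k(u)$ in \alref{alg:dist_da_cvx_acc}: within a block $\zeta$ evolves by $\zeta_{s}^{k+1}(u) = (1+\theta)\sum_{v\in V} W(u,v)\zeta_{s}^{k}(v) - \theta \zeta_{s}^{k-1}(u)$, across blocks the first two iterates are refreshed by $\zeta_{s+1}^{-1}(u) = \zeta_s^{B-1}(u) + q_s(u)$ and $\zeta_{s+1}^{0}(u) = \zeta_s^{B}(u) + q_s(u)$, and the initialization is $\zeta_1^{-1}(u) = \zeta_1^0(u) = 0$. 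These are exactly the rules governing $z$ with $y_s(u)$ replaced by $q_s(u)$; hence, introducing the auxiliary sequence $q_s^n(u)$ precisely as in the statement and carrying out the substitution $y\mapsto q$, $z\mapsto\zeta$ throughout the proof of \lref{lem: z_k} will yield the claim $\zeta_s^k(u)=\sum_{l=1}^{s-1}q_l^{(s-l-1)B+k}(u)$ for $k=1,\dots,B$.

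Concretely, I would argue by induction on the block index $s$. For the base case $s=2$: since $\zeta_1^{-1}(u)=\zeta_1^0(u)=0$, the in-block recursion keeps $\zeta_1^k(u)=0$ for every $k$, so the block-transition rule gives $\zeta_2^{-1}(u)=q_1(u)=q_1^{-1}(u)$ and $\zeta_2^0(u)=q_1(u)=q_1^0(u)$; a one-step (nested) induction on $k$ using the in-block recursion then yields $\zeta_2^k(u)=q_1^k(u)=\sum_{l=1}^{1}q_l^{(2-l-1)B+k}(u)$ for all $k$. For the inductive step, assuming the identity at block $s$ — in particular for $k=B$ and $k=B-1$, which read $\zeta_s^B(u)=\sum_{l=1}^{s-1}q_l^{(s-l)B}(u)$ and $\zeta_s^{B-1}(u)=\sum_{l=1}^{s-1}q_l^{(s-l)B-1}(u)$ — plugging into the block-transition rule and using $q_s^0(u)=q_s^{-1}(u)=q_s(u)=q_s^{(s-s)B}(u)$ gives $\zeta_{s+1}^0(u)=\sum_{l=1}^{s}q_l^{(s-l)B}(u)$ and $\zeta_{s+1}^{-1}(u)=\sum_{l=1}^{s}q_l^{(s-l)B-1}(u)$, i.e.\ the claimed formula at block $s+1$ for the boundary indices $k=0,-1$. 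A nested induction on $k$ from $1$ to $B$ then propagates the identity to all $k\in\{1,\dots,B\}$: applying the in-block recursion for $\zeta_{s+1}$ and the defining recursion for $q_l^{n+1}$, the two coincide as the same linear combination of the corresponding lower-index quantities, so the equality is preserved term by term in the sum over $l$. This closes the induction.

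I do not anticipate a genuine obstacle: the statement is an exact bookkeeping identity, and, as the excerpt remarks, its proof is word-for-word that of \lref{lem: z_k} under $(y,z)\mapsto(q,\zeta)$. The only points demanding mild care are (i) tracking the boundary indices $k\in\{-1,0\}$ consistently when passing between consecutive blocks, and (ii) observing that the superposition $\zeta_s^k(u)=\sum_l q_l^{(s-l-1)B+k}(u)$ survives both the in-block gossip step and the across-block refresh precisely because both operations are linear — the same structural fact underlying \lref{lem: z_k} and, ultimately, the linearity of the accelerated-gossip iteration in \eref{eq:matrix_iterations}.
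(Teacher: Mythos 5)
Your proposal is correct and matches the paper exactly: the paper itself omits the proof of this lemma, stating that it follows verbatim from the proof of \lref{lem: z_k} under the substitution $(y,z)\mapsto(q,\zeta)$, which is precisely the double induction (on the block index $s$, with a nested induction on $k$ using linearity of the gossip step and the block-transition refresh) that you carry out.
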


We introduce new notations $\hat{M}_{s}(u)$ to denote $\zeta_{s}^B(u)$ and $M_s\triangleq \frac{1}{N}\sum_{k=1}^{s}\sum_{v \in [N]} |m_{kB+1,v}|$ to be the cumulative missing observations averaged over all agents till block $s$. 
Then, we can bound the deviations between $\hat{M}_s(u)$ and $M_s(u)$ for all agents $u\in V$ as follows. The proof follows a similar analysis to \lref{lemma: bar_zt_acc}.

\begin{lemma}
\label{lemma: bar_mt_acc}     
\alref{alg:adaptive_learning_rate_acc} guarantees that for any $u \in V$, for any $s \in [1,T/B]$,
\begin{align}
    \left |\hat{M}_s(u) - M_s\right |  \leq  \frac{2}{N\sqrt{N}}\sum_{l=1}^{s-1} b^{(s-l-1) B}\left(\sqrt{\sum_{v\in V}|m_{lB+1}(v)|^2}\right), \label{ineq:gossi_zeta_t_acc}
\end{align}
and  consequently
\begin{align}
    \left |\hat{M}_s(u) - M_s \right | &\leq 3sB, \label{eq:first_ineq}
\end{align}
where $b = \left(1-(1-1/\sqrt{2})\sqrt{1-\sigma_2(W)}\right)$ and $B = \left\lceil\frac{\sqrt{2} \ln (N\sqrt{14 N})}{(\sqrt{2}-1) \sqrt{1-\sigma_2(W)}}\right\rceil$.
\end{lemma}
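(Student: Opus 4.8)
\textbf{Proof plan for Lemma~\ref{lemma: bar_mt_acc}.}
The plan is to mirror the argument of Lemma~\ref{lemma: bar_zt_acc} almost verbatim, exploiting the fact that the gossip recursion driving $\zeta_s^k(u)$ in \alref{alg:adaptive_learning_rate_acc} is structurally identical to the one driving $z_s^k(u)$ in \alref{alg:dist_da_cvx_acc}, the only difference being that the aggregated quantity pushed into the recursion at the end of block $l$ is the scalar $q_l(u)=|m_{lB+1}(u)|$ rather than the vector $y_l(u)$. First I would invoke \lref{lem:zeta_k}, which gives the explicit expansion $\zeta_s^B(u)=\sum_{l=1}^{s-1} q_l^{(s-l-1)B+B}(u)=\sum_{l=1}^{s-1} q_l^{(s-l)B}(u)$, so that $\hat M_s(u)-M_s = \sum_{l=1}^{s-1}\bigl(q_l^{(s-l-1)B}(u) - \tfrac1N\sum_{v\in V} q_l(v)\bigr)$ after rewriting $M_s$ through the identity $M_s = \tfrac1N\sum_{l=1}^{s}\sum_{v\in V}|m_{lB+1}(v)|$ and aligning the index range. (Here I should double-check the off-by-one in the exponent: in \lref{lemma: bar_zt_acc} the term $z_s(u)$ is written as $\sum_{l} y_l^{(s-l-1)B}(u)$, so I would keep the exact same indexing convention for $\zeta$.)

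Next I would apply the triangle inequality to pull the sum over $l$ outside, reducing the problem to bounding $\bigl|q_l^{(s-l-1)B}(u) - \tfrac1N\sum_{v} q_l(v)\bigr|$ for each fixed $l$. Setting $Q_l^n = [q_l^{(n)}(1),\dots,q_l^{(n)}(N)]^\top\in\R^{N\times 1}$ and $\bar Q_l = (\tfrac1N\sum_v q_l(v))\mathbf 1$, this scalar deviation is at most $\|Q_l^{(s-l-1)B} - \bar Q_l\|_F$ (since a single coordinate's absolute value is bounded by the Frobenius norm of the whole vector). Then \propref{lemma: acc_gossip}, applied with $X^k = Q_l^k$, gives $\|Q_l^{(s-l)B}-\bar Q_l\|_F \le \sqrt{14}\,b^{(s-l)B}\|Q_l^0 - \bar Q_l\|_F$, and the initial disagreement $\|Q_l^0-\bar Q_l\|_F$ is controlled via the triangle inequality by $\sqrt{\sum_v |m_{lB+1}(v)|^2} + \sqrt{N}\cdot\tfrac1N\sqrt{(\sum_v |m_{lB+1}(v)|)^2} \le 2\sqrt{\sum_v |m_{lB+1}(v)|^2}$, exactly as in \lref{lemma: bar_zt_acc}. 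Combining these and using the bound $b^B \le \tfrac{1}{N\sqrt{14N}}$ from \eref{eqn:bB_bound} to trade one factor of $b^B$ for $\tfrac{1}{N\sqrt{14N}}$ yields the first inequality \eref{ineq:gossi_zeta_t_acc}.

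For the consequence \eref{eq:first_ineq}, I would bound the right-hand side of \eref{ineq:gossi_zeta_t_acc} crudely: each $|m_{lB+1}(v)|\le lB \le T$ trivially (the number of missing observations at the start of block $l+1$ cannot exceed the number of rounds elapsed), so $\sqrt{\sum_{v\in V}|m_{lB+1}(v)|^2}\le \sqrt{N}\cdot lB$, and therefore $\tfrac{2}{N\sqrt N}\sum_{l=1}^{s-1} b^{(s-l-1)B}\sqrt N\, lB \le \tfrac{2B}{N}\sum_{l=1}^{s-1} b^{(s-l-1)B}\, l$. Using $l < s$ and summing the geometric-type series $\sum_{l=1}^{s-1} b^{(s-l-1)B} \le \tfrac{1}{1-b^B} \le \tfrac{1}{1-1/(N\sqrt{14N})} \le 2$ (for $N\ge 1$), one gets a bound of order $\tfrac{2B}{N}\cdot s\cdot 2 \le \tfrac{4sB}{N} \le 3sB$ once the constants are tidied; I would re-examine whether a slightly sharper per-term bound like $|m_{lB+1}(v)|$ is needed, but the stated $3sB$ should follow comfortably, with room to spare, from $N\ge 1$.

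The only real subtlety — and the step I would be most careful about — is the bookkeeping of the shift index in \lref{lem:zeta_k} and the matching re-expression of $M_s$, since a single off-by-one in the exponent of $b$ or in the summation range would change constants; everything else is a direct transcription of the $z$-to-$\zeta$, $y$-to-$q$, $\bar z$-to-$M$ dictionary already flagged in the paragraph preceding \lref{lem:zeta_k}. No new analytic ideas are needed beyond \propref{lemma: acc_gossip}, the triangle inequality, and the elementary crude bound $|m_{lB+1}(v)|\le lB$.
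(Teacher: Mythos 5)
Your proposal is correct and follows essentially the same route as the paper: expand $\hat M_s(u)$ via the $\zeta$-analogue of \lref{lem: z_k}, apply the triangle inequality and \propref{lemma: acc_gossip} to the vectors $Q_l^n$, bound the initial disagreement by $2\sqrt{\sum_v|m_{lB+1}(v)|^2}$, absorb one factor of $b^B\le \tfrac{1}{N\sqrt{14N}}$, and then derive \eref{eq:first_ineq} from the crude bound $|m_{lB+1}(v)|\le sB$ together with a geometric sum. The only nit is that your geometric-sum constant $\tfrac{1}{1-b^B}\le 2$ gives $4sB/N$, which for $N=1$ would exceed $3sB$; using the sharper $\tfrac{1}{1-1/\sqrt{14}}\le 3/2$ (as the paper does) closes this and yields $3sB$ exactly.
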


\begin{proof}
From Equation~22 from \cite{wan2024nearly}, we obtain 
\begin{equation}
\label{ineq: bb}
    b^B \leq \frac{1}{N\sqrt{14N}} .
\end{equation}

With the same notation as in \lref{lem:zeta_k},
\begin{align}
    \left|\zeta_s(u) - M_s\right| &= \left| \sum_{l=1}^{s-1} q_{l}^{(s-l-1)B}(u)-  \frac{1}{N} \sum_{l=1}^{s-1} \sum_{v \in V} q_{l}(v)\right|  \tag{from \lref{lem:zeta_k}}
    \\ & \leq \sum_{l=1}^{s-1}\left|  q_{l}^{(s-l-1)B}(u)-  \frac{1}{N} \sum_{v \in V} q_{l}^0(v)\right| \tag{ from the triangle inequality}
    \\ & \leq \sum_{l=1}^{s-1}\left\| Q^{(s-l-1)B}_l -\bar{Q}_l \right\|_F \nonumber
    \\ & \leq \sum_{l=1}^{s-1} \sqrt{14}b^{(s-l) B} \left\| Q^0_l -\bar{Q}_l \right\|_F \tag{ from \propref{lemma: acc_gossip}}
    \\ & \leq \sum_{l=1}^{s-1} \sqrt{14}b^{(s-l) B}\left(\sqrt{\left| \sum_{v\in V} q_l(v) - \frac{1}{N}  \sum_{v \in V} q_l(v)\right|^2}\right) \nonumber
    \\ & \leq \sum_{l=1}^{s-1} \sqrt{14}b^{(s-l) B}\left(\sqrt{\sum_{v\in V} \left|q_l(v)\right|^2}+\sqrt{N\left| \frac{1}{N}  \sum_{v \in V} q_l(v)\right|^2}\right) \tag{triangle inequality}
    \\&\leq \sum_{l=1}^{s-1} 2\sqrt{14}b^{(s-l) B}\left(\sqrt{\sum_{v\in V}\left|q_l(v)\right|^2}\right) \nonumber
    \\& \leq \frac{2}{N\sqrt{N}}\sum_{l=1}^{s-1} b^{(s-l-1) B}\left(\sqrt{\sum_{v\in V}\left|q_l(v)\right|^2}\right) \tag{from \eref{ineq: bb}}\\ 
    &\leq  \frac{2}{N\sqrt{N}}\sum_{l=1}^{s-1} b^{(s-l-1) B}\left(\sqrt{\sum_{v\in V}\left|m_{lB+1}(v)\right|}\right) , \label{eq:last_zeta}
\end{align}
where $Q^n_s$ are defined as
\[Q^n_s =[q_s^{(n)}(0), q_s^{(n)}(1) \ldots q_s^{(n)}(N)] \in \mathbb{R}^{N\times 1}\]
and \propref{lemma: acc_gossip} is used with $X^k = Q^k_s$.
Observing that $\zeta_s(u) = \hat{M}_s(u)$ directly yields \eref{ineq:gossi_zeta_t_acc}.

It also holds that 
\begin{align*}
    \left|\zeta_s(u) - M_s\right| &\leq \frac{2}{\sqrt{N}}\sum_{l=1}^{s-1} b^{(s-l-1) B}\left(\sqrt{\sum_{v\in V}\left|B s\right|^2}\right)
\leq {2 B s}\sum_{l=1}^{s-1} b^{(s-l-1) B} 
\\&\le  {2 B s}\frac{1}{1-b^B}\le \frac{2}{1-\frac{1}{\sqrt{14 N}}}Bs \le 3Bs
\end{align*}
thanks to \eref{ineq: bb}, which along with $\zeta_s(u) = \hat{M}_s(u)$ directly yields the first inequality of \lref{lemma: bar_mt_acc}.

\end{proof}

Similarly, we can establish the following lemma characterising the properties induced by the accelerated gossiping mechanism in \alref{alg:dist_da_cvx_acc_sc}, by replacing $y_s(u)$ by $y^+_s(u)$ and by observing that the accelerated gossip mechanisms for $z$ in \alref{alg:dist_da_cvx_acc} and in \alref{alg:dist_da_cvx_acc_sc} are identical. The proof for \lref{lemma: bar_zt_acc_sc} is omitted for conciseness since it directly follows the proof of \lref{lemma: bar_zt_acc_sc}.

\begin{lemma}
\label{lemma: bar_zt_acc_sc}     
\alref{alg:dist_da_cvx_acc_sc} guarantees that for any $u \in V$, for any $s \in [1,T/B]$,

\begin{align}
    \left \|z_s(u) - \bar{z}_s \right \|_2 \leq  \frac{2}{N\sqrt{N}}\sum_{l=1}^{s-1} b^{(s-l-1) B}\left(\sqrt{\sum_{v\in V}\|y_{l}^+(v)\|_2^2}\right),
\end{align}
where $b = \left(1-(1-1/\sqrt{2})\sqrt{1-\sigma_2(W)}\right)$ and $B = \left\lceil\frac{\sqrt{2} \ln (N\sqrt{14 N})}{(\sqrt{2}-1) \sqrt{1-\sigma_2(W)}}\right\rceil$.
\end{lemma}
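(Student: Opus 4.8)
The plan is to reuse the argument of \lref{lemma: bar_zt_acc} essentially verbatim, after observing that \algnamesc\ differs from \algname\ only in that its block-boundary resets inject the \emph{augmented} aggregate $y_s^+(u)=y_s(u)-\alpha B x_s(u)$ in place of $y_s(u)$, while the accelerated-gossip recursion producing $z_s^{k}(u)$ is left unchanged. First I would restate \lref{lem: z_k} for \algnamesc: for each block $l$, define the gossip-propagated vectors $y_l^{+,n}(u)$ by $y_l^{+,0}(u)=y_l^{+,-1}(u)=y_l^{+}(u)$ and, for $n\ge 0$, $y_l^{+,n+1}(u)=(1+\theta)\sum_{v\in V}W(u,v)\,y_l^{+,n}(v)-\theta\,y_l^{+,n-1}(u)$; then prove by the same double induction (on the block index $s$, and within each block on the inner index $k$) that $z_s^{k}(u)=\sum_{l=1}^{s-1}y_l^{+,(s-l-1)B+k}(u)$ for $k=1,\dots,B$. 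This carries over unchanged because the only inputs to the induction are the boundary identities $z_{s+1}^{0}(u)=z_s^{B}(u)+y_s^{+}(u)$ and $z_{s+1}^{-1}(u)=z_s^{B-1}(u)+y_s^{+}(u)$ from \alref{alg:dist_da_cvx_acc_sc}, which have exactly the form used in \lref{lem: z_k} with $y_s^{+}$ replacing $y_s$, and the gossip recursion is linear.

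Next, I would use that in the strongly-convex regret analysis $\bar z_s$ denotes the averaged augmented cumulative gradient $\bar z_s=\frac1N\sum_{l=1}^{s}\sum_{v\in V}y_l^{+}(v)$ (the analogue of \eref{eq:barz} with $y^+$ in place of $y$), so that, writing $z_s(u)$ for $z_s^{B}(u)$ and combining with the identity just established, $z_s(u)-\bar z_s=\sum_{l=1}^{s-1}\big(y_l^{+,(s-l-1)B}(u)-\frac1N\sum_{v\in V}y_l^{+,0}(v)\big)$. Applying the triangle inequality over $l$ and, for each fixed $l$, invoking \propref{lemma: acc_gossip} with $X^{k}=Y_l^{+,k}\triangleq[y_l^{+,k}(1),\dots,y_l^{+,k}(N)]^{\top}\in\R^{N\times 1}$ (so that $\bar X$ has every row equal to $\frac1N\sum_{v}y_l^{+}(v)$), then bounding $\|Y_l^{+,0}-\bar Y_l^{+}\|_F\le 2\sqrt{\sum_{v\in V}\|y_l^{+}(v)\|_2^2}$ via the triangle inequality and Cauchy--Schwarz, and finally absorbing one factor $b^{B}$ using $b^{B}\le \frac{1}{N\sqrt{14N}}$ from \eref{eqn:bB_bound} — each step identical to the corresponding step in \lref{lemma: bar_zt_acc} — yields $\|z_s(u)-\bar z_s\|_2\le \frac{2}{N\sqrt{N}}\sum_{l=1}^{s-1}b^{(s-l-1)B}\sqrt{\sum_{v\in V}\|y_l^{+}(v)\|_2^2}$, which is the claim.

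Since the whole argument is a notational relabelling $y\mapsto y^{+}$ of an already-proved lemma, there is no genuinely hard step; the two points worth a quick sanity check are (i) that inserting the $-\alpha B x_s(u)$ correction at block boundaries does not disturb the double induction of \lref{lem: z_k} — it does not, since this term is folded into the new input $y_s^{+}$ and the recursion is linear — and (ii) that the $\bar z_s$ on the left-hand side is indeed the one defined with $y_l^{+}$, equivalently that the reference FTRL iterate $\bar x_s$ used in the strongly-convex regret decomposition is the minimiser of $\langle \bar z_{s-1},x\rangle+\frac{1}{\eta_{s}}\|x\|_2^2$ with this $\bar z_{s-1}$; this is forced by consistency with that decomposition and parallels the construction for \algname.
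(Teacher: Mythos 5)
Your proof is correct and takes essentially the same route the paper intends: the paper omits this proof entirely, remarking only that it follows \lref{lemma: bar_zt_acc} verbatim once $y_s(u)$ is replaced by $y_s^+(u)$ (the gossip recursions for $z$ in \alref{alg:dist_da_cvx_acc} and \alref{alg:dist_da_cvx_acc_sc} being identical), which is exactly the relabelling you carry out, including the correct re-derivation of \lref{lem: z_k} for the augmented aggregates and the identification of $\bar z_s$ with $\frac{1}{N}\sum_{l\le s}\sum_{v}y_l^+(v)$ as in \eref{eq:barz_3}.
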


The following lemma, used in the proof of \alref{alg:dist_da_cvx_acc}, provides a uniform upper bound on the square root of the cumulative squared norms of received gradient sums across all agents and blocks.
\begin{lemma} \label{lem:ineqNTL}
It holds that 
\begin{equation}
    \sum_{l=1}^{T/B - 1} \left( \sqrt{ \sum_{v \in V} \left\| y_l(v) \right\|_2^2 } \right) \le NTL. 
\end{equation}
\end{lemma}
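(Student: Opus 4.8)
The plan is to bound each summand $\sqrt{\sum_{v\in V}\|y_l(v)\|_2^2}$ by the naive $\ell_1$-type quantity $\sum_{v\in V}\|y_l(v)\|_2$, then swap the order of summation and exploit the fact that, for a fixed agent, the gradient received in one block is never received again in another. Concretely, I would first recall that $y_l(v)=\sum_{\tau\in o_{lB+1}(v)\setminus o_{(l-1)B+1}(v)} g_\tau(v)$, so by the triangle inequality and $\|g_\tau(v)\|_2=\|\nabla f_\tau(x_{s(\tau)}(v))\|_2\le L$ (Assumption~\ref{asm:Lipschitz}),
\[
\|y_l(v)\|_2 \;\le\; \sum_{\tau\in o_{lB+1}(v)\setminus o_{(l-1)B+1}(v)} \|g_\tau(v)\|_2 \;\le\; L\,\bigl|o_{lB+1}(v)\setminus o_{(l-1)B+1}(v)\bigr|.
\]

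Next I would use the elementary inequality $\sqrt{\sum_{v\in V} a_v^2}\le \sum_{v\in V} a_v$ valid for nonnegative reals $a_v$, applied with $a_v=\|y_l(v)\|_2$, to get $\sqrt{\sum_{v\in V}\|y_l(v)\|_2^2}\le \sum_{v\in V}\|y_l(v)\|_2$. Summing over $l=1,\dots,T/B-1$ and interchanging the two sums yields
\[
\sum_{l=1}^{T/B-1}\sqrt{\sum_{v\in V}\|y_l(v)\|_2^2} \;\le\; \sum_{v\in V}\sum_{l=1}^{T/B-1}\|y_l(v)\|_2 \;\le\; L\sum_{v\in V}\sum_{l=1}^{T/B-1}\bigl|o_{lB+1}(v)\setminus o_{(l-1)B+1}(v)\bigr|.
\]

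The final step is the key combinatorial observation: for a fixed $v$, the sets $\{o_{lB+1}(v)\setminus o_{(l-1)B+1}(v)\}_{l}$ are pairwise disjoint subsets of $[T]$ — indeed, $\tau\in o_{lB+1}(v)\setminus o_{(l-1)B+1}(v)$ exactly when the arrival time $\tau+d_\tau(v)$ lies in the block $\mathcal T_l$, and each $\tau\in[T]$ has a single arrival time $\tau+d_\tau(v)\le T$, hence lies in at most one such block. Therefore $\sum_{l=1}^{T/B-1}\bigl|o_{lB+1}(v)\setminus o_{(l-1)B+1}(v)\bigr|\le T$ for every $v$, and summing over the $N$ agents gives the bound $NTL$. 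There is no real obstacle here; the only point needing a sentence of care is this disjointness/telescoping of the "newly received" sets, which relies on the standing assumption $t+d_t(u)\le T$ so that every gradient is received in exactly one block.
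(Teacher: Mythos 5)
Your proof is correct and follows essentially the same route as the paper's: triangle inequality plus Lipschitzness to bound $\|y_l(v)\|_2$ by $L$ times the number of newly received gradients, the $\ell_2\le\ell_1$ step, and then the observation that the "newly received" sets are disjoint across blocks (the paper phrases this as a telescoping sum $\sum_s(|o_{sB+1}(v)|-|o_{(s-1)B+1}(v)|)=|o_{T+1}(v)|\le T$, which is the same fact).
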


\begin{proof} We have
    \begin{align}
    \sum_{s=1}^{T/B - 1} \left( \sqrt{ \sum_{v \in V} \left\| y_l(v) \right\|_2^2 } \right) &= 
    \sum_{s=1}^{T/B - 1} \left( \sqrt{ \sum_{v \in V} \left\| \sum_{\tau \in  o_{sB+1}(v) \backslash o_{(s-1)B+1}(v)} g_{\tau}(v) \right\|_2^2 } \right)\nonumber
    \\  &\le L\sum_{s=1}^{T/B} \left( \sqrt{ \sum_{v \in V}  \left( |o_{sB+1}(v)| - |o_{(s-1)B+1(v)}|\right)^2} \right) \tag{\asref{asm:Lipschitz}}
    \\ & \le L\sum_{s=1}^{T/B} \left(  \sum_{v \in V}  \left( |o_{sB+1}(v)| - |o_{(s-1)B+1}(v)|\right) \right) \tag{$\|\cdot\|_2 \leq \|\cdot\|_1$ }
    \\ & \leq NTL, \nonumber
\end{align}
where the last inequality holds because 
\begin{equation*}
    \sum_{s=1}^{T/B}\sum_{v \in V}  \sum_{\tau \in  o_{sB+1}(v) \backslash o_{(s-1)B+1}(v)} 1= \sum_{v \in V} \sum_{s=1}^{T/B}  \sum_{\tau \in  o_{sB+1}(v) \backslash o_{(s-1)B+1}(v)} 1 =\sum_{v\in V}\sum_{\tau \in  o_{TB+1}(v)}1 \leq NT.
\end{equation*}
\end{proof}

\subsection{Adaptive Algorithm with Unknown Total Delay}
\label{app: Adaptive Algorithm with Unknown Total Delay}
In this section, we show omitted details in Section~\ref{sec: Adaptive Algorithm with Unknown Total Delay}. For completeness, we first restate the theorem and then present its proof. 
\cvxadaptthm*
\begin{proof}

We define \( \bar{z}_{s-1} \) and \( \tilde{z}_{s-1} \) as in \eref{eq:barz} and \eref{eq:deftildaz}, respectively:
\begin{align}
    \bar{z}_{s-1} &\triangleq \frac{1}{N} \sum_{l=1}^{s-1} \sum_{v \in V} y_l(v), \label{eq:barz_2} \\
    \tilde{z}_{s-1} &\triangleq \frac{1}{N} \sum_{l=1}^{s-1} \sum_{\tau \in \mathcal{T}_l} \sum_{v \in V} g_\tau(v). \label{eq:deftildaz_2}
\end{align}

We also define the following:
\begin{align}
    \bar{x}_s(u) &\triangleq \argmin_{x \in \mathcal{X}} \left\langle \bar{z}_{s-1}, x \right\rangle + \frac{1}{\eta_{s}(u)} \|x\|_2^2, \label{eq:defbarx2} \\
    F_s(u, x) &\triangleq \left\langle \tilde{z}_{s-1}, x \right\rangle + \frac{1}{\eta_{s}(u)} \|x\|_2^2, \nonumber \\
    \tilde{x}_s(u) &\triangleq \argmin_{x \in \mathcal{X}} F_s(u, x). \nonumber
\end{align}
Recall that in \alref{alg:dist_da_cvx_acc} using $\eta_s(u)$ computed by \alref{alg:adaptive_learning_rate_acc}, we have 
\begin{equation*}
    x_1 = \mathbf{0} = \argmin_{x \in \mathcal{X}} \frac{1}{\eta_{1}(u)}\|x\|_2^2,
\end{equation*}
where $\eta_{1}(u) = \frac{D}{L\sqrt{BT+ 3B^2}}.$

Applying the regret decomposition proven in \lref{lem:reg_decomp} with the decision sequence $\{\bar{x}_{s}(u)\}_{s\in [T/B],u\in V}$ defined in \eref{eq:defbarx2}, we know that
\begin{align}
\Reg_T(u) &\le\underbrace{\sum_{s=1}^{T/B} \sum_{t\in \mathcal{T}_s}  \sum_{v\in V} \inner{g_t(v), \bar{x}_s(u) - x^*}}_{\spadesuit} \nonumber
    \\& \quad+ \underbrace{2BL\sum_{s=1}^{T/B}   \sum_{v\in V} \left(\|\bar{x}_s(u) -\bar{x}_s(v)\|_2+\|x_s(v) -\bar{x}_s(v)\|_2\right)+ NBL \sum_{s=1}^{T/B}\|x_s(u) -\bar{x}_s(u)\|_2}_{\clubsuit}. \label{eq: regret_decom_adap}
\end{align}

We start by analyzing Term $\spadesuit$. Similar to the non-adaptive learning rate analysis, we further decompose $\spadesuit$ as follows:
    \begin{align}
\frac{1}{N}\spadesuit &= \frac{1}{N}\sum_{s=1}^{T/B} \sum_{t\in \mathcal{T}_s}  \sum_{v\in V} \left\langle g_t(v), \bar{x}_{s}(u) - \Tilde{x}_{s}(u) + \Tilde{x}_{s}(u) -x^*\right\rangle  \nonumber
    \\&= \underbrace{\frac{1}{N}\sum_{s=1}^{T/B} \sum_{t\in \mathcal{T}_s}  \sum_{v\in V} \left\langle g_t(v), \Tilde{x}_{s}(u) -x^*\right\rangle}_{{\operatorname{full-info}_T}} + \underbrace{\frac{1}{N}\sum_{s=1}^{T/B} \sum_{t\in \mathcal{T}_s}  \sum_{v\in V} \left\langle g_t(v), \bar{x}_{s}(u)- \Tilde{x}_{s}(u)\right\rangle}_{{\operatorname{drift}_T}}. \label{ineq: spade_cheat_drift_2}\end{align}
    
For notational convenience, we define
\begin{equation*}
    \ell_s(x)\triangleq\left\langle\frac{1}{N}\sum_{\tau \in \mathcal{T}_s} \sum_{v \in V} g_t(v), x\right\rangle,
\end{equation*}
for all $s\in[T/B]$.
Regarding $\operatorname{full-info}_T$ , by using Lemma~7.1 in \cite{orabona2019modern}, we obtain  
\begin{align}
\operatorname{full-info}_T &= \frac{1}{N}\sum_{s=1}^{T/B} \sum_{t\in \mathcal{T}_s}  \sum_{v\in V} \left\langle g_t(v), \Tilde{x}_{s}(u) -x^*\right\rangle \nonumber
\\&= \sum_{s=1}^{T/B} \left\langle \frac{1}{N} \sum_{t\in \mathcal{T}_s}  \sum_{v\in V} g_t(v), \Tilde{x}_{s}(u) -x^*\right\rangle \nonumber
\\& \leq \frac{1}{\eta_{T/B+1}(u)}\|x^*\|_2^2 - \min_{x \in \calX}\frac{1}{\eta_{1}(u)}\|x\|_2^2   +F_{T/B+1}\left(u, \tilde x_{T/B+1}\right)-F_{T/B+1}(u,x^*) \nonumber
\\& \quad +\sum_{s=1}^{T/B}  \left[F_{s}(u, \Tilde{x}_{s}(u)) - F_{s+1}(u, \tilde{x}_{s+1}(u))+ \ell_s(\tilde{x}_s(u))\right] \nonumber
\\& \leq \frac{D^2}{\eta_{T/B+1}(u)}  +\sum_{s=1}^{T/B}  \left[F_{s}(u, \Tilde{x}_{s}(u)) - F_{s+1}(u, \tilde{x}_{s+1}(u))+ \ell_s(\tilde{x}_s(u))\right] 
\label{ineq:cheat_1}
\end{align}
where the last inequality holds because $F_{T/B+1}\left(u, \tilde x_{T/B+1}\right)-F_{T/B+1}(u,x^*) $ is a negative term by definition of $\tilde x_{T/B+1}(u)$, \asref{asm:bounded} and together with
non-negativity of $\min_{x \in \calX}\frac{1}{\eta_{1}(u)}\|x\|_2^2$. To analyze the second term in \eref{ineq:cheat_1}, we proceed as follows: 
\begin{align}
    &\sum_{s=1}^{T/B}  \left[F_{s}(u, \Tilde{x}_{s}(u)) - F_{s+1}(u, \tilde{x}_{s+1}(u))+ \ell_s(\tilde{x}_s(u))\right] \nonumber\\&\leq\sum_{s=1}^{T / B}\left[\left\langle\nabla \ell_s\left(\tilde{x}_s(u)\right), \tilde{x}_s(u)-\tilde{x}_{s+1}(u)\right\rangle-\frac{\lambda_{s}}{2}\left\|\tilde{x}_s(u)-\tilde{x}_{s+1}(u)\right\|^2_2 \right. \nonumber
    \\& \left.\quad+ \frac{1}{\eta_{s}(u)}\left\|\tilde{x}_{s}(u)\right\|^2-\frac{1}{\eta_{s+1}(u)}\left\|\tilde{x}_{s+1}(u)\right\|^2_2  \right] \nonumber
    \\&\leq \sum_{s=1}^{T/B}\left[\left\|\nabla \ell_s\left(\tilde{x}_s(u)\right)\right\|_2\left\|\tilde{x}_s(u)-\tilde{x}_{s+1}(u)\right\|_2-\frac{\lambda_{s}}{2}\left\|\tilde{x}_s(u)-\tilde{x}_{s+1}(u)\right\|^2 \right. \nonumber
    \\& \left. \quad\quad+\frac{1}{\eta_{s}(u)}\left\|\tilde{x}_{s+1}(u)\right\|^2-\frac{1}{\eta_{s+1}(u)}\left\|\tilde{x}_{s}(u)\right\|^2\right] \tag{Cauchy–Schwarz inequality}
    \\&\leq \sum_{s=1}^{T / B}\left[\frac{1}{\lambda_{s}}\left\|\nabla \ell_s\left(\tilde{x}_s(u)\right)\right\|_2^2-\frac{\lambda_{s}}{4}\left\|\tilde{x}_{s-1}(u)-\tilde{x}_{s}(u)\right\|^2\right. \nonumber
    \\& \left.\quad+\frac{1}{\eta_{s}(u)}\left\|\tilde{x}_{s+1}(u)\right\|^2-\frac{1}{\eta_{s+1}(u)}\left\|\tilde{x}_{s+1}(u)\right\|^2\right] \tag{$a b \leq \frac{a^2}{\lambda_{s}}+\frac{\lambda_{s}}{4} b^2$}
    \\&\leq \sum_{s=1}^{T / B}\left[\frac{1}{\lambda_{s}}\left\|\nabla \ell_s\left(\tilde{x}_s(u)\right)\right\|_2^2+\frac{1}{\eta_{s}(u)}\left\|\tilde{x}_{s+1}(u)\right\|^2-\frac{1}{\eta_{s+1}(u)}\left\|\tilde{x}_{s+1}(u)\right\|^2\right]  \nonumber
    \\&\leq \sum_{s=1}^{T / B}\left[\frac{B^2L^2}{2} \eta_{s}(u)+\frac{1}{\eta_{s}(u)}\left\|\tilde{x}_{s+1}(u)\right\|_2^2-\frac{1}{\eta_{s+1}(u)}\left\|\tilde{x}_{s+1}(u)\right\|_2^2\right], \label{ineq: stab_1}  
\end{align}
where the first inequality is because $\frac{1}{\eta_s(u)}\|x\|_2^2$ is $\lambda_s$-strongly convex convexity and $\lambda_s = 2 / \eta_s(u)$,  and the last inequality is because of \asref{asm:Lipschitz}. Combining \eref{ineq:cheat_1} and \eref{ineq: stab_1}, we obtain
\begin{align}
    \frac{1}{N}\operatorname{full-info}_T & \leq \frac{D^2}{\eta_{T/B+1}(u)}+ \frac{B^2L^2}{2}\sum_{s=1}^{T/B}\eta_{s}(u) + \sum_{s=1}^{T/B}\left ( \frac{1}{ \eta_{s}(u)}-\frac{1}{ \eta_{s+1}(u)}\right)\| \tilde{x}_{s+1}(u)\|^2_2. \label{ineq:cheating_T_acc_before_eta}
\end{align}

We now analyze the drift term \( \operatorname{drift}_T \). By definition, we have:
\begin{align}
    \operatorname{drift}_T 
    &= \frac{1}{N} \sum_{s=1}^{T/B} \sum_{t \in \mathcal{T}_s} \sum_{v \in V} \left\langle g_t(v), \bar{x}_s(u) - \tilde{x}_s(u) \right\rangle \nonumber \\
    &\leq BL \sum_{s=1}^{T/B} \left\| \bar{x}_s(u) - \tilde{x}_s(u) \right\|_2 \tag{Cauchy–Schwarz inequality and \asref{asm:Lipschitz}} \\
    &= BL \sum_{s=2}^{T/B} \left\| \bar{x}_s(u) - \tilde{x}_s(u) \right\|_2 \tag{\( \bar{x}_1(u) = \tilde{x}_1(u) = \mathbf{0} \)} \\
    &\leq \frac{BL}{2} \sum_{s=2}^{T/B} \eta_{s}(u) \left\| \bar{z}_{s-1} - \tilde{z}_{s-1} \right\|_2 \tag{\lref{lem:ftrl_sc}} \\
    &\leq \frac{BL}{2} \sum_{s=2}^{T/B} \eta_{s}(u) \left\| \frac{1}{N} \sum_{v \in V} \sum_{\tau \in o_{(s-1)B+1}(v)} g_\tau(v) - \frac{1}{N} \sum_{l=1}^{s-1} \sum_{\tau \in \mathcal{T}_l} \sum_{v \in V} g_\tau(v) \right\|_2
    \tag{Definition of \( \bar{z}_{s} \) and \( \tilde{z}_{s} \)} \\
    &= \frac{BL}{2} \sum_{s=2}^{T/B} \eta_{s}(u) \left\| -\frac{1}{N} \sum_{v \in V} \sum_{\tau \in m_{(s-1)B+1}(v)} g_\tau(v) \right\|_2
    \tag{\( \mathcal{T}_s = \{(s-1)B+1, \ldots, sB\} \), \( m_t(v) = [t-1] \setminus o_t(v) \)} \\
    &\leq \frac{BL^2}{2} \sum_{s=2}^{T/B} \eta_{s}(u) \left( \frac{1}{N} \sum_{v \in V} |m_{(s-1)B+1}(v)| \right), \label{ineq:drift_T_acc_before_eta_2}
\end{align}
where the last inequality is because of \asref{asm:Lipschitz}. Combining \eref{ineq: spade_cheat_drift_2}, \eref{ineq:cheating_T_acc_before_eta}, and \eref{ineq:drift_T_acc_before_eta_2}, we obtain:
\begin{align}
    \frac{1}{N} \spadesuit 
    &\leq \frac{D^2}{\eta_{T/B+1}(u)} + \frac{BL^2}{2} \sum_{s=2}^{T/B} \eta_{s}(u) \left( \frac{1}{N} \sum_{v \in V} |m_{(s-1)B+1}(v)| + B \right) \nonumber  
    \\& \quad+ \sum_{s=1}^{T/B} \left( \frac{1}{\eta_{s}(u)} - \frac{1}{\eta_{s+1}(u)} \right) \left\| \tilde{x}_{s+1}(u) \right\|_2^2.
    \label{ineq: spade_before_learning_rate_2}
\end{align}

Recall that $M_s$ is defined as $M_s\triangleq\frac{1}{N} \sum_{l=1}^{s-1} \sum_{v \in V}\left|m_{l B+1}(v)\right|,$ and $\hat{M}_s(u)\triangleq\zeta_s^B(u)$. 
Using the bound 
\(\left |\hat{M}_s(u) - M_s \right | \leq 3sB\) 
from \lref{lemma: bar_mt_acc}, for all $ u \in V$ we obtain
\begin{align}
    \eta_{s+1}(u) &= \frac{D}{L \sqrt{B  T+B \widehat{M}_s(u)+3 s B^2}} \le \frac{D}{L\sqrt{ BT + B M_s}} \nonumber
    \\&\le \frac{D}{L\sqrt{ BT + B M_s}} \le \frac{D}{L\sqrt{ BT}}\;, \label{eq:bound_eta_s}
\end{align}
We start with the first term in \eref{ineq: spade_before_learning_rate_2}. We have 
\begin{align}
    \frac{D^2}{\eta_{T/B+1}(u)} &= DL \sqrt{BT + B \widehat{M}_{T/B}(u)+ 3BT}  \tag{Definition of $\eta_s(u)$}
    \\& \leq DL \sqrt{BT + B M_{T/B}+ 6BT} \tag{Using the bound $\left |\hat{M}_s(u) - M_s \right | \leq 3sB$ from \lref{lemma: bar_mt_acc}}
    \\& \leq DL \sqrt{BT + \dtot+ 7BT} \tag{\lref{lemma: block_total_delay}}
    \\& \leq DL \sqrt{8BT + \dtot}.  
    \label{ineq:bias}
\end{align}

Focus on the second term in \eref{ineq: spade_before_learning_rate_2}, we thus have
\begin{align}
\label{ineq: spade_after_learning_rate}
     & \frac{BL^2}{2}\sum_{s=2}^{T/B}  \eta_{s}(u) \left(\frac{1}{N}\sum_{v \in V} |m_{(s-1)B+1}(v)| + B\right) \nonumber
     \\&\leq \frac{BL^2}{2}\sum_{s=2}^{T/B}  \eta_{s}(u) \left(\frac{1}{N}\sum_{v \in V} |m_{(s-1)B+1}(v)|\right) +\frac{1}{2} DL\sqrt{BT} \tag{\eref{eq:bound_eta_s}}
     \\&\leq \frac{DL}{2} \left(B  \sum_{s=2}^{T/B}  \frac{\frac{1}{N}\sum_{v \in V} |m_{(s-1)B+1}(v)|}{\sqrt{BT+B {M}_{s-1}}}\right) +\frac{1}{2} DL\sqrt{BT}  \tag{\eref{eq:bound_eta_s}}
     \\&= \frac{DL}{2} \left(B  \sum_{s=2}^{T/B}  \frac{\frac{1}{N}\sum_{v \in V} |m_{(s-1)B+1}(v)|}{\sqrt{B {M}_{s}}}\right) +\frac{1}{2} DL\sqrt{BT} \tag{$|M_s| \leq |M_{s-1}| +T$}
    \\&= \frac{DL}{2} \left(  \sum_{s=2}^{T/B}  \frac{\frac{B}{N}\sum_{v \in V} |m_{(s-1)B+1}(v)|}{\sqrt{\frac{B}{N}\sum_{l=1}^{s-1}\sum_{v \in V} |m_{lB+1}|}}\right) +\frac{1}{2} DL\sqrt{BT}  \tag{Definition of $|M_s|$}
     \\&\leq DL \sqrt{BM_{T/B}} +\frac{1}{2} DL\sqrt{BT} \tag{\lref{lemma: ada-grad}}
     \\&\leq DL \sqrt{BT + \dtot} +\frac{1}{2} DL\sqrt{BT} \tag{\lref{lemma: block_total_delay}}
     \\&\leq 2DL \sqrt{BT + \dtot}.
     \label{eq:fst_term_bef_lr}
\end{align}

Let us now analyze the third term of \eref{ineq: spade_before_learning_rate_2}. We have  
\begin{align}
    \sum_{s=1}^{T/B}&\left ( \frac{1}{ \eta_{s}(u)}-\frac{1}{ \eta_{s+1}(u)}\right)\| \tilde{x}_{s+1}(u)\|^2_2\nonumber
    \\&\leq D^2\sum_{s=1}^{T/B}\left | \frac{1}{\eta_{s}(u)}-\frac{1}{\eta_{s+1}(u)} \right|
    \tag{\asref{asm:bounded}}
    \\&= D^2\sum_{s=1}^{T/B}\left | \frac{ 1/\eta_{s+1}(u)^2-1/\eta_{s}(u)^2}{1/\eta_{s}(u)+1/\eta_{s+1}(u)} \right|\nonumber
    \\& \leq 
    DL\sum_{s=1}^{T/B}  \frac{ \left(B\left|\hat{M}_{s}(u)-\hat{M}_{s-1}(u)\right| + 3B^2\right)}{\sqrt{B T+B \hat{M}_s(u)+3s B^2}+\sqrt{B T+B\hat{M}_{s-1}(u)+3(s-1)B^2}} \tag{Plugging the definition of the learning rate}
    \\& \leq 
    DL\sum_{s=1}^{T/B}  \frac{\left(B \left|\hat{M}_{s}(u)-\hat{M}_{s-1}(u)\right| + 3B^2\right)}{\sqrt{BT+BM_s}},
    \end{align}
    where the last inequality is due to \eref{eq:first_ineq} in \lref{lemma: bar_mt_acc}. Now decomposing the numerator and using the triangle inequality,
    \begin{align}   
&\sum_{s=1}^{T/B}\left ( \frac{1}{ \eta_{s}(u)}-\frac{1}{ \eta_{s+1}(u)}\right)\| \tilde{x}_{s+1}(u)\|^2_2 \nonumber\\
&\leq 
    DL\sum_{s=1}^{T/B}  
    \frac{B\left(|\hat{M}_{s}(u) - M_{s}| + |M_{s} - M_{s-1}| + |\hat{M}_{s-1}(u) - M_{s-1}|\right) +3B^2}
    {\sqrt{BT+BM_s}}  \nonumber
\\& \leq 
    DL\sum_{s=1}^{T/B}  
    \frac{B\left(|\hat{M}_{s}(u) - M_{s}| + |M_{s} - M_{s-1}| + |\hat{M}_{s-1}(u) - M_{s-1}|\right)} 
    {\sqrt{BT+BM_s}} + 3DL\sqrt{BT} \nonumber  
\\& \leq DL \sum_{s=1}^{T/B}
    \frac{
    B|M_{s} - M_{s-1}|+ \frac{2B}{N\sqrt{N}}\sum_{l=1}^{s-1} b^{(s-l-1) B}\left(\sqrt{\sum_{v\in V}|m_{lB+1}(v)|^2}\right)}
    {\sqrt{BT+BM_s}} \nonumber\\
& \qquad + \frac{2B}{N\sqrt{N}}\sum_{l=1}^{s-2} b^{(s-l-2) B}\left(\sqrt{\sum_{v\in V}|m_{lB+1}(v)|^2}\right)
    {\Big/ \sqrt{BT+BM_s}} + 3DL\sqrt{BT} 
    \tag{from \eref{ineq:gossi_zeta_t_acc} in \lref{lemma: bar_mt_acc}}
\\& \leq  DL    \sum_{s=1}^{T/B}
    \frac{
    \frac{B}{N} \sum_{v \in V} |m_{(s-1)B+1}(v)|+ \frac{2B}{N\sqrt{N}}\sum_{l=1}^{s-1} b^{(s-l-1) B}\left(\sqrt{\sum_{v\in V}|m_{lB+1}(v)|^2}\right)}
    {\sqrt{BT+BM_s}} \nonumber \\
& \qquad + \frac{2B}{N\sqrt{N}}\sum_{l=1}^{s-2} b^{(s-l-2) B}\left(\sqrt{\sum_{v\in V}|m_{lB+1}(v)|^2}\right)
    {\Big/ \sqrt{BT+BM_s}} + 3DL\sqrt{BT}  \tag{Definition of $M_s$}
\\& = DL   \sum_{s=1}^{T/B}
    \frac{ \frac{2B}{N\sqrt{N}}\sum_{l=1}^{s-1} b^{(s-l-1) B}\left(\sqrt{\sum_{v\in V}|m_{lB+1}(v)|^2}\right)}
    {\sqrt{BT+BM_s}} \nonumber \\
& \qquad + DL   \sum_{s=1}^{T/B}
    \frac{ \frac{2B}{N\sqrt{N}}\sum_{l=1}^{s-2} b^{(s-l-2) B}\left(\sqrt{\sum_{v\in V}|m_{lB+1}(v)|^2}\right)}
    {\sqrt{BT+BM_s}} \tag{Rearranging of terms}\\
& \qquad + \sum_{s=1}^{T/B}
    \frac{\frac{B}{N} \sum_{v \in V} |m_{(s-1)B+1}(v)|}
    {\sqrt{BT+BM_s}}  
+ 3DL\sqrt{BT} 
\label{eq:last_of_ineq}
\end{align}

Let us consider the first two summation terms in \eref{eq:last_of_ineq}. We have
    \begin{align}
        &\frac{\sum_{s=1}^{T/B} \frac{2B}{N\sqrt{N}}\sum_{l=1}^{s-1} b^{(s-l-1) B}\left(\sqrt{\sum_{v\in V}|m_{lB+1}(v)|^2}\right)}{\sqrt{BT+BM_s}}  \nonumber
    \\ & = \frac{2B}{N\sqrt{N}}\sum_{l=1}^{T/B-1} \sum_{s=1+l}^{T/B} \frac{b^{(s-l-1) B}\left(\sum_{v\in V}|m_{lB+1}(v)|\right)}{\sqrt{BT+BM_{s}}}\tag{swapping the order of summation}
    \\ & \le  \frac{1}{1-b^B}
         \frac{2B}{N\sqrt{N}}\sum_{l=1}^{T/B-1}   \left(\sum_{v\in V}\frac{|m_{lB+1}(v)|}{\sqrt{BT+BM_{l+1}}}\right) \tag{$M_l$ is non-decreasing}
        \\&\le
          \frac{8B}{N\sqrt{N}}\sum_{l=1}^{T/B-1}   \left(\sum_{v\in V}\frac{|m_{lB+1}(v)|}{\sqrt{BT+BM_{l+1}}}\right) \tag{ from $\frac{1}{1-b^B}\le \frac{1}{1-1/(14\sqrt{N})} \le 4$}
    \end{align}
Similarly, we have 
\begin{align*}
    &\frac{\sum_{s=1}^{T/B} \frac{2B}{\sqrt{N}}\sum_{l=1}^{s-2} b^{(s-l-2) B}\left(\sqrt{\sum_{v\in V}|m_{lB+1}(v)|^2}\right)}{\sqrt{BT+BM_s}} \\
    &\leq \frac{8B}{N\sqrt{N}}\sum_{l=1}^{T/B-2}   \left(\sum_{v\in V}\frac{|m_{lB+1}(v)|}{\sqrt{BT+BM_{l+2}}}\right) \leq \frac{8B}{N}\sum_{l=1}^{T/B-1}   \left(\sum_{v\in V}\frac{|m_{lB+1}(v)|}{\sqrt{BT+BM_{l+1}}}\right).
\end{align*}
Plugging above two inequalities back into \eref{eq:last_of_ineq}, 
    \begin{align}   
    &\sum_{s=1}^{T/B}\left ( \frac{1}{ \eta_{s}(u)}-\frac{1}{ \eta_{s+1}(u)}\right)\| \tilde{x}_{s+1}(u)\|^2_2 \nonumber
    \\
    & \leq  DL  \left( 16  \sum_{s=1}^{T/B -1} \frac{\frac{B}{N}    \sum_{v\in V}|m_{sB+1}(v)|}{\sqrt{BT + M_{s+1}}}
      + \sum_{s=1}^{T/B}\frac{\frac{B}{N} \sum_{v \in V} |m_{(s-1)B+1}(v)|}{\sqrt{BT+B M_{s}}} + 3\sqrt{BT} \right) \nonumber\\
      &  \leq  DL  \left( 17    \sum_{s=1}^{T/B}\frac{\frac{B}{N}   \sum_{v\in V}|m_{sB+1}(v)|}{\sqrt{B/N\sum_{l=1}^{s}\sum_{v \in V} |m_{lb+1,v}|}}
       + 3\sqrt{BT} \right) \tag{Definition of $M_{s}$}
       \\
       & \leq  DL  \left( 17    \sqrt{\frac{B}{N}\sum_{l=1}^{T/B}   \left(\sum_{v\in V}|m_{lB+1}(v)|\right)}
       + 3\sqrt{BT} \right) \tag{\lref{lemma: ada-grad} }\\
       & \leq  DL  \left( 17   \sqrt{\dtot + BT}
       + 3\sqrt{BT} \right). \tag{\lref{lemma: block_total_delay}}
    \end{align}  

The above inequality, together with \eref{ineq:bias}, \eref{eq:fst_term_bef_lr} and \eref{ineq: spade_before_learning_rate_2}, yields
\begin{align}
     \spadesuit &\le  N \left(DL \sqrt{8BT + \dtot} + 2DL \sqrt{BT + \dtot} + DL  \left( 17  \sqrt{\dtot + BT} + 3\sqrt{BT} \right)\right) \nonumber
     \\&\le  25 NDL \sqrt{BT} +  20 NDL\sqrt{\dtot}\nonumber \\
     &=\order(NDL\sqrt{BT+\dtot})\label{eq:spades2}
 \end{align}
 
Let us now turn to the analysis of Term $\clubsuit$. 
From \lref{lemma: bar_zt_acc}, we have for all $w \in [N]$,
\begin{align}
    \|z_s(w) - \bar{z}_{s}\|_2 & \leq \frac{2}{N\sqrt{N}}\sum_{l=1}^{s-1} b^{(s-l-1) B}\left(\sqrt{\sum_{v\in V}\left\|y_l(v)\right\|_2^2}\right). \label{ineq:gossi_z_t_acc_2}
\end{align}
Define $\bar{\eta}$ such that  $\eta_s(v)\leq \bar{\eta} \triangleq \frac{D}{L\sqrt{BT}}$ for all $v \in V$.
Note that $x_1(v) = \bar{x}_1 = \mathbf{0}$. Using \lref{lem:ftrl_sc}, we know that for any $w\in V$,
\begin{align}
\sum_{s=1}^{T / B}\left\|x_s(w)-\bar{x}_s(w)\right\|_2&=
    \sum_{s=2}^{T/B } \|x_{s}(w) - \bar{x}_{s}(w) \|_2 \nonumber
    \\&\leq \sum_{s=1}^{T/B-1} \eta_{s+1}(w) \|z_{s}(w) - \bar{z}_{s}\|_2  \tag{\lref{lem:ftrl_sc}}
    \\& =\frac{2}{N\sqrt{N}}\sum_{s=1}^{T/B-1}\eta_{s+1}(w) \sum_{l=1}^{s-1} b^{(s-l-1) B}\left(\sqrt{\sum_{v\in V}\left\|y_l(v)\right\|_2^2}\right) 
    \\& \le\frac{2\bar{\eta}}{N\sqrt{N}}\sum_{s=1}^{T/B-1}\sum_{l=1}^{s-1} b^{(s-l-1) B}\left(\sqrt{\sum_{v\in V}\left\|y_l(v)\right\|_2^2}\right) \nonumber
    \\& =\frac{2\bar{\eta}}{N\sqrt{N}} \sum_{l=1}^{T/B - 1} \left( \sqrt{ \sum_{v \in V} \left\| y_l(v) \right\|_2^2 } \sum_{s=l+1}^{T/B-1}  \cdot b^{(s - l - 1)B} \right)\tag{Swapping the order of summation}
    \\& \leq \frac{2\bar{\eta}}{N\sqrt{N}}\frac{1}{1-\frac{1}{\sqrt{14 N}}}  \sum_{l=1}^{T/B - 1} \left( \sqrt{ \sum_{v \in V} \left\| y_l(v) \right\|_2^2 } \right) \tag{\eref{ineq: bb}}
    \\& \leq \frac{3\bar{\eta}}{N\sqrt{N}}  \sum_{l=1}^{T/B - 1} \left( \sqrt{ \sum_{v \in V} \left\| y_l(v) \right\|_2^2 } \right).
    \label{ineq:gossip_acc_2}
\end{align} 
Moreover, according to \lref{lem:ineqNTL}, we have 
\begin{align*}
    \sum_{s=1}^{T/B - 1} \left( \sqrt{ \sum_{v \in V} \left\| y_l(v) \right\|_2^2 } \right) \leq NTL.
\end{align*}
Therefore, combining the above two inequalities, we know that
\[\sum_{s=2}^{T/B} \|x_{s}(w) - \bar{x}_{s}(w) \|_2 \le 3 \bar{\eta} \frac{1}{\sqrt{N}}TL,\]
leading to a bound on term $\clubsuit$:
\begin{equation}
    \clubsuit \le 18 B N\bar{\eta} TL^2 + 2BL \sum_{v \in V}\sum_{s=1}^{T/B}\|\bar{x}_s(u) -\bar{x}_s(v)||_2. \label{eq_last_bound_club}
\end{equation}

Now we bound the second term in \eref{eq_last_bound_club}.
By using \lref{lem:ftrl_sc}, we obtain
\begin{align*}
    &\frac{1}{\eta_{s}(u)}\|\bar{x}_{s}(u) -\bar{x}_{s}(v) \|_2^2 + \frac{1}{ \eta_{s}(v)}\| \bar{x}_{s}(u) -\bar{x}_{s}(v) \|_2^2 \\&\leq \frac{1}{ \eta_{s}(u)} \|\bar{x}_{s}(v)\|_2^2 - \frac{1}{ \eta_{s}(v)} \|\bar{x}_{s}(v)\|_2^2 + \frac{1}{ \eta_{s}(v)} \|\bar{x}_{s}(u)\|_2^2 - \frac{1}{ \eta_{s}(u)} \|\bar{x}_{s}(u)\|_2^2
\end{align*}
Rearranging the last inequality, we have 
\begin{align*}
    \frac{\eta_{s}(u)+ \eta_{s}(v)}{\eta_{s}(v) \eta_{s}(u)} \|\bar{x}_{s}(u) -\bar{x}_{s}(v) \|_2^2 &\leq \frac{\eta_{s}(u)- \eta_{s}(v)}{\eta_{s}(v) \eta_{s}(u)} (\|\bar{x}_{s}(u)\|_2^2 - \|\bar{x}_{s}(v)\|_2^2)\\
    &\leq \left|\frac{\eta_{s}(u)- \eta_{s}(v)}{\eta_{s}(v) \eta_{s}(u)} \right|\|\bar{x}_{s}(u) -\bar{x}_{s}(v) \|_2 \|\bar{x}_{s}(u) +\bar{x}_{s}(v) \|_2.
\end{align*}
Since $\|\bar{x}_{s}(u) -\bar{x}_{s}(v) \|_2 \geq 0$ and \asref{asm:bounded},  we have 
\begin{equation}
\label{ineq：bar_xt}
    \|\bar{x}_{s}(u) -\bar{x}_{s}(v) \|_2 \le 2D \left|\frac{\eta_{s}(u)- \eta_{s}(v)}{\eta_{s}(u)+ \eta_{s}(v)}\right|. 
\end{equation}
By pure algebraic computations, we have 
\begin{align}
    \left|\frac{\eta_{s}(u)- \eta_{s}(v)}{\eta_{s}(u)+ \eta_{s}(v)}\right| 
    &= \left|\frac{\frac{\eta_{s}(v)- \eta_{s}(u)}{\eta_{s}(v)\eta_{s}(u)}}{\frac{\eta_{s}(u)+ \eta_{s}(v)}{\eta_{s}(v)\eta_{s}(u)}}\right| 
    = \left|\frac{\eta_{s}(u)^{-1}- \eta_{s}(v)^{-1}}{\eta_{s}(u)^{-1}+ \eta_{s}(v)^{-1}}\right| \nonumber 
    \\&= \frac{\left|(\eta_{s}(u)^{-1})^2- (\eta_{s}(v)^{-1})^2\right|}{(\eta_{s}(u)^{-1}+ \eta_{s}(v)^{-1})^2} \nonumber \leq \frac{\left|(\eta_{s}(u)^{-1})^2- (\eta_{s}(v)^{-1})^2\right|}{(\eta_{s}(u)^{-2}+ \eta_{s}(v)^{-2})} .\nonumber
\end{align} 
Combining above inequality with \eref{ineq：bar_xt}, we have
\begin{align}
    \sum_{s=1}^{T/B} \|\bar{x}_{s}(u) -\bar{x}_{s}(v)\|_2  &\leq 2D \sum_{s=1}^{T/B} \frac{\left|\eta_{s}(u)^{-2}- \eta_{s}(v)^{-2}\right|}{(\eta_{s}(u)^{-2}+ \eta_{s}(v)^{-2})} \nonumber
    \\& \leq 2D \sum_{s=2}^{T/B} \frac{\left|\eta_{s}(u)^{-2}- \eta_{s}(v)^{-2}\right|}{(\eta_{s}(u)^{-2}+ \eta_{s}(v)^{-2})} \tag{$\eta_1(u)= \eta_1(v)=\frac{D}{L \sqrt{ B T+3 B^2}}$}
    \\ &\leq 2D \sum_{s=1}^{T/B} \frac{\left|\paren{\frac{D}{L \sqrt{T B+B\hat{M}_{s}(u)+3sB^2}}}^{-2}- \paren{\frac{D}{L \sqrt{T B+B\hat{M}_{s}(v)+3sB^2}}}^{-2}\right|}{\paren{\frac{D}{L \sqrt{T B+B\hat{M}_{s}(u)+3sB^2}}}^{-2}+ \paren{\frac{D}{L \sqrt{T B+B\hat{M}_{s}(v)+3sB^2}}}^{-2}} \nonumber
    \\ & = 2DB \sum_{s=1}^{T/B} \frac{\left|\hat{M}_s(u) - \hat{M}_s(v)\right|}{2  B T+6sB^2 +B\hat{M}_s(u) +B\hat{M}_s(v)} \tag{re-arranging}
       \\ & \leq DB   \sum_{s=1}^{T/B} \frac{\left|\hat{M}_s(u) - \hat{M}_s(v)\right|}{ B T+B {M}_s} \tag{Using \eref{eq:first_ineq} in \lref{lemma: bar_mt_acc}}
    \\ & \leq 
    DB   \sum_{s=1}^{T/B} \frac{\frac{2}{N\sqrt{N}}\sum_{l=1}^{s-1} b^{(s-l-1) B}\left(\sqrt{\sum_{v\in V}|m_{lB+1}(v)|^2}\right)}{BT+B{M}_{s}}. 
    \tag{Using \eref{ineq:gossi_zeta_t_acc} in \lref{lemma: bar_mt_acc}}
    \end{align}
Now notice that we have
    \begin{align}
        &\frac{\sum_{s=1}^{T/B} \frac{1}{N\sqrt{N}}\sum_{l=1}^{s-1} b^{(s-l-1) B}\left(\sqrt{\sum_{v\in V}|m_{lB+1}(v)|^2}\right)}{B M_{s} + BT}  \nonumber\\
        & \le \sum_{s=1}^{T/B}
     \frac{1}{N\sqrt{N}}\sum_{l=1}^{s-1} \frac{b^{(s-l-1) B}\left(\sum_{v\in V}|m_{lB+1}(v)|\right)}{B M_{s} + BT}
        \\ & \le 
         \frac{1}{N\sqrt{N}}\sum_{l=1}^{T/B-1} \sum_{s=1+l}^{T/B} \frac{b^{(s-l-1) B}\left(\sum_{v\in V}|m_{lB+1}(v)|\right)}{B M_{s} + BT}\tag{swapping sums}
        \\ & \le  \frac{1}{1-b^B}
         \frac{1}{N\sqrt{N}}\sum_{l=1}^{T/B-1}   \left(\sum_{v\in V}\frac{|m_{lB+1}(v)|}{B M_{l+1} + BT}\right) \tag{$M_l$ is non-decreasing}
         \\ & \le  \frac{1}{1-\frac{1}{\sqrt{14N}}}
         \frac{1}{N\sqrt{N}}\sum_{l=1}^{T/B-1}   \left(\sum_{v\in V}\frac{|m_{lB+1}(v)|}{B M_{l+1} + BT}\right) \tag{since $b^B\leq \frac{1}{\sqrt{14N}}$ shown in \eref{eqn:bB_bound}} 
         \\&\le
          \frac{4}{B\sqrt {N}}  \sum_{l=1}^{T/B-1}   \left(\frac{\frac{B}{N}\sum_{v\in V}|m_{lB+1}(v)|}{{B M_{l+1} + BT}}\right). \tag{ from $\frac{1}{1-b^B}\le \frac{1}{1-1/(14\sqrt{N})} \le 4$}
    \end{align}

    To analyze the term $\sum_{l=1}^{T/B-1}   \left(\frac{\frac{B}{N}\sum_{v\in V}|m_{lB+1}(v)|}{{B M_{l+1} + \sqrt{N}BT}}\right)$, we use \lref{lemma: ada-grad} and \lref{lemma: block_total_delay}: 
    \begin{align*}
        \sum_{l=1}^{T/B-1}   \left(\frac{\frac{B}{N}\sum_{v\in V}|m_{lB+1}(v)|}{{B M_{l+1} + BT}}\right) &= \sum_{l=1}^{T/B-1}   \left(\frac{\frac{B}{N}\sum_{v\in V}|m_{lB+1}(v)|}{{\frac{B}{N}\sum_{\tau=1}^{l}\sum_{v\in V}|m_{\tau B+1}(v)| + BT}}\right) \tag{by definition of $M_{l+1}$}
        \\&\le \ln\left(BT + \sum_{l=1}^{T/B-1} \frac{B}{N}\sum_{v\in V} |m_{lB+1}(v)|\right) -\ln(BT) \tag{using \lref{lemma: ada-grad}}\\
        &\le \ln\left(BT + \sum_{l=1}^{T/B-1} \frac{B}{N}\sum_{v\in V} |m_{lB+1}(v)|\right) \\
        &\leq \ln\left(2BT + \dtot \right). \tag{using \lref{lemma: block_total_delay}}
    \end{align*}

    Combining the above two bounds, we can obtain the bound for $\sum_{s=1}^{T/B} \|\bar{x}_{s}(u) -\bar{x}_{s}(v)\|_2$:
    \begin{align}
    \sum_{s=1}^{T/B} \|\bar{x}_{s}(u) -\bar{x}_{s}(v)\|_2 &\le 8 DB   \frac{1}{\sqrt{N} B} \ln(2BT + \dtot) \nonumber
     \\&= 8D \ln(2BT + \dtot)   \nonumber\\
  & \le 8D\ln(2BT + T^2),  \nonumber
    \end{align}
    where the last inequality uses $\dtot \leq T^2$. Plugging the above inequality into \eref{eq_last_bound_club},
    we have
    \begin{align}
        \clubsuit &\le 18 BL^2 N\bar{\eta} T + 16BDL N \ln(2BT + T^2).
    \end{align}
Recall that $\bar{\eta} = \frac{D}{L\sqrt{BT}}$. The above upper bound further implies the following bound on $\clubsuit$.
\begin{align}
        \clubsuit 
        &\le 18 DL N  \sqrt{BT} + 16 BDL N \ln(2BT + T^2) \nonumber
        \\&= \order(DLN\sqrt{BT}+BDLN\ln(BT+T^2)).
    \end{align}
Finally, combining the above inequality, \eref{eq: regret_decom_adap}, and \eref{eq:spades2}, we obtain
\begin{align*}
        \Reg_T(u) &\le\clubsuit + \spadesuit \\ 
        &\leq \order(NDL\sqrt{BT+\dtot}) + \order(DLN\sqrt{BT}+BDLN\ln(BT+T^2)) \\ 
        &\leq \order(NDL\sqrt{BT+\dtot}+ BDLN\ln(BT+T^2)).
        \end{align*}
Plugging in the form of $B=\Theta\left(\frac{\ln N}{\sqrt{1-\sigma_2(W)}}\right)$, we obtain our final bound:
\begin{align*}
        \Reg_T(u) =   \widetilde{\mathcal{O}}\left( NDL \left(\frac{\sqrt{T}}{(1-\sigma_2(W))^{1/4}} +\sqrt{\dtot} \right) \right) .
\end{align*}
\end{proof}

\subsection{Lower Bound for the general convex case}
\label{app: Lower Bound for the general convex case}
In this section, we present the omitted details for the lower bound in the general convex case. For completeness, we first restate the theorem and then present its proof.
\lowerbound*

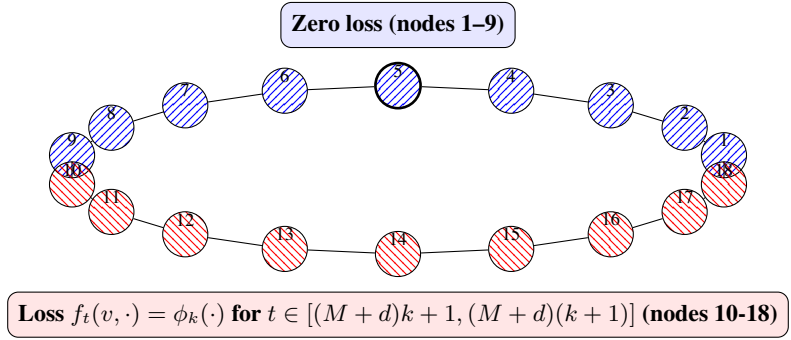
\begin{figure}
\begin{center}
\begin{tikzpicture}[scale=0.8, 
  every node/.style={circle, draw, minimum size=6mm, inner sep=0pt}]

  \def\n{18}
  \def\rx{5.5}
  \def\ry{1.4}

  \foreach \i in {1,...,18} {
    \pgfmathsetmacro{\angle}{360/\n * (\i-1) +10}
    \pgfmathsetmacro{\x}{\rx * cos(\angle)}
    \pgfmathsetmacro{\y}{\ry * sin(\angle)}
    \ifnum\i=5
      \node[pattern=north east lines, pattern color=blue, thick, line width=1pt] (N\i) at (\x,\y) {};
    \else
      \ifnum\i<10
        \node[pattern=north east lines, pattern color=blue] (N\i) at (\x,\y) {};
      \else
          \node[pattern=north west lines, pattern color=red] (N\i) at (\x,\y) {};
      \fi
    \fi
  }

  \foreach \i in {1,...,17} {
    \pgfmathtruncatemacro{\j}{\i + 1}
    \draw (N\i) -- (N\j);
  }
  \draw (N18) -- (N1);

  \foreach \i in {1,...,18} {
    \node[draw=none, fill=none] at ($(N\i)+(0,0.25)$) {\scriptsize \i};
  }

  \node[rectangle, fill=blue!10, rounded corners, inner sep=4pt, above=0.5cm of $(N5)$] 
    (L) {\footnotesize \textbf{Zero loss (nodes 1–9)}};

  \node[rectangle, fill=red!10, rounded corners, inner sep=4pt, below=0.5cm of $(N14)$] 
    (ZL) {\footnotesize \textbf{Loss $f_t(v,\cdot) = \phi_k(\cdot)$ for $t \in [(M + d)k + 1, (M + d)(k + 1)]$ (nodes 10-18)}};
\end{tikzpicture}
\end{center}
\caption{Example of configuration used in the lower bound}
\label{fig:ill_LB}
\end{figure}

\begin{proof}
We consider the setting where all delays are fixed and equal to \( d \), and  we let \( G \) denote a cycle graph with \( N = 2(M+1) \) nodes where $M$ is even, to simplify. In the example shown in \fref{fig:ill_LB}, we have such a graph with $M=4$.

For the \( N \)-cycle graph, the smallest nonzero and largest eigenvalues of the Laplacian are given by \( \sigma_{N-1}(\text{Lap}(G)) = 2 - 2\cos(2\pi/N) \) and \( \sigma_1(\text{Lap}(G)) = 4 \), respectively~\citep[Chapter~5.5]{spielman2019spectral}. Applying the inequality \( 1 - \cos(x) \ge x^2/5 \) for all \( x \in [0, \pi] \) (which holds since \( N \ge 4 \Rightarrow 2\pi/N \le \pi \)), we obtain
\[
\sigma_{N-1}(\text{Lap}(G)) \ge \frac{8\pi^2}{5N^2}.
\]
Next, we bound the inverse of the  spectral gap $\frac{1}{1 - \sigma_2(W)}$.
First, observe that  \(\sigma_2(W)= \sigma_2(I-  \frac{1}{\sigma_1(\text{Lap}(G))}\text{Lap}(G)) = \sigma_2(I- \frac{1}{4}\text{Lap}(G)) \)  is the second highest eigenvalue of $I -\frac{1}{4}\text{Lap}(G)$.
The eigenvalues of $I -\frac{1}{4}\text{Lap}(G)$ can be expressed as $1- \lambda/4$, where $\lambda$ is an eigenvalue of $\text{Lap}(G)$, so $\sigma_2(W)= 1- \frac{1}{4}\sigma_{N-1}(\text{Lap}(G))$. Hence $
\frac{1}{1-\sigma_2(W)}\le \frac{5N^2\cdot 4}{8 \pi^2}$, which directly yields :
\begin{equation}\label{eq:rho}
\frac{1}{1-\sigma_2(W)}\le \frac{N^2}{2}.\end{equation}
Now suppose that for a subset of \( M + 1\) nodes, the local loss functions are identically zero at all times: 
\[
f_t(1, \cdot) = \cdots = f(M/2+1, \cdot) = \cdots = f_t(M+1, \cdot) = 0 \quad \forall t \in [T].
\]
Further suppose that the remaining nodes update their loss functions every \( M + d \) rounds. Specifically, for each \( k = 0, \ldots, \lceil T / (M + d) \rceil - 1 \),
\[
f_t(M+2, \cdot) = \cdots = f_t(2M+2, \cdot) = \phi_k(\cdot) \quad \text{for } t \in [(M + d)k + 1, (M + d)(k + 1)],
\]
where \( \phi_k(x) = \varepsilon_k L \langle w, x \rangle \),
with \( \varepsilon_k \) being i.i.d. Rademacher random variables ($\pm 1$ with probability 1/2), and the vector \( w \) 
is defined as \( w = (x_1 - x_2) / \|x_1 - x_2\|_2 \), for some \( x_1, x_2 \in \X \) such that \( \|x_1 - x_2\|_2 = D \).
 See \fref{fig:ill_LB} for an illustration.

The resulting global loss at time \( t \), observed by agent $M/2+1$ when it plays \( x \), is:
\[
\ell_t(x) \triangleq \sum_{u\in V}f_t(u,x) = M \phi_{\lceil t / (M + d) \rceil}(x).
\]

Due to the structure of the cycle, agent $M/2+1$ cannot receive information about any node in \( \{M+2, \ldots, N\} \) until at least \( M/2 + d \) time steps have passed. Thus, predictions \( x_{kt+1}(M/2+1), \ldots, x_{kt+M+d}(M/2+1) \) are made without access to \( \phi_k \).

Applying the standard lower bound from online learning~\citep[Theorem~5.1]{orabona2019modern}, we obtain: 
\begin{align*}
\E\left[\sum_{t=1}^T \ell_t\big(x_t(M/2+1)\big) - \min_{x \in \X} \sum_{t=1}^T \ell_t(x)\right]
&= (M+1) \E\left[ \sum_{k=0}^{\lceil T / (M + d) \rceil - 1} \sum_{t=k(M+d)+1}^{(k+1)(M+d)} \phi_k(x_t(M/2+1))\right.\\
&\qquad\left.- \min_{x \in \X}(M + d) \sum_{k=0}^{\lceil T / (M + d) \rceil - 1} \phi_k(x)\right] \\
&= (M+1)(M + d) \E\left[-\min_{x \in \X} \sum_{k=0}^{\lceil T / (M + d) \rceil - 1} \phi_k(x)\right] \\
&= (M+1)(M + d) L \E\left[\max_{x \in \X} \sum_{k=0}^{\lceil T / (M + d) \rceil - 1} \varepsilon_k \langle w, x \rangle\right] \\
&\ge M(M + d) L \E\left[\max_{x \in \{x_1, x_2\}} \sum_{k=0}^{\lceil T / (M + d) \rceil - 1} \varepsilon_k \langle w, x \rangle\right] \\
&= M(M + d) L D \E\left[\left| \sum_{k=0}^{\lceil T / (M + d) \rceil - 1} \varepsilon_k \right|\right] \\
&\ge M(M + d) L D \sqrt{\frac{T}{M + d}} \quad \text{(Khintchine inequality)} \\
&= M L D \sqrt{(M + d)T}.
\end{align*}

Thus, there exists a realization of \( \varepsilon_0, \ldots, \varepsilon_{\lceil T / (M + d) \rceil - 1} \) for which:
\[
\Reg_T \ge M L D \sqrt{(M + d)T}.
\]

Now, from~\eref{eq:rho}, we know:
\[
\frac{1}{1 - \sigma_2(W)} \le \frac{N^2}{2},\text{ so } M \ge \frac 1 4  N \ge \frac{\frac{1}{4}\sqrt{2}}{\sqrt{1 -\sigma_2(W)}}\,.
\]
which implies the lower bound:
\[
\Reg_T \ge \frac{N}{4} L D \sqrt{T}\sqrt{\frac{1}{2} \sqrt{\frac{1}{1 - \sigma_2(W)}} + d}.
\]
\end{proof}

\newpage

\section{Omitted Details in Section~\ref{sec: D-OCO with Strongly-Convex Loss Functions}}

\label{app: D-OCO with Strongly-Convex Loss
Functions}
In this section, we include the omitted details in Section~\ref{sec: D-OCO with Strongly-Convex Loss Functions}. For completeness, we first restate the theorem and then present its proof.
\fstupperboundsc*

\begin{proof}
We start the proof with some notations. With a slight abuse of notation, we also define $\bar{z}_{s-1}$ as the the cumulative received augmented gradients till block $s-1$ averaged over all agents in the strongly convex case:
\begin{align}
    \bar{z}_{s-1} &=\frac{1}{N} \sum_{l=1}^{s-1} \sum_{v \in V} y_{l}^+(v).\label{eq:barz_3}
    \end{align}
Direct calculation shows that
\begin{align*}\bar{z}_{s-1}
    &= \frac{1}{N} \sum_{l=1}^{s-1}\sum_{v \in V} \left (\sum_{\tau \in o_{lB+1}(v) \backslash o_{(l-1)B+1}(v)}   g_{\tau}(v) - \alpha B x_l(v) \right)  \tag{Definition of $y^{+}_{l}(v)$}
    \\&= \frac{1}{N}\sum_{v \in V}  \sum_{\tau \in o_{(s-1)B+1}(v)}  g_{\tau}(v)- \frac{1}{N}\sum_{l=1}^{s-1}\sum_{v \in V} \alpha B x_l(v),
\end{align*}
where the last inequality is due to $o_{1}(v)= \emptyset$ for any $v \in V$. Again with an abuse of notation, similar to the case where the loss functions are convex in general, we define $\bar{x}_s$, $\wt{z}_s$, and  $\wt{x}_s$ in the following. Specifically, we define $\bar{x}_s$, which is the FTRL strategy at block $s$ assuming the agent has the received gradient information among all agent: 
\begin{align}
    \bar{x}_{s}=\argmin_{x\in\calX} \left\{\left\langle  
    \bar{z}_{s-1}, x\right\rangle+\frac{\alpha(s-1)B}{ 2} \| x\|_2^2\right\} .\label{eq:defbarx_3}
\end{align}
We also define $\wt{z}_{s-1}$ as follows
\begin{align*}
    \tilde{z}_{s-1} &= \frac{1}{N} \sum_{l=1}^{s-1} \sum_{v \in V}\left(\sum_{\tau \in \mathcal{T}_l}    g_{\tau}(v) - \alpha B x_{l}(v) \right),
\end{align*}
where $\mathcal{T}_l=\{(l-1) B+1, \ldots, l B\}$, and define $\wt{x}_s$ to be the FTRL strategy with respect to $\wt{z}_{s-1}$:
\begin{align*}
\Tilde{x}_{s}&=\argmin_{x\in\calX} \left\{\left\langle 
    \tilde{z}_{s-1}, x\right\rangle+\frac{\alpha(s-1)B}{2} \| x\|_2^2\right\} \nonumber
\\&= \argmin_{x\in\calX} \left\{\left\langle 
    \frac{1}{N} \sum_{l=1}^{s-1} \sum_{\tau \in \mathcal{T}_l}  \sum_{v \in V}  g_{\tau}(v), x\right\rangle+ \frac{\alpha B}{2N} \sum_{l=1}^{s-1}\sum_{v \in V} \|x-x_l(v)\|_2^2\right\}. \nonumber
\end{align*}
Finally, we define 
\begin{align}
    G_{s}(x) &\triangleq \left\langle 
    \frac{1}{N} \sum_{l=1}^{s-1} \sum_{\tau \in \mathcal{T}_l}  \sum_{v \in V}  g_{\tau}(v), x\right\rangle+ \psi_s(x)
\end{align}
where $\psi_s(x)$ is defined as
\begin{equation*}
    \psi_s(x)\triangleq \frac{\alpha B}{2N} \sum_{l=1}^{s-1}\sum_{v \in V} \|x-x_l(v)\|_2^2.
\end{equation*}
Next, we apply a regret decomposition that almost mirrors the one in \lref{lem:reg_decomp} except that we use the property that all loss functions are now $\alpha$-strongly convex.
\begin{align}
    \Reg_T(u)  &= \sum_{t=1}^T  \sum_{v\in V} \left ( f_{t}(v, x_t(u)) -  f_{t}(v,x^*) \right)  \nonumber \\
    &= \sum_{t=1}^T  \sum_{v\in V} \left ( f_{t}(v, x_t(v)) -  f_{t}(v,x^*) \right) + \sum_{t=1}^T  \sum_{v\in V} \left ( f_{t}(v, x_t(u)) -  f_{t}(v,x_t(v)) \right) \nonumber \\
    &\leq \sum_{t=1}^T  \sum_{v\in V} \left ( \inner{g_t(v), x_t(v) - x^*} -\frac{\alpha}{2}\|x_t(v) - x^* \|_2^2\right) + L\sum_{t=1}^T  \sum_{v\in V} \|x_t(u) - x_t(v)\|_2 \nonumber \tag{\asref{asm:Lipschitz} and \asref{asm:stronglyconvex}} 
    \\&\leq \sum_{t=1}^T  \sum_{v\in V} \left ( \inner{g_t(v), x_{t}(v) + \bar{x}_t - \bar{x}_t  - x^*} -\frac{\alpha}{2}\|x_t(v) - x^* \|_2^2\right)  \nonumber
    \\&\qquad + L\sum_{t=1}^T  \sum_{v\in V} \left(\|x_t(u) -\bar{x}_{t}\|_2+\|x_t(v) -\bar{x}_{t}\|_2\right) \nonumber \tag{Triangular inequality}
    \\&\leq \sum_{t=1}^T  \sum_{v\in V} \left ( \inner{g_t(v), \bar{x}_t - x^*} -\frac{\alpha}{2}\|x_t(v) - x^* \|_2^2\right) \nonumber
    \\&\qquad + 2L\sum_{t=1}^T  \sum_{v\in V} \|x_t(v) -\bar{x}_{t}\|_2  + NL\sum_{t=1}^T\| x_t(u) - \bar{x}_t\|_2\tag{\asref{asm:Lipschitz}}
    \\&= \underbrace{\sum_{s=1}^{T/B} \sum_{t\in \mathcal{T}_s}  \sum_{v\in V} \left ( \inner{g_t(v), \bar{x}_s  - x^*} -\frac{\alpha}{2}\|x_s(v) - x^* \|_2^2\right)}_{\spadesuit} \nonumber
    \\&\qquad + \underbrace{2BL\sum_{s=1}^{T/B}   \sum_{v\in V} \|x_s(v) -\bar{x}_{s}\|_2  + NBL\sum_{s=1}^{T/B} \| x_s(u) - \bar{x}_s\|_2}_{\clubsuit} \label{ineq: regret_decompose_scx}
\end{align}
where the last equality holds because the algorithm uses the same decision over all time steps in the same block,  and the block length is $B$.

We first analyze the term $\spadesuit$ by decomposing it as follows:
    \begin{align}
    \frac{1}{N}\spadesuit  &= \frac{1}{N}\sum_{s=1}^{T/B} \sum_{t\in \mathcal{T}_s}  \sum_{v\in V} \left ( \inner{g_t(v), \bar{x}_s  - x^*} -\frac{\alpha}{2}\|x_s(v) - x^* \|_2^2\right)  \nonumber
    \\&= \underbrace{\frac{1}{N}\sum_{s=1}^{T/B} \sum_{t\in \mathcal{T}_s}  \sum_{v\in V} \left\langle g_t(v), \Tilde{x}_{s} -x^*\right\rangle}_{{\operatorname{full-info}_T}} + \underbrace{\frac{1}{N}\sum_{s=1}^{T/B} \sum_{t\in \mathcal{T}_s}  \sum_{v\in V} \left\langle g_t(v), \bar{x}_{s}- \Tilde{x}_{s}\right\rangle}_{{\operatorname{drift}_T}} \nonumber 
    \\&\quad-  \frac{1}{N}\sum_{s=1}^{T/B} \sum_{t\in \mathcal{T}_s}  \sum_{v\in V} \frac{\alpha}{2}\|x_s(v) - x^* \|_2^2 \nonumber
    \\&= \underbrace{\frac{1}{N}\sum_{s=1}^{T/B} \sum_{t\in \mathcal{T}_s}  \sum_{v\in V} \left\langle g_t(v), \Tilde{x}_{s} -x^*\right\rangle}_{{\operatorname{full-info}_T}} + \underbrace{\frac{1}{N}\sum_{s=1}^{T/B} \sum_{t\in \mathcal{T}_s}  \sum_{v\in V} \left\langle g_t(v), \bar{x}_{s}- \Tilde{x}_{s}\right\rangle}_{{\operatorname{drift}_T}}  \nonumber
    \\&\quad-  \frac{\alpha B}{2N}\sum_{s=1}^{T/B}   \sum_{v\in V} \|x_s(v) - x^* \|_2^2, \label{eq:spade_neg}
\end{align}
where the last equality is because $|\mathcal{T}_s|=B$. 
First, we analyze $\operatorname{full-info}_T$ by using Lemma~7.1 in \cite{orabona2019modern}:
\begin{align}
\operatorname{full-info}_T &= \frac{1}{N}\sum_{s=1}^{T/B} \sum_{t\in \mathcal{T}_s}  \sum_{v\in V} \left\langle g_t(v), \Tilde{x}_{s} -x^*\right\rangle \nonumber
\\&= \sum_{s=1}^{T/B} \left\langle \frac{1}{N}\sum_{t\in \mathcal{T}_s}  \sum_{v\in V}g_t(v), \Tilde{x}_{s} -x^*\right\rangle \nonumber
\\& = \psi_{T/B+1}(x^*) - \min_{x \in \calX}\psi_1(x)  + G_{T/B+1}(\tilde{x}_{T/B+1}) - G_{T/B+1}(x^*) \nonumber \\
&\quad + \sum_{s=1}^{T/B} \left[G_{s}(\tilde{x}_{s}) - G_{s+1}(\tilde{x}_{s+1})+ \inner{\frac{1}{N}\sum_{t\in \mathcal{T}_s}  \sum_{v\in V}g_t(v),\tilde{x}_s}\right]  \tag{Lemma 7.1 in \citet{orabona2019modern}}\nonumber
\\& \leq \psi_{T/B+1}(x^*)  +\sum_{s=1}^{T/B} \left[ \left(G_{s}(\tilde{x}_{s})+ \inner{\frac{1}{N}\sum_{t\in \mathcal{T}_s}  \sum_{v\in V}g_t(v),\tilde{x}_s} \right)
\right. \nonumber
\\&\left. \quad - \left( G_{s}(\tilde{x}_{s+1})  + \inner{\frac{1}{N}\sum_{t\in \mathcal{T}_s}  \sum_{v\in V}g_t(v),\tilde{x}_{s+1}}\right) \right] + \sum_{s=1}^{T/B} \left( \psi_{s}(\tilde{x}_{s+1}) - \psi_{s+1}(\tilde{x}_{s+1})\right)\nonumber
\\& \leq \psi_{T/B+1}(x^*) +\sum_{s=1}^{T/B} \left[ \left(G_{s}(\tilde{x}_{s})+ \inner{\frac{1}{N}\sum_{t\in \mathcal{T}_s}  \sum_{v\in V}g_t(v),\tilde{x}_s} \right)\right. \nonumber
\\& \left.\quad- \left( G_{s}(\tilde{x}_{s+1})  + \inner{\frac{1}{N}\sum_{t\in \mathcal{T}_s}  \sum_{v\in V}g_t(v),\tilde{x}_{s+1}}\right) \right],
\label{eq:full_info_sc}
\end{align}
where the first inequality holds because $G_{T/B+1}(\tilde{x}_{T/B+1}) \le G_{T/B+1}(x^*)$ by optimality of $\tilde{x}_{T/B+1}$ and together with non-negativity of $\psi_{1}$, and the second inequality holds because 
\begin{equation*}
    \sum_{s=1}^{T/B} \left( \psi_{s}(\tilde{x}_{s+1}) - \psi_{s+1}(\tilde{x}_{s+1})\right) = -\frac{\alpha B}{2 N} \sum_{s=1}^{T / B} \sum_{v \in V}\left\|\tilde{x}_{s+1}-x_s(v)\right\|_2^2 \leq 0.
\end{equation*}

As for the difference between  $\left(G_{s}(\tilde{x}_{s})+ \inner{\frac{1}{N}\sum_{t\in \mathcal{T}_s}  \sum_{v\in V}g_t(v),\tilde{x}_s} \right) $ and $ \left( G_{s}(\tilde{x}_{s+1})  + \inner{\frac{1}{N}\sum_{t\in \mathcal{T}_s}  \sum_{v\in V}g_t(v),\tilde{x}_{s+1}}\right)$, direct calculation shows that 
\begin{align*}
    &\left(G_{s}(\tilde{x}_{s})+ \inner{\frac{1}{N}\sum_{t\in \mathcal{T}_s}  \sum_{v\in V}g_t(v),\tilde{x}_s} \right) - \left( G_{s}(\tilde{x}_{s+1})  + \inner{\frac{1}{N}\sum_{t\in \mathcal{T}_s}  \sum_{v\in V}g_t(v),\tilde{x}_{s+1}}\right) \nonumber
    \\&\leq \inner{\frac{1}{N}\sum_{t\in \mathcal{T}_s}  \sum_{v\in V}g_t(v),\tilde{x}_{s}-\tilde{x}_{s+1}} \tag{since $G_{s}(\tilde{x}_{s})=\min_{x\in\calX}G_s(x) \leq G_{s}(\tilde{x}_{s+1})$ }
    \\&\leq BL\|\tilde{x}_{s}-\tilde{x}_{s+1}\|_2. 
\end{align*}
To bound $\|\wt{x}_s-\wt{x}_{s+1}\|_2$, applying \lref{lem:ftrl_sc} with $w_1=\frac{1}{N} \sum_{l=1}^{s-1} \sum_{\tau \in \mathcal{T}_l}  \sum_{v \in V}  g_{\tau}(v)$, $w_2=\frac{1}{N} \sum_{l=1}^{s} \sum_{\tau \in \mathcal{T}_l}  \sum_{v \in V}  g_{\tau}(v)$, $\psi_1=\psi_s$, $\psi_2=\psi_{s+1}$ shows that
\begin{align*}
    \frac{\alpha B(2s-1)}{2}\|\wt{x}_{s+1}-\wt{x}_s\|_2^2 &\leq \inner{\frac{1}{N}\sum_{\tau \in \calT_s}\sum_{v\in V}g_\tau(v),\wt{x}_s-\wt{x}_{s+1}} - \frac{\alpha B}{2N}\sum_{v\in V}\|\wt{x}_{s+1}-x_s(v)\|_2^2 
    \\& \quad+ \frac{\alpha B}{2N}\sum_{v\in V}\|\wt{x}_{s}-x_s(v)\|_2^2 \\
    &\leq BL\|\wt{x}_s-\wt{x}_{s+1}\|_2 + \frac{\alpha B}{2N}\|\wt{x}_s-\wt{x}_{s+1}\|_2\cdot \|\wt{x}_s+\wt{x}_{s+1}-x_s(v)\|_2 \\
    &\leq BL\|\wt{x}_s-\wt{x}_{s+1}\|_2 + \alpha BD\|\wt{x}_s-\wt{x}_{s+1}\|_2.
\end{align*}
Rearranging the terms leads to
\begin{equation*}
    \|\tilde{x}_{s}-\tilde{x}_{s+1}\|_2 \leq 
    \frac{2}{\alpha (2 s-1)}L +\frac{2D}{2s-1}.
\end{equation*}
Plugging the above into  \eref{eq:full_info_sc}, we obtain 
\begin{align*}
    \operatorname{full-info}_T &\leq \psi_{T/B+1}(x^*) +  \frac{2BL(L +\alpha D)}{\alpha}\sum_{s=1}^{T/B} \frac{1}{2s-1} 
    \\&\leq \psi_{T/B+1}(x^*) + \frac{2BL(L +\alpha D)}{\alpha}\ln(2T/B)
\end{align*}
Combining with the negative term in \eref{eq:full_info_sc}, we know that
\begin{align}
    \operatorname{full-info}_T - \frac{\alpha B}{2N}\sum_{s=1}^{T/B}\sum_{v\in V}\|x_s(v) - x^*\|_2^2 \leq \frac{2BL(L +\alpha D)}{\alpha}\ln(2T/B).\label{ineq:full_info_sc}
\end{align}

Now we turn to analyze $\operatorname{drift}_T$ in the term $\spadesuit$. 
\begin{align}
    \frac{1}{N}\operatorname{drift}_T & = \frac{1}{N}\sum_{s=1}^{T/B} \sum_{t\in \mathcal{T}_s}  \sum_{v\in V} \left\langle g_t(v), \bar{x}_{s}- \Tilde{x}_{s}\right\rangle \nonumber
    \\& = \sum_{s=1}^{T/B} \left\langle  \frac{1}{N}\sum_{t\in \mathcal{T}_s}  \sum_{v\in V} g_t(v), \bar{x}_{s}- \Tilde{x}_{s}\right\rangle \nonumber
    \\& \leq BL \sum_{s=1}^{T / B}  \left\|\bar{x}_{s}-\Tilde{x}_{s}\right\|_2 \tag{Cauchy–Schwarz inequality, \asref{asm:Lipschitz} and $|\mathcal{T}_s|=B$}
    \\& = BL \sum_{s=2}^{T / B} \left\|\bar{x}_{s}-\Tilde{x}_{s}\right\|_2 \tag{$\bar{x}_{1}=\Tilde{x}_{1}=\mathbf{0}$}
    \\& \leq BL \sum_{s=2}^{T/B}\frac{1}{(s-1)B \alpha} \left\|\bar{z}_{s-1}-\Tilde{z}_{t-1}\right\|_2 \tag{\lref{lem:ftrl_sc}}
    \\& \leq \frac{L}{\alpha} \sum_{s=2}^{T/B}  \frac{1}{s-1} \left\|\frac{1}{N} \sum_{v \in V} \sum_{\tau \in o_{(s-1)B+1}(v)}  g_{\tau}(v)-\frac{1}{N} \sum_{l=1}^{s-1} \sum_{\tau \in \mathcal{T}_l} \sum_{v \in V}  g_{\tau}(v)\right\|_2
    \tag{definition of $\bar{z}_{t-1}$ and $\tilde{z}_{t-1}$}
    \\& = \frac{L}{\alpha} \sum_{s=2}^{T/B}  \frac{1}{s-1 }\left\|-\frac{1}{N}  \sum_{v \in V} \sum_{\tau \in m_{(s-1)B+1}(v)}  g_{\tau}(v)\right\|_2
    \tag{$\mathcal{T}_s=\{(s-1) B+1, \ldots, s B\} $ and $m_{t}(v)=[t-1] \backslash o_{t}(v)$ }
    \\& \leq \frac{L^2}{\alpha} \sum_{s=2}^{T/B}  \frac{1}{s-1} \left(\frac{1}{N}\sum_{v \in V} |m_{(s-1)B+1}(v)|\right) \tag{\asref{asm:Lipschitz}}
    \\& \leq \frac{\delta_{\max}L^2}{\alpha} \sum_{s=2}^{T/B} \frac{1}{s-1} 
\tag{$\delta_{\max}=\max_{\substack{t \in [T]}}\frac{1}{N}\sum_{u \in V}|m_t(u)|$}
    \\& \leq \frac{\delta_{\max}L^2}{\alpha} \left(\ln (T / B)+1\right),\label{ineq:drift_T_acc_before_eta_sc}
\end{align}
where the second inequality applies \lref{lem:ftrl_sc} using the definition of $\bar{x}_s$ and $\wt{x}_s$, and the last inequality is due to $\sum_{s=2}^{T / B} \frac{1}{s-1} \leq \ln (T / B)+1$. When $d_t(u)=d(u)$ for all $u\in [N]$, we can further upper bound $\frac{1}{N}\sum_{v \in V} |m_{(s-1)B+1}(v)|$ by $\frac{1}{N}\sum_{u\in V}d(u)\triangleq \bar{d}$, leading to
\begin{align}\label{eqn:ineq:drift_T_acc_before_eta_sc_du}
    \frac{1}{N}\operatorname{drift}_T \leq \frac{\bar{d}L^2}{\alpha}(\ln(T/B)+1).
\end{align}

Combining \eref{eq:spade_neg}, \eref{ineq:full_info_sc} and \eref{ineq:drift_T_acc_before_eta_sc}, we have 
\begin{equation}
    \frac{1}{N} \spadesuit \leq  \frac{\delta_{max}L^2}{\alpha} \left( \ln{\left(T/B\right)}+1 \right) + \frac{2BL(L +\alpha D)}{\alpha}\ln(2T/B). \label{eq:spade_sc}
\end{equation}
We now turn to the analysis of the term \( \clubsuit \). By definition, \( x_1(v) = \bar{x}_1 = 0 \), which implies \( \| x_1(v) - \bar{x}_1 \|_2 = 0 \). To bound \( \| x_{s+1}(u) - \bar{x}_{s+1} \|_2 \) for any \( s \geq 1 \) and \( u \in V \), we proceed as follows.
\begin{align}
    &\sum_{s=1}^{T/B-1} \|x_{s+1}(u) - \bar{x}_{s+1} \|_2\\
    &\leq \sum_{s=1}^{T/B-1} \frac{1}{\alpha sB} \|z_{s}(u) - \bar{z}_{s}\|_2 \tag{\lref{lem:ftrl_sc}}
    \\& =\frac{2}{N\sqrt{N}}\sum_{s=1}^{T/B-1}\frac{1}{\alpha sB} \sum_{l=1}^{s-1} b^{(s-l-1) B}\left(\sqrt{\sum_{v\in V}\left\|y_l^{+}(v)\right\|_2^2}\right) \tag{\lref{lemma: bar_zt_acc_sc}}
    \\& \leq \frac{2}{N\sqrt{N}} \sum_{l=1}^{T/B - 1} \left( \sqrt{ \sum_{v \in V} \left\| y_l^{+}(v) \right\|_2^2 } \sum_{s=l+1}^{T/B} \frac{1}{\alpha sB}  b^{(s - l - 1)B} \tag{swap the summation order}\right)
    \\& \leq\frac{2}{N\sqrt{N}} \sum_{l=1}^{T/B - 1} \left( \frac{1}{\alpha (l+1) B}\sqrt{ \sum_{v \in V} \left\| y_l^{+}(v) \right\|_2^2 } \sum_{s=l+1}^{T/B} b^{(s - l - 1)B} \right) \nonumber
    \\& \leq \frac{2}{N\sqrt{N}}\frac{1}{1-\frac{1}{\sqrt{14 N}}}  \sum_{l=1}^{T/B - 1} \left( \frac{1}{\alpha (l+1)B}\sqrt{ \sum_{v \in V} \left\| y_l^{+}(v) \right\|_2^2 } \right) \tag{\eref{ineq: bb} and Geometric sum}
    \\& \leq \frac{3}{N\sqrt{N}}\sum_{l=1}^{T/B - 1} \left( \frac{1}{\alpha (l+1)B}\sqrt{ \sum_{v \in V} \left\| y_l^{+}(v) \right\|_2^2 } \right),
    \label{ineq:gossip_acc_2_sc}
\end{align} 
where the last inequality follows from $N\ge 1$. Plugging in the definition of $y_l^{+}(v)$ in \eref{ineq:gossip_acc_2_sc}, we obtain that
\begin{align}
    &\sum_{s=1}^{T/B-1} \|x_{s+1}(u) - \bar{x}_{s+1} \|_2 \nonumber\\
    &\leq \frac{3}{N\sqrt{N}}\sum_{s=1}^{T/B - 1} \left( \frac{1}{\alpha (s+1)B}\sqrt{ \sum_{v \in V} \left\| y_s^{+}(v) \right\|_2^2 } \right) \nonumber
    \\&= \frac{3}{N\sqrt{N}}\sum_{s=1}^{T/B - 1} \left( \frac{1}{\alpha (s+1)B}\sqrt{ \sum_{v \in V} \left\| \sum_{\tau \in  o_{sB+1}(v) \backslash o_{(s-1)B+1}(v)} g_{\tau}(v) - \alpha Bx_{s}(u) \right\|_2^2}  \right)\nonumber
    \\  &\leq \frac{3}{N\sqrt{N}}\sum_{s=1}^{T/B-1} \left( \frac{L}{\alpha (s+1)B}\sqrt{ \sum_{v \in V}  \left( |o_{sB+1}(v)| - |o_{(s-1)B+1}(v)|\right)^2} \right) \nonumber 
    \\& \quad+ \frac{3}{N\sqrt{N}}\sum_{s=1}^{T/B-1} \left( \frac{D}{(s+1)B\alpha}\sqrt{N\alpha^2B^2} \right)\tag{Triangular inequality}
    \\&\leq \frac{3}{N\sqrt{N}}\sum_{s=1}^{T/B - 1} \left( \frac{L}{(s+1)B\alpha}\sqrt{ \sum_{v \in V}  \left( |o_{sB+1}(v)| - |o_{(s-1)B+1}(v)|\right)^2} \right) \nonumber 
    \\& \quad+ \frac{3D}{N}\left(\ln{\left( T/B\right)} +1\right), \label{eq: club_1}
\end{align}
where the last inequality is due to $\sum_{z=1}^{T / B-1} \frac{1}{s+1} \leq \ln (T / B)+1$. 
by definition of $o_t(v)$, we observe that 
\begin{align*}
    |o_{sB+1}(v)| - |o_{(s-1)B+1}(v)| &= |sB| - |m_{sB+1}(v)| - \left(|(s-1)B| - |m_{(s-1)B+1}(v)|\right)
    \\&= B + |m_{(s-1)B+1}(v)| - |m_{sB+1}(v)|. 
\end{align*}
Hence, we obtain
\begin{align}
    \sqrt{ \sum_{v \in V}  \left( |o_{sB+1}(v)| - |o_{(s-1)B+1}(v)|\right)^2}  &\leq \sqrt{ \sum_{v \in V}  \left( B + |m_{(s-1)B+1}(v)| - |m_{sB+1}(v)| \right)^2}\nonumber
    \\&\leq \sqrt{ \sum_{v \in V}  B^2} + \sqrt{ \sum_{v \in V}  \left(|m_{(s-1)B+1}(v)| - |m_{sB+1}(v)| \right)^2}  \tag{triangle inequality}
    \\&\leq B\sqrt{N} + \sum_{v \in V}|m_{(s-1)B+1}(v)| + \sum_{v \in V}|m_{sB+1}(v)| \nonumber
    \\&\leq B\sqrt{N} + 2N \delta_{\max}, \label{eq: club_2}
\end{align}
where the last inequality is due to the definition of $\delta_{\max}$.
Combining \eref{eq: club_1} and \eref{eq: club_2} and using $\sum_{z=1}^{T / B-1} \frac{1}{s} \leq \ln (T / B)+1$, we obtain
\begin{align}
     \sum_{s=1}^{T/B-1} \|x_{s+1}(u) - \bar{x}_{s+1} \|_2 &\leq  \frac{3(\alpha D+L)}{N\alpha}\left(\ln{\left( T/B\right)} +1\right) + \frac{6\delta_{\max}L}{\sqrt{N}\alpha B}\left(\ln{\left( T/B\right)} +1\right). \label{eq:club_3}
\end{align}

Finally, we obtain
\begin{align*}
    \Reg_T(u)  &= \sum_{s=1}^{T/B} \sum_{t\in \mathcal{T}_s}  \sum_{v\in V} \left ( \inner{g_t(v), \bar{x}_s  - x^*} -\frac{\alpha}{2}\|x_s(v) - x^* \|_2^2\right) \nonumber
    \\&\quad + 2BL\sum_{s=1}^{ T/B}   \sum_{v\in V} \|x_s(v) -\bar{x}_{s}\|_2  + NBL\sum_{s=1}^{T/B} \| x_s(u) - \bar{x}_s\|_2 \tag{\eref{ineq: regret_decompose_scx}}
    \\&\leq \frac{N\delta_{max}L^2}{\alpha} \left( \ln{\left(T/B\right)}+1 \right) + \frac{2NBL(L +\alpha D)}{\alpha}\ln(2T/B)
    \\&\quad + 2BL\sum_{s=1}^{ T/B}   \sum_{v\in V} \|x_s(v) -\bar{x}_{s}\|_2  + NBL\sum_{s=1}^{T/B} \| x_s(u) - \bar{x}_s\|_2 \tag{\eref{eq:spade_sc}}
    \\&\leq \frac{N\delta_{max}L^2}{\alpha} \left( \ln{\left(T/B\right)}+1 \right) + \frac{2NBL(L +\alpha D)}{\alpha}\ln(2T/B)
    \\&\quad + \frac{9B(\alpha DL+L^2)}{\alpha}\left(\ln{\left( T/B\right)} +1\right) + \frac{18\sqrt{N}\delta_{\max}L^2}{\alpha}\left(\ln{\left( T/B\right)} +1\right) \tag{\eref{eq:club_3}}
    \\ & = {\mathcal{O}}\left(  \frac{N(\alpha DL + L^2)}{\alpha} \left( \delta_{\max} + \frac{\ln(N)}{\sqrt{1-\sigma_2(W)}}\right)  \ln{\left(T\right)}\right). 
\end{align*}
When $d_t(u)=d(u)$ for all $t\in[T]$, we define $\Bar{d} = \frac{1}{N} \sum_{v \in V} d(v)$, then we have $\Bar{d} = \delta_{\max}$ and obtain
\begin{equation*}
    \Reg_T(u) \leq  
    {\mathcal{O}}\left(  \frac{N(\alpha DL + L^2)}{\alpha} \left( \bar{d} + \frac{\ln(N)}{\sqrt{1-\sigma_2(W)}}\right)  \ln{\left(T\right)}\right).
\end{equation*}
\end{proof}

\subsection{Lower bound for the strongly convex case}\label{app:strong_lower_bound}
In this section, we provide the proof for the lower bound for the strongly convex case.
\begin{restatable}{theorem}{lowerboundsc}
\label{th:lowerboundsc} 
Let \( d \) be a constant feedback delay experienced by each agent in the network, and let \( A \) be any algorithm for D-OCO over the domain \( \X \subseteq\R^n\). Then, there exists a graph \( G = ([N], E) \), with $N=2(M+1)$ where $M$ is an even integer and $16(N+d) + 1 \leq T$, and a sequence of $\alpha D $-Lipschitz and $\alpha$-strongly convex loss functions assigned to the agents, denoted by
\[
\left\{ f_1(1, \cdot), \ldots, f_1(N, \cdot) \right\}, \ldots, \left\{ f_T(1, \cdot), \ldots, f_T(N, \cdot) \right\},
\]
such that the regret of algorithm \( A \) satisfies the lower bound:
\[
\Reg_T = {\Omega} \left(\alpha  N D^2 \left(\frac{1}{\sqrt{1 - \sigma_2(W)}}+d\right) \ln\left(\frac{T }{\frac{1}{\sqrt{1 - \sigma_2(W)}}+d}  \right)\right),
\]
where \( W   = I - \frac{1}{\sigma_1(\text{Lap}(G))} \cdot \text{Lap}(G) \).
\end{restatable}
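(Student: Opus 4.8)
The plan is to adapt the cycle‑graph construction behind \tref{th:lowerbound} to the strongly convex regime: replace the linear ``needle'' losses by quadratic ones, and replace the Khintchine step by the logarithmic minimax lower bound for online \emph{strongly convex} optimization. Let $G$ be the $N$‑cycle with $N=2(M+1)$ ($M$ even), split into two antipodal arcs of $M+1$ nodes: a \emph{static} arc $S=\{1,\dots,M+1\}$ and the complementary \emph{dynamic} arc. Take $\X$ to be the Euclidean ball of radius $D/2$, so $\mathrm{diam}(\X)=D$ and $\mathbf 0\in\X$ as in \asref{asm:bounded}. Every static node always has loss $\rho(x)=\tfrac\alpha2\|x\|_2^2$. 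Partition $[T]$ (assuming $\tau\mid T$, w.l.o.g.\ by padding) into $K=T/\tau$ blocks of length $\tau:=\tfrac M2+d+2=\Theta(M+d)$; on block $k$ every dynamic node has loss $\phi_k(x)=\tfrac\alpha2\|x\|_2^2+\langle h_k,x\rangle$, where $\|h_k\|_2\le \alpha D/2$ is a perturbation the adversary picks later. All these losses are $\alpha$‑strongly convex and $\alpha D$‑Lipschitz on $\X$. Fix the target agent $u$ to be the midpoint of the static arc; since $M$ is even the nearest dynamic node is at graph distance $\tfrac M2+1$ from $u$, so no gradient computed at a dynamic node during block $k$ (the only feedback carrying information about $h_k$) can reach $u$ before round $(k\tau+1)+d+(\tfrac M2+1)+1>(k+1)\tau$, i.e.\ before block $k+1$. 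Hence every play $x_t(u)$ with $t$ in block $k$ is a function of $h_1,\dots,h_{k-1}$ (and the fixed static losses) alone. From the spectral‑gap computation in the proof of \tref{th:lowerbound} (in particular \eref{eq:rho}) one has $M=\Theta(N)=\Theta\!\big(\tfrac1{\sqrt{1-\sigma_2(W)}}\big)$, so $\tau=\Theta\!\big(\tfrac1{\sqrt{1-\sigma_2(W)}}+d\big)$, and $16(N+d)+1\le T$ forces $K=T/\tau>16$.

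\textbf{Reduction to a centralized strongly convex instance.} Summing over all nodes, the global loss in block $k$ is the single quadratic $\mathcal F_k(x):=\sum_{v\in V}f_t(v,x)=(M+1)\alpha\|x\|_2^2+(M+1)\langle h_k,x\rangle$, which is $2(M+1)\alpha$‑strongly convex and $\Theta((M+1)\alpha D)$‑Lipschitz on $\X$. Let $x^\star:=\argmin_{x\in\X}\sum_{t=1}^T\sum_v f_t(v,x)=\argmin_{x\in\X}\sum_{k=1}^K\mathcal F_k(x)$; one checks $\|x^\star\|_2\le\tfrac1{2\alpha K}\|\sum_{k=1}^K h_k\|_2\le D/4$, so $x^\star$ is interior to $\X$. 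Then, writing $k(t)$ for the block containing $t$ and using convexity of each $\mathcal F_k$,
\begin{align*}
\Reg_T \ge \Reg_T(u) &\ge \sum_{t=1}^T\big(\mathcal F_{k(t)}(x_t(u))-\mathcal F_{k(t)}(x^\star)\big)\\
&\ge \tau\sum_{k=1}^K\big(\mathcal F_k(\bar x^{(k)})-\mathcal F_k(x^\star)\big),
\end{align*}
where $\bar x^{(k)}$ is the average of $u$'s block‑$k$ plays and depends only on $h_1,\dots,h_{k-1}$. The right‑hand side is exactly $\tau$ times the regret of an online learner ($k\mapsto\bar x^{(k)}$, based on the first $k-1$ losses) against the strongly convex loss sequence $\mathcal F_1,\dots,\mathcal F_K$.

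\textbf{Applying the strongly convex lower bound and concluding.} Now choose the $h_k$ adaptively according to the standard logarithmic lower‑bound construction for online strongly convex optimization (which, in contrast to the convex case, uses an adaptive rather than an i.i.d.\ adversary): one keeps the minimizer of $\sum_{j\le k}\mathcal F_j$ drifting at rate $\Theta(1/k)$ so that no learner can track it, paying $\Theta(1/k)$ per block. Since $\bar x^{(k)}$ is frozen with respect to $h_k$, the adversary — which has already committed $h_{<k}$ — can implement this by simulating the learner. This yields a meta‑regret of $\Omega\!\big(\tfrac{((M+1)\alpha D)^2}{(M+1)\alpha}\log K\big)=\Omega\!\big((M+1)\alpha D^2\log K\big)$ in expectation, hence for some realization of $h_1,\dots,h_K$. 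Multiplying by $\tau$, substituting $\tau(M+1)=\Theta\!\big(N(\tfrac1{\sqrt{1-\sigma_2(W)}}+d)\big)$ and $\log K=\Theta\!\big(\log\tfrac{T}{\frac1{\sqrt{1-\sigma_2(W)}}+d}\big)$, and using $\Reg_T\ge\Reg_T(u)$, gives the claimed bound.

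\textbf{Main obstacle.} The delicate step is the inner strongly convex lower bound: in the convex case the analogue was merely Khintchine's inequality on a Rademacher sum, but here an i.i.d.\ perturbation sequence only forces an $\Omega(G^2/\alpha)$ bound, and the extra $\log K$ genuinely requires an \emph{adaptive} adversary whose choices depend on the learner's past plays. One must verify that this adaptivity is compatible with the decentralized protocol (it is, precisely because $u$'s block‑$k$ play is independent of $h_k$, so the adversary can compute $\bar x^{(k)}$ before committing $h_k$), reduce to a deterministic learner / pass from expected to worst‑case regret, and carry the block‑boundary and divisibility bookkeeping.
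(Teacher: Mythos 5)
Your construction and reduction coincide with the paper's: a cycle graph with a ``static'' half carrying $\tfrac{\alpha}{2}\|x\|_2^2$ and a ``dynamic'' half carrying shifted quadratics that change only once per block of length $\tau=\Theta(M/2+d)$, the observation that the target agent's plays in block $k$ are measurable with respect to the first $k-1$ perturbations because of the feedback delay plus the graph distance, the identification $M=\Theta(N)=\Theta\big(1/\sqrt{1-\sigma_2(W)}\big)$ via \eref{eq:rho}, and a Jensen-type reduction to a centralized online strongly convex problem over $K=\Theta(T/\tau)$ effective rounds. All of that is sound and matches the paper.

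The genuine gap is the inner $\Omega\big(\tfrac{G^2}{\alpha}\log K\big)$ lower bound, which you invoke as ``the standard logarithmic lower-bound construction'' with an adaptive adversary that keeps the minimizer drifting at rate $\Theta(1/k)$. You correctly flag this as the main obstacle, but you do not supply the argument, and the sketch you give does not work as stated. If the adversary simply picks $h_k$ (or the quadratic's center $z_k$) adversarially against the frozen play $\bar{x}^{(k)}$, the resulting regret is of the form $\tfrac{\beta}{2}K\|\bar z_K\|_2^2$ (the comparator term cancels the instantaneous losses up to the empirical mean of the centers), and a follow-the-leader-type learner forces the adversary's choices to alternate so that $\|\bar z_K\|_2^2=O(1/K)$, yielding only $\Omega(1)$ total regret --- consistent with the fact that i.i.d.\ perturbations also give only a constant, as you note. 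Extracting the extra $\log K$ requires a genuinely different mechanism: the paper (following \citet{wan2024nearly} and the Hazan--Kale ``beyond the regret minimization barrier'' technique) uses an \emph{oblivious} stochastic adversary whose block shifts are i.i.d.\ Bernoulli$(p)$ vectors, reduces tracking the optimum to estimating the bias $p$ (estimation error $\Omega(1/\sqrt{k})$ after $k$ blocks, hence squared loss $\Omega(1/k)$ per block), and handles the ``there exists a single worst-case $p$'' quantifier via the nested-interval/epoch argument of Lemmas~7--8 in \citet{wan2024optimalefficientalgorithmsdecentralized}, with epochs of geometrically growing length each contributing a constant. Your proof would be complete only if you either reproduce such an estimation-style argument or cite a precise adaptive-adversary lower bound for online strongly convex optimization and verify it is compatible with your block/delay structure; as written, the central quantitative step is asserted rather than proven, and the stated intuition for it is incorrect.
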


\begin{proof}
This proof is an adaptation of that of \citet[Theorem 4]{wan2024optimalefficientalgorithmsdecentralized}. Specifically, we consider the setting where all delays are fixed and equal to \( d \), and we let \( G \) denote a cycle graph with \( N = 2(M+1) \) nodes where $M$ is even, to simplify, and $V = \{1,2,\dots, N\}$.

For any D-OCO algorithm $A$, we denote the sequence of decisions made by agent $u\in [N]$ as $x_1(u), \dots, x_T(u)$. We divide the total $T$ rounds into the following $Z+1$ blocks:
\begin{align}
\label{no-communication-intervals}
[c_0+1, c_1], [c_1+1, c_2], \dots, [c_Z+1, c_{Z+1}]
\end{align}
where $Z = \lfloor (T-1)/\tau \rfloor$, $\tau =  M/2 + d$, $c_{Z+1} = T$, and $c_i = i\tau$, for $i = 0, \dots, Z$.

At each round $t$, we set:
\begin{align*}
f_t(u,x) = \frac{\alpha}{2} \|x\|_2^2 \quad \text{for } u \in \{1, \ldots, M+1\},
\end{align*}
which is $\alpha$-strongly convex and satisfies Assumption~\ref{asm:Lipschitz} with $L = \alpha D$ over the set $\X = [0, D/\sqrt{n}]^n$.

Let $\mathcal{B}_p$ denote the Bernoulli distribution with success probability $p$ and $\mathcal{B}_p^n$ the distribution of vectors whose coordinates are equal to each other and the value is drawn from $\mathcal{B}_p$. For any $i \in \{0, \dots, Z\}$ and $t \in [c_i+1, c_{i+1}]$, define:
\begin{align*}
f_t(u,x) = \phi_i(x) = \frac{\alpha}{2} \left\|x - \frac{D \mathbf{w}_i}{\sqrt{n}} \right\|_2^2, \quad \text{for } u \in \{ M+2, \dots, N\},
\end{align*}
where $\mathbf{w}_i \in \{\mathbf{0}, \mathbf{1}\}$ is sampled from $\mathcal{B}_p^n$, meaning that with probability $p$,  $\mathbf{w}_i=\mathbf{1}$; otherwise, $\mathbf{w}_i=\mathbf{0}$.

Then, the global loss function at time $t$ is: 
\begin{align*}
\ell_t(x) &\triangleq \sum_{u=1}^{N} f_t(u,x) \\
&= \frac{\alpha(M+1)}{2} \left\|x - \frac{D \mathbf{w}_i}{\sqrt{n}} \right\|_2^2 + \frac{\alpha(M+1)}{2} \|x\|_2^2 \\
&= \frac{\alpha N}{2} \|x\|_2^2 - \frac{\alpha (M+1) D}{\sqrt{n}} \langle x, \mathbf{w}_i \rangle + \frac{\alpha (M+1) D^2}{2n} \|\mathbf{w}_i\|_2^2.
\end{align*}

Taking expectation, we obtain that
\begin{align*}
\mathbb{E}_{\mathbf{w}_i}[\ell_t(x)] &= \frac{\alpha N}{2} \|x\|_2^2 + \frac{\alpha (M+1) D}{\sqrt{n}} \langle x, \mathbf{p} \rangle + \frac{\alpha (M+1) D^2}{2n} \langle \mathbf{1}, \mathbf{p} \rangle \\
&= \frac{\alpha N}{2} \left\| x - \frac{ (M+1) D\mathbf{p}}{N \sqrt{n}} \right\|_2^2 + \frac{\alpha (M+1) D^2}{2n} \left\langle \mathbf{1} - \frac{(M+1) \mathbf{p}}{N}, \mathbf{p} \right\rangle,
\end{align*}
where $\mathbf{p} = p\cdot \mathbf{1}$. Let $F(x) \triangleq \mathbb{E}_{\mathbf{w}_i}[\ell_t(x)]$. Then, direct calculation shows that the minimizer of $F(x)$ has the following form:
\begin{align*}
x^* = \frac{ (M+1) D\cdot  \mathbf{p}}{N \sqrt{n}} = \xi \mathbf{p},
\end{align*}
where  $\xi$ is defined as $\frac{ (M+1) D\mathbf{p}}{N \sqrt{n}}$,
and that for any $x \in \calX$, we have
\begin{align}
\label{extended-eq1}
F(x) - F(x^*) = \frac{\alpha N}{2} \left\| x - \frac{ (M+1) D\mathbf{p}}{N \sqrt{n}} \right\|_2^2  = \frac{\alpha N}{2} \left\| x - \xi \mathbf{p} \right\|_2^2 \geq 0.
\end{align}

Moreover, according to Jensen's inequality, we have
\begin{align}
\label{extended-eq2}
\mathbb{E}_{\mathbf{w}_0, \dots, \mathbf{w}_Z} \left[ \min_{x \in \mathcal{X}} \sum_{i=0}^Z \sum_{t=c_i+1}^{c_{i+1}} \ell_t(x) \right] \leq \sum_{i=0}^Z \sum_{t=c_i+1}^{c_{i+1}} F(x^*).
\end{align}

Because of the feedback delay $d$ and the delay $M/2+1$ induced by communication in the graph, the decisions $x_{c_i+1}(M/2+1), \dots, x_{c_{i+1}}(M/2+1)$ are independent of $\mathbf{w}_i$. Thus:
\begin{align}
\mathbb{E}_{\mathbf{w}_0, \dots, \mathbf{w}_Z}[\Reg_{T}(M/2+1)] &= \mathbb{E}\left[ \sum_{i=0}^Z \sum_{t=c_i+1}^{c_{i+1}} \ell_t(x_t(M/2+1)) - \min_{x \in \mathcal{X}} \sum_{i=0}^Z \sum_{t=c_i+1}^{c_{i+1}} \ell_t(x) \right] \nonumber\\
&
\geq\sum_{i=0}^Z \sum_{t=c_i+1}^{c_{i+1}} \left( \mathbb{E}[F(x_t(M/2+1))] - F(x^*) \right) \tag{using \eref{extended-eq2}}\\
&= \sum_{i=0}^Z \sum_{t=c_i+1}^{c_{i+1}} \mathbb{E}[F(x_t(M/2+1)) - F(x^*)]. \label{extended-eq3}\\
&= \sum_{i=0}^Z \sum_{t=c_i+1}^{c_{i+1}}\frac{\alpha N}{2}  \mathbb{E}\left[\left\| x_t(M/2) - \xi \mathbf{p}\right\|_2^2\right]. \label{extended-eq4}
\end{align}

To achieve a lower bound on \eqref{extended-eq4} , we assume without loss of generality that the D-OCO algorithm is deterministic.
\footnote{This reduction is also used in \citet{wan2024optimalefficientalgorithmsdecentralized} and dates back to \citet{hazan2014beyond}. Specifically, the analysis can be directly generalized to randomized algorithm as discussed in Footnote 3 of \citet{wan2024optimalefficientalgorithmsdecentralized}.}
Recall that given $i\in\{0,1,2,\dots, Z\}$, for each round $t \in [c_i+1, c_{i+1}]$, all local functions $\{ f_t(1,x), \dots, f_t(N,x)\}$  are jointly dependent on the same random vector $\mathbf{w}_i\in \{\mathbf{0},\mathbf{1}\}$ sampled from the Bernoulli distribution $\mathcal{B}_p^n$. Consequently, the decision $x_t(M/2+1)$ made by agent $M/2+1$ at time $t\in[c_i+1,c_{i+1}]$ can be expressed as a deterministic function of a sequence $X \in \{\mathbf{0},\mathbf{1}\}^i$ , where $X$ is sampled from $(\mathcal{B}_p^n)^i$, where $(\mathcal{B}_p^n)^i$ represents the joint probability law of $i$ independent draws from $\mathcal{B}_p^n$ (used to sample the $(\textbf{w}_j)_{j\le i}$). That is, $x_t(M/2+1) = \mathcal{A}_t(X)$ for some mapping $\mathcal{A}_t: \{0,1\}^i \to \mathcal{X}$.

We will use \lref{extended-lem2} below, which has been proven in \cite{wan2024nearly} and which shows that for any algorithm, there exists a $p$ that induces a non-trivial gap from the optimal solution $\xi \mathbf{p}$ at every time step. This gap evolves over time as a function of the epochs, which we define next.

Let $ K = \left\lfloor \log_{16}(15Z + 16) - 1 \right\rfloor $. Assuming that $T$ is sufficiently large such that $ 16(N + d) + 1 \leq T$, we know that $ K \geq 1 $. We partition the first  
$
Z' = \frac{1}{15}(16^{K+1} - 16)
$
blocks into $ K $ epochs, where the $k$-th epoch spans $r_k = 16^k$ blocks for $k = 1, 2, \dots, K$. Specifically, epoch $k$ corresponds to the block indices:
\[
E_k = \left\{ \frac{1}{15}(16^k - 16), \dots, \frac{1}{15}(16^{k+1} - 16) - 1 \right\}.
\]
This means that epoch $k$ covers the time steps between $c_{\frac{1}{15}(16^k - 16)} + 1$ and $c_{\frac{1}{15}(16^{k+1} - 16)}$. 

\begin{lemma}[Lemma~8 in \citet{wan2024optimalefficientalgorithmsdecentralized}]
\label{extended-lem2}
There exists a collection of nested intervals $\left[\frac{1}{4}, \frac{3}{4}\right] \supseteq I_1 \supseteq I_2 \supseteq \dots \supseteq I_K$ such that the length of the $k$-th interval equals to $|I_k| = 4^{-(k+3)}$ and for every $p \in I_k$,
\begin{align*}
\mathbb{E}_X \left[\left\|\mathcal{A}_t(X) - \xi \mathbf{p}\right\|_2^2\right] \geq \frac{16^{-(k+3)} n \xi^2}{8},
\end{align*}
holds for at least half the rounds $t$ in Epoch $k$, where $\xi = \frac{ (M+1) D\mathbf{p}}{N \sqrt{n}}$.
\end{lemma}

 The statement coincides with that of \cite{wan2024nearly}, up to  minor notation changes. Note that, although the statement provides a specific definition of $\xi$, which is the same as in \cite{wan2024nearly}, the lemma in fact holds for any $\xi > 0$. 
 We do not prove 
\lref{extended-lem2}, since it is proven in \cite{wan2024nearly}.
 To provide some intuition on the proof, observe that its first ingredient is a change-of-measure type of argument. Specifically, Lemma~7 in \citet{wan2024optimalefficientalgorithmsdecentralized}, used in the proof, shows that if we fix two distributions $p$ and $p'$, which in turn determine $\mathbf{p}$ and $\mathbf{p'}$, the expected instantaneous regret
of the local learner $M/2 +1 $ on at least one of the two distributions parameterized by appropriate $p$ and $p'$
must be large. The appropriate distance between $p$ and $p'$ is a function of the index $i$ of the block $c_i$ to which the time step $t$ belongs. More precisely, if \( |p - p'| \) is bounded below by \(2\varepsilon\) and above by \(4\varepsilon\), where the parameter \(\varepsilon\) satisfies 
$
\varepsilon \le \frac{1}{32\sqrt{i+1}},
$
then one can show that the minimal regret between the one induced by distribution \(p\) and the one induced by distribution \(p'\) is lower bounded by
$
\frac{\alpha N}{2} \cdot \frac{d(\xi \varepsilon)^2}{4}.
$
Then, \lref{extended-lem2} follows directly from Lemma~7 in \citet{wan2024optimalefficientalgorithmsdecentralized} combined with a simple dichotomy argument. We refer the interested reader to \cite{wan2024optimalefficientalgorithmsdecentralized} for the complete proof. 

From \lref{extended-lem2}, there exists $p \in \cap_{k=1}^K I_k$ such that:
\begin{align}
\mathbb{E}_{\mathbf{w}_1, \dots, \mathbf{w}_Z}[\Reg_{T}(M/2+1)] &\geq \mathbb{E}_{\mathbf{w}_1, \dots, \mathbf{w}_Z}\left[ \sum_{i=0}^{Z} \sum_{t=c_i+1}^{c_{i+1}} \frac{\alpha N}{2} \left\|x_t(M/2+1) - \frac{(M+1) D\mathbf{p}}{N\sqrt{n}} \right\|_2^2\right] \nonumber \\
&\geq \sum_{k=1}^{K} \sum_{i \in E_k} \sum_{t=c_i+1}^{c_{i+1}} \mathbb{E}_X \left[ \frac{\alpha N}{2} \left\| \mathcal{A}_t(X) - \frac{(M+1) D\mathbf{p}}{N\sqrt{n}} \right\|_2^2 \right] \nonumber \\
&\geq \sum_{k=1}^{K} \frac{\left(c_{\frac{1}{15}(16^{k+1}-16)} - c_{\frac{1}{15}(16^k-16)}\right) 16^{-(k+3)} \alpha (M+1)^2 D^2}{64N} \nonumber \\
&= \frac{16^{-4} \alpha K \tau (M+1)^2 D^2}{4N}\nonumber \\
&\ge \frac{16^{-4} \alpha K \tau M^2 D^2}{4N}.\label{extended-eq13}
\end{align}

From the definitions of $K, Z, \tau$, we get:
\begin{align}
\frac{K \tau M^2}{4N} &\geq \frac{(\log_{16}(15(T - 1)/(M/2+d)) - 2)(M/2+d)M^2}{4N} \nonumber\\
&\geq \frac{(\log_{16}(15(T - 1)/(M/2+d)) - 2)(M/2+d)M}{16}\nonumber
\\ &\geq \frac{(\log_{16}(15(T - 1)/(N+d)) - 2)(N+d)N}{16^2}.\label{extended-eq14}
\end{align}

Recall that for the $N$-cycle graph, we have
$
\frac{1}{1 - \sigma_2(W)} \le \frac{N^2}{2}
$
as established in \eref{eq:rho}. Moreover, the second-smallest eigenvalue of the Laplacian satisfies
$
\sigma_{N-1}(\text{Lap}(G)) = 2 - 2\cos\left(\frac{2\pi}{N}\right) \le \frac{4\pi^2}{N^2} \le \frac{40}{N^2},
$
and the largest eigenvalue is $\sigma_1(\text{Lap}(G)) = 4$. Since $W = I - \frac{1}{4} \text{Lap}(G)$, it follows that
$
\sigma_2(W) = 1 - \frac{1}{4} \sigma_{N-1}(\text{Lap}(G)),
\quad \text{so} \quad
\frac{1}{1 - \sigma_2(W)} \ge \frac{N^2}{10}.
$
Combining this estimate with \eref{extended-eq13} and \eref{extended-eq14}, we conclude that for some realization of $\mathbf{w}_0, \dots, \mathbf{w}_Z$,

\begin{align}
\label{for_dis_low}
\Reg_{T}(M/2+1) \geq 16^{-6} \alpha  N D^2 \left(\frac{\sqrt 2}{\sqrt{1 - \sigma_2(W)}}+d\right) \log_{16}\left(\frac{15(T - 1)}{\frac{\sqrt{10}}{\sqrt{1 - \sigma_2(W)}}+d} -2 \right).
\end{align}

\end{proof}


\end{document}